\RequirePackage{silence}
\documentclass[onefignum,onetabnum]{siamart220329}

\usepackage[round]{natbib}

\usepackage{amsmath,amsfonts,amssymb}
\usepackage{mathtools}
\usepackage{bm}
\usepackage{bbm}
\allowdisplaybreaks

\usepackage{bookmark}

\usepackage{graphicx}
\graphicspath{{./}}
\usepackage{multirow}
\usepackage{booktabs}
\usepackage{makecell}
\usepackage{tikz}
\usetikzlibrary{positioning}

\newsiamremark{remark}{Remark}
\newsiamthm{assumption}{Assumption}

\makeatletter
\AtBeginDocument{%
  \let\cref@override@label@type\@gobble
}
\makeatother

\crefname{theorem}{Theorem}{Theorems}
\Crefname{theorem}{Theorem}{Theorems}
\crefname{lemma}{Lemma}{Lemmas}
\Crefname{lemma}{Lemma}{Lemmas}
\crefname{corollary}{Corollary}{Corollaries}
\Crefname{corollary}{Corollary}{Corollaries}
\crefname{proposition}{Proposition}{Propositions}
\Crefname{proposition}{Proposition}{Propositions}
\crefname{definition}{Definition}{Definitions}
\Crefname{definition}{Definition}{Definitions}
\crefname{remark}{Remark}{Remarks}
\Crefname{remark}{Remark}{Remarks}
\crefname{assumption}{Assumption}{Assumptions}
\Crefname{assumption}{Assumption}{Assumptions}

\newcommand{\cz}{\mathcal{Z}}
\newcommand{\cs}{\mathcal{S}}
\newcommand{\cf}{\mathcal{F}}

\newcommand{\ch}{\mathcal{H}}

\newcommand{\be}{\mathbb{E}}
\newcommand{\hsic}{\mathrm{HSIC}}

\title{A Joint-Distribution Route to Fair Representations with Continuous Sensitive Attributes\thanks{\funding{This work was supported in part by NSF grant 2229876, the A.\ Russell Chandler III Professorship at Georgia Institute of Technology, an NIH-sponsored Georgia Clinical \& Translational Science Alliance, and the Georgia Department of Transportation.}}}

\headers{Joint-Distribution Route to Fair Representations}{Y.\ Ni and X.\ Huo}

\author{Yijin Ni\thanks{H.\ Milton Stewart School of Industrial and Systems Engineering, Georgia Institute of Technology, Atlanta, GA 30332-0205 USA (\email{yni64@gatech.edu}, \email{huo@gatech.edu}). ORCID iDs: \href{https://orcid.org/0000-0002-7209-6887}{0000-0002-7209-6887} (Ni) and \href{https://orcid.org/0000-0003-0101-1206}{0000-0003-0101-1206} (Huo).}\and
Xiaoming Huo\footnotemark[2]}

\begin{document}
\setlength{\emergencystretch}{3em}

\maketitle

\begin{abstract}
Fair representation learning with a continuous sensitive attribute $S$ requires a representation $Z$ that is statistically independent of $S$. Existing criteria, including generalized demographic parity, the expectation of integral probability metrics (EIPM), and mutual information, enforce this independence by averaging a per-value discrepancy between the conditional law $P_{Z\mid S=s}$ and the marginal $P_Z$ over the law of $S$. This approach requires a nonparametric surrogate for the conditional law at each sensitive value. We propose evaluating independence through a single joint discrepancy $d(P_{Z,S}, P_Z\otimes P_S)$ between the joint law and the product of its marginals. We establish a disintegration identity; on decomposable witness classes it equals the conditional-integral functional that EIPM and generalized demographic parity instantiate. By reaching the same target without the conditional law, this discrepancy can be estimated directly from samples via a dependence statistic rather than conditional smoothing. We take the Hilbert--Schmidt independence criterion (HSIC) as an instance of the joint discrepancy $d$ to investigate the statistical efficiency of replacing the conditional formulation. The HSIC estimator is a closed-form $O(n^2)$ statistic that converges at the $O(n^{-1/2})$ rate, in contrast to the nonparametric $O(n^{-2/5})$ rate of the conditional-route estimators. We prove this instance is equivalent to the conditional maximum mean discrepancy (MMD) integral up to an explicit spectral tail. The corresponding algorithmic implementation, i.e., FRHSIC, attains fairness--accuracy tradeoffs comparable to conditional-route baselines while reducing per-epoch training time.
\end{abstract}

\begin{keywords}
fair representation learning, continuous sensitive attributes, Hilbert--Schmidt independence criterion, kernel methods, demographic parity
\end{keywords}

\begin{MSCcodes}
68T05, 62G05, 62G20, 46E22
\end{MSCcodes}

%======================================================================
\section{Introduction}
\label{sec:introduction}
%======================================================================

Fair representation learning seeks a representation $Z$ that is statistically independent of a sensitive attribute $S\in\mathcal S$, i.e., $Z\perp S$, so that any downstream predictor built on $Z$ inherits demographic parity with respect to $S$~\citep{zemel2013learning,madras2018learning}. An encoder $h:\mathcal X\to\mathcal Z$ maps an input $X\in\mathcal X$ to the representation $Z=h(X)\in\mathcal Z$, and any prediction head acting on $Z$ is fair once $Z$ carries no information about $S$, which makes $Z\perp S$ the representation-level target. Writing $P_Z$ and $P_S$ for the marginal laws of $Z$ and $S$ and $P_{Z\mid S=s}$ for the conditional law of $Z$ given $S=s$, this target is the equality $P_{Z\mid S=s}=P_Z$ for $P_S$-almost every $s$. We study it when $S$ is continuous, as with age, income, or a risk score in domains such as hiring, lending, and criminal justice~\citep{calders2010three}, where independence must hold across a continuum of sensitive values rather than across finitely many groups.

For continuous $S$, existing criteria enforce this independence by approximating a localized conditional object indexed by $S=s$: the conditional law $P_{Z\mid S=s}$, or, for generalized demographic parity (GDP), the conditional mean $\mathbb E[\,\cdot\mid S=s]$ of the prediction head. They share one form, the $S$-average of a per-value discrepancy $d$ between the conditional law and the marginal,
\begin{equation}
    \label{eq:integralIPM}
    \mathcal I_d(Z;S):=\mathbb E_S\!\left[d(P_{Z\mid S},P_Z)\right],
\end{equation}
and differ only in $d$: GDP uses a first-moment difference~\citep{jiang2022generalized}, the expectation-of-IPM criterion (EIPM) of \citet{kong2025fair} uses an integral probability metric (IPM), realized by the kernel-weighted estimator FREM, and mutual-information objectives use the Kullback--Leibler divergence~\citep{cho2020fair}. Because a finite sample of size $n$ has no repeated observations at an exact value $S=s$, each criterion replaces exact conditioning by a localized or density-based surrogate, such as FREM's leave-one-out kernel-weighted conditional empirical measure or GDP's Nadaraya--Watson kernel smoothing on $S$~\citep{nadaraya1964estimating,watson1964smooth}. These smoothing-based estimators carry a localization bandwidth and converge at the one-dimensional nonparametric rate $O(n^{-2/5})$.

We propose to enforce independence through a single discrepancy between the joint law $P_{Z,S}$ of the pair $(Z,S)$ and the product of its marginals $P_Z\otimes P_S$, the product measure on $\mathcal Z\times\mathcal S$; Figure~\ref{fig:routes} contrasts this joint route with the conditional route it replaces. For a class $\mathcal F$ of bounded measurable functions on $\mathcal Z\times\mathcal S$, we measure this discrepancy by the joint-vs-product integral probability metric
\begin{equation*}
\mathrm{IPM}_{\mathcal F}\bigl(P_{Z,S},\,P_Z\otimes P_S\bigr)
:=\sup_{f\in\mathcal F}\bigl|\mathbb E_{P_{Z,S}}[f]-\mathbb E_{P_Z\otimes P_S}[f]\bigr|,
\end{equation*}
which vanishes if and only if $Z\perp S$ whenever $\mathcal F$ is rich enough to separate distinct laws (a characteristic class), and at this target it agrees with the conditional-integral criteria of~\eqref{eq:integralIPM}, namely GDP, EIPM, and mutual information (Proposition~\ref{prop:equivalence}). The averaging over sensitive values is then intrinsic to the joint law rather than an imposed weighting, and no per-value conditional object enters the definition.

The link to the conditional-integral route is not confined to this zero level: the joint discrepancy disintegrates exactly into the same averaged form. A disintegration identity (Theorem~\ref{thm:joint-equals-integral}) rewrites the joint-vs-product IPM, for any class $\mathcal F$, as the $P_S$-average of a conditional contrast with the supremum over $\mathcal F$ taken outside the average,
\begin{equation*}
\mathrm{IPM}_{\mathcal F}\bigl(P_{Z,S},\,P_Z\otimes P_S\bigr)
=\sup_{f\in\mathcal F}\left|\int_{\mathcal S}\!\int_{\mathcal Z} f(z,s)\,d\bigl(P_{Z\mid S=s}-P_Z\bigr)(z)\,dP_S(ds)\right|
\end{equation*}
and on the decomposable classes of Definition~\ref{def:decomposable-class}, generated by per-value witness functions, this average equals the conditional-integral functional $\mathcal I_d$ of~\eqref{eq:integralIPM} that GDP and EIPM instantiate (Corollary~\ref{cor:decomposable-class}). Mutual information is the classical precedent for this coincidence: as the Kullback--Leibler discrepancy it equals both the joint-vs-product divergence $\mathrm{KL}(P_{Z,S}\,\|\,P_Z\otimes P_S)$ and the $P_S$-average of the conditional divergences $\mathrm{KL}(P_{Z\mid S=s}\,\|\,P_Z)$ by the chain rule of relative entropy~\citep[Ch.~2]{coverthomas2006}, so it already measures independence from the joint law without constructing the conditional family $\{P_{Z\mid S=s}\}_s$.

Because the target is a single functional of the paired law, it is estimated directly from the joint sample $\{(Z_i,S_i)\}_{i=1}^n$ using only those paired observations and the witness class $\mathcal F$. How fast this direct estimate converges then depends on $\mathcal F$ rather than on a smoothing bandwidth.

We take the Hilbert--Schmidt independence criterion (HSIC) as the instance obtained by choosing $\mathcal F$ to be the unit ball of a product reproducing-kernel Hilbert space (RKHS). The empirical HSIC is a closed-form statistic, computable in $O(n^2)$ time from the Gram matrices of $\{Z_i\}_{i=1}^n$ and $\{S_i\}_{i=1}^n$, whose kernel bandwidths are fixed scale parameters rather than per-value localization for the conditional law. Being a non-degenerate $V$-statistic, it converges to its population value at the $O(n^{-1/2})$ rate, in contrast to the nonparametric $O(n^{-2/5})$ of the smoothing-based EIPM estimator (FREM); our synthetic study fits log--log slopes of $-0.46$ for HSIC and $-0.44$ for EIPM, close to $-1/2$ and $-2/5$ (\S\ref{sec:exp-synthetic}, Figure~\ref{fig:convergence}). This rate is not only pointwise: up to logarithmic factors it holds uniformly over encoder classes of controlled complexity, such as bounded linear encoders and fixed-architecture bounded multilayer perceptrons (Theorem~\ref{thm:uniform-hsic-frhsic}, building on \citet{ni2024uniform}), so the data-dependent encoder that minimizes the penalty still has small population dependence. We further prove that HSIC is equivalent to the $P_S$-averaged conditional maximum mean discrepancy (MMD) integral up to an explicit spectral tail of the sensitive-attribute kernel (Theorem~\ref{cor:hsic-mmd-equivalence}). As a minibatch regularizer, the resulting objective, which we call FRHSIC, trains about $36\times$ faster per epoch than FREM at $n=20{,}000$ (\S\ref{sec:runtime}); penalizing dependence with HSIC for fairness is itself established~\citep{perezsuay2017fair,li2022kernel}, and our contribution is the theory that follows.

\subsection{Summary of Contributions}\label{sec:contributions}

\paragraph{A joint-discrepancy formulation of the continuous-$S$ criteria}
We show that the existing continuous-$S$ fairness criteria arise, on decomposable witness classes, as the conditional reading of a single joint discrepancy. A disintegration identity (Theorem~\ref{thm:joint-equals-integral}) rewrites the joint-vs-product IPM as a $P_S$-averaged conditional contrast, and on decomposable witness classes (Corollary~\ref{cor:decomposable-class}) it coincides with the conditional-integral functional $\mathcal I_d$ of~\eqref{eq:integralIPM} that generalized demographic parity and EIPM instantiate, while mutual information is the Kullback--Leibler instance of the same $S$-averaged form.

\paragraph{A closed-form HSIC estimator with spectral and uniform control}
We give a closed-form $O(n^2)$ estimator of the joint discrepancy, the HSIC obtained from a product RKHS, that is controlled by the $P_S$-averaged conditional object without ever estimating it. We prove it is equivalent to the conditional MMD integral up to an explicit spectral tail of the sensitive-attribute kernel (Theorem~\ref{cor:hsic-mmd-equivalence}), bounds the GDP of an RKHS head (Corollary~\ref{cor:spectral_gdp_control}), and concentrates uniformly at the rate $n^{-1/2}$, up to logarithmic factors, under a controlled-complexity condition on the encoder class (Theorem~\ref{thm:uniform-hsic-frhsic}, building on \citet{ni2024uniform}).

\paragraph{A training algorithm for fair representations, FRHSIC}
We instantiate the estimator as a minibatch regularizer that attains fairness--accuracy tradeoffs comparable to conditional-route baselines while training about $36\times$ faster per epoch than FREM at $n=20{,}000$ (\S\ref{sec:runtime}). On synthetic data and five real datasets, FRHSIC attains comparable fairness--accuracy tradeoffs, shows estimator convergence consistent with the predicted $O(n^{-1/2})$ rate (\S\ref{sec:exp-synthetic}, Figure~\ref{fig:convergence}), and keeps fairness stable across fresh downstream heads.

\begin{figure}[!htbp]
    \centering
    \includegraphics[width=0.95\textwidth]{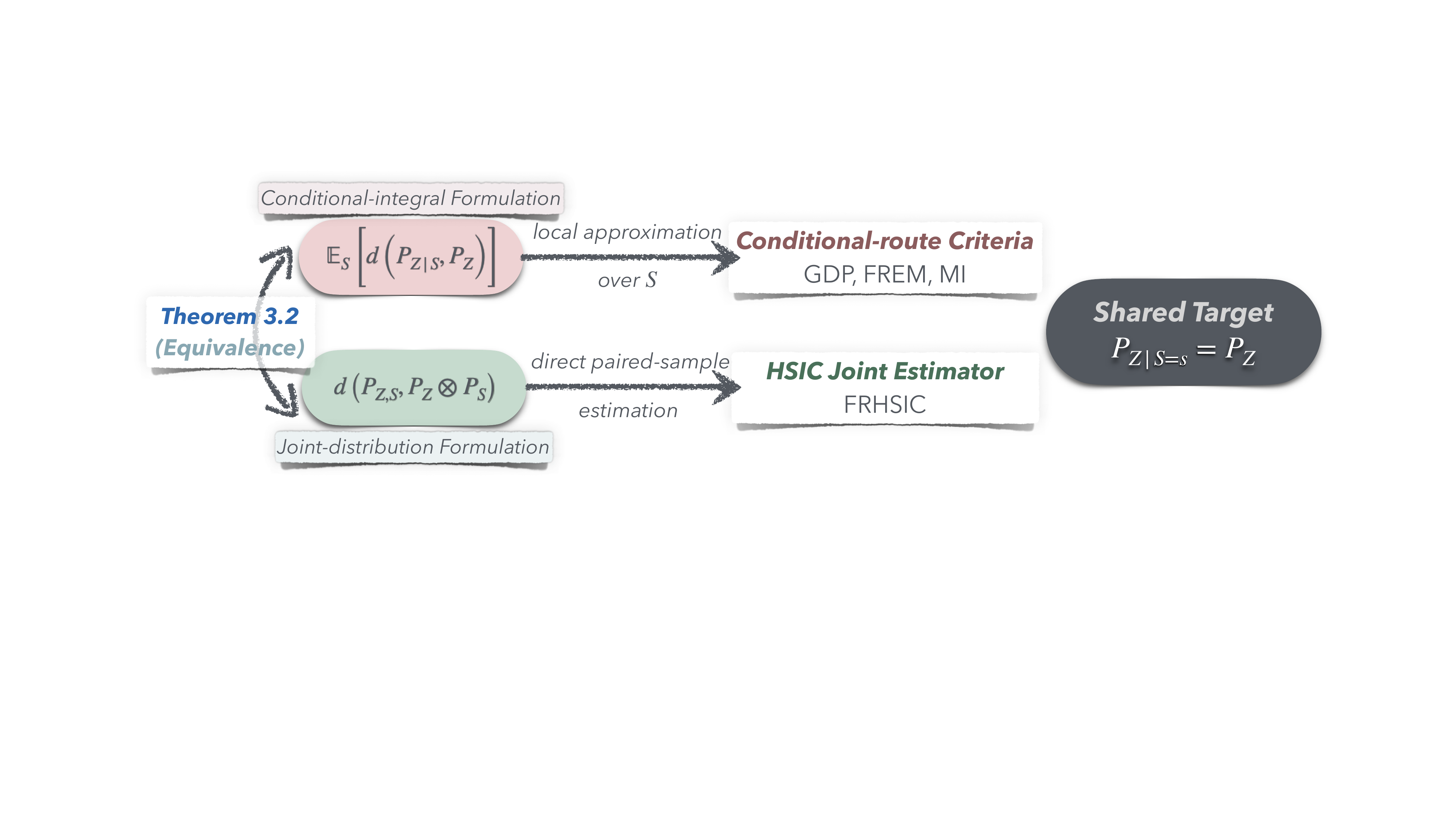}
    \caption{Conditional and joint routes for continuous-sensitive fair representation learning. The conditional route approximates local objects indexed by $S=s$; the joint route estimates dependence directly from paired samples.}
    \label{fig:routes}
\end{figure}

%======================================================================
\section{Preliminaries and Existing Criteria}
\label{sec:preliminary}
%======================================================================

This section fixes notation, recalls the integral probability metrics used throughout, and shows that the existing continuous-$S$ fairness criteria share a common conditional-integral form, the baseline against which our joint route is compared.

\subsection{Setup and representation-level fairness}

Representation-level fairness asks for a representation that is statistically independent of the sensitive attribute while remaining predictive of the target.
Let $X \in \mathcal{X}$ denote the input random vector, $S \in \mathcal{S} \subseteq \mathbb{R}$ the continuous sensitive attribute, and $Y \in \mathcal{Y}$ the target variable.
A representation function $h:\mathcal X\to\mathcal Z$ yields $Z=h(X)$, and a prediction head $f:\mathcal Z\to\mathcal Y$ yields $\widehat Y=f(Z)=f\circ h(X)$.
Write $P_Z$ for the marginal law of $Z$, $P_{Z\mid S=s}$ for the conditional law of $Z$ given $S=s$, and $P_{Z,S}$ for the joint law.
For a reproducing kernel Hilbert space (RKHS) $\mathcal{F}$ with kernel $k$, the mean embedding of a distribution $P$ is $\mu_P=\mathbb E_{X\sim P}[k(X,\cdot)]\in\mathcal F$.
For binary $S\in\{0,1\}$, a predictor $\widehat Y$ satisfies demographic parity (DP) if $\widehat Y\perp S$, which for binary $\widehat Y$ reduces to $\mathbb E[\widehat Y\mid S=0]=\mathbb E[\widehat Y\mid S=1]$.
The goal of fair representation learning is a representation $Z=h(X)$ that is independent of $S$ while remaining informative about $Y$. For continuous $S$ the exact representation-level target is
\begin{equation}
\label{eq:frl_continuous}
    Z\perp S,
    \qquad\text{equivalently}\qquad
    P_{Z\mid S=s}=P_Z
    \quad\text{for }P_S\text{-almost every }s
\end{equation}
for any regular conditional distribution $\{P_{Z\mid S=s}\}_{s\in\cs}$.
This independence condition is the target that every criterion in this paper enforces; empirical criteria approximate it through a discrepancy between $P_{Z\mid S=s}$ and $P_Z$, introduced next.

\subsection{Discrepancies between probability laws}

Distances between distributions in this paper are integral probability metrics, and the maximum mean discrepancy is the kernel instance that, for a characteristic kernel, vanishes exactly when two laws coincide.
For a class $\mathcal F$ of bounded measurable functions, the integral probability metric (IPM) between distributions $P$ and $Q$ is $\mathrm{IPM}_{\mathcal F}(P,Q):=\sup_{f\in\mathcal F}|\mathbb E_P[f]-\mathbb E_Q[f]|$.
Taking $\mathcal F$ to be the unit ball of an RKHS gives the maximum mean discrepancy,
\[
    \mathrm{MMD}(P,Q)=\sup_{\|f\|_{\mathcal F}\le 1}|\mathbb E_P[f]-\mathbb E_Q[f]|=\|\mu_P-\mu_Q\|_{\mathcal F}.
\]
When $k$ is characteristic, $\mathrm{MMD}(P,Q)=0$ if and only if $P=Q$ \citep{gretton2012kernel}.
Section~\ref{sec:joint-view} applies the IPM with $P=P_{Z,S}$ and $Q=P_Z\otimes P_S$.

\subsection{Conditional-integral criteria for continuous sensitive attributes}

Existing continuous-$S$ fairness criteria share one form, the $S$-average of a per-value discrepancy between the conditional and marginal laws of $Z$, and differ only in that discrepancy.
For a discrepancy $d$ between distributions, the integral functional is
\[
    \mathcal I_d(Z;S):=\mathbb E_S\!\left[d(P_{Z\mid S},P_Z)\right],
\]
which is well-defined for Polish $\mathcal Z,\mathcal S$ once a regular conditional probability is selected; existence follows from the disintegration theorem \citep[Theorem~5.4]{kallenberg2002foundations}.
Three instances recur.
Generalized demographic parity (GDP) \citep{jiang2022generalized} takes $d$ to be a first-moment difference evaluated through a measurable head $f:\mathcal Z\to\mathbb R$,
\begin{equation}
\label{eq:GDP}
    \Delta_{\mathrm{GDP}}(f)=\mathbb E_S\left|\mathbb E_Z[f(Z)\mid S]-\mathbb E_Z[f(Z)]\right|,\qquad Z=h(X),
\end{equation}
with the conditional expectation estimated by Nadaraya--Watson kernel smoothing on $S$ \citep{nadaraya1964estimating,watson1964smooth}; GDP reduces to DP when $S$ is binary.
The expectation of IPM (EIPM) \citep{kong2025fair} takes $d$ to be an IPM,
\[
    \mathrm{EIPM}_{\mathcal V}(Z;S):=\mathbb E_S[\mathrm{IPM}_{\mathcal V}(P_{Z\mid S},P_Z)],\qquad \mathrm{IPM}_{\mathcal V}(P,Q)=\sup_{f\in\mathcal V}|\mathbb E_P[f]-\mathbb E_Q[f]|,
\]
and controlling EIPM controls GDP \citep{kong2025fair}.
Mutual information is the Kullback--Leibler instance,
\[
    I(Z;S)=\mathbb E_S\!\left[\mathrm{KL}(P_{Z\mid S}\,\|\,P_Z)\right],
\]
and characterizes independence, $I(Z;S)=0\iff Z\perp S$ \citep[Theorem~2.6.3]{coverthomas2006}.
Section~\ref{sec:joint-view} relates this conditional-integral route to a single joint-vs-product discrepancy through disintegration, recovering the conditional-integral IPM on decomposable witness classes.

\subsection{Joint dependence measures and HSIC}

An alternative measures dependence directly between the joint law and the product of marginals, and the Hilbert--Schmidt independence criterion is the kernel instance that, under characteristic product kernels, vanishes exactly under independence.
This is the object the method of this paper is built on.

\begin{definition}[Mean embeddings and HSIC]\label{def:hsic}
Let $\cf_\cz$ and $\cf_\cs$ be RKHSs on $\cz$ and $\cs$ with bounded characteristic kernels $k_\cz$ and $k_\cs$, and let $\cf_{\cz\otimes\cs}$ be the tensor-product RKHS on $\cz\times\cs$ with product kernel $k_\cz\otimes k_\cs$.
The marginal and joint mean embeddings are
\[
    \mu_Z:=\mathbb E_Z[k_\cz(Z,\cdot)]\in\cf_\cz,\qquad
    \mu_S:=\mathbb E_S[k_\cs(S,\cdot)]\in\cf_\cs,
\]
\[
    \mu_{Z,S}:=\mathbb E_{(Z,S)}\bigl[k_\cz(Z,\cdot)\otimes k_\cs(S,\cdot)\bigr]\in\cf_{\cz\otimes\cs},
\]
and $\mu_Z\otimes\mu_S\in\cf_{\cz\otimes\cs}$ is the embedding of the product law $P_Z\otimes P_S$.
The conditional mean embedding of $Z$ given $S=s$ is $\mu_{Z\mid S=s}:=\mathbb E[k_\cz(Z,\cdot)\mid S=s]\in\cf_\cz$; we write $\mu_{Z\mid S}$ for the $\cf_\cz$-valued random element $s\mapsto\mu_{Z\mid S=s}$ evaluated at $S$.
The Hilbert--Schmidt independence criterion (HSIC) is the squared maximum mean discrepancy between the joint law and the product of marginals,
\[
    \hsic(Z,S):=\bigl\|\mu_{Z,S}-\mu_Z\otimes\mu_S\bigr\|^2_{\cf_{\cz\otimes\cs}}.
\]
\end{definition}

Under characteristic product kernels, $\hsic(Z,S)=0$ if and only if $Z\perp S$ \citep{gretton2005measuring}.
Section~\ref{sec:joint-view} relates this joint discrepancy to conditional-integral criteria through a disintegration identity. The equality with the conditional-integral IPM requires the decomposability condition stated in Corollary~\ref{cor:decomposable-class}.

%======================================================================
\section{The Joint Discrepancy: An Intrinsic Metric for the Conditional-Integral Criteria}\label{sec:joint-view}
%======================================================================

\subsection{Zero-level relation to existing criteria}
\label{sec:zero-set-relation}

This section develops the central object of this paper: a single joint discrepancy between $P_{Z,S}$ and $P_Z\otimes P_S$ that serves as an intrinsic metric for the conditional-integral criteria. We begin at the zero level, where a characteristic witness class makes the joint discrepancy vanish exactly at the representation-level fairness target shared by the existing criteria; the nonzero regime is developed in \S\ref{sec:framework}--\S\ref{sec:hsic-gap-control}.

\begin{proposition}[Zero-level characterization of representation fairness]
\label{thm:subgroup_equivalence}\label{prop:equivalence}\label{prop:hsic_gdp_population}
Let $\mathcal Z$ and $\mathcal S$ be Polish spaces, let $\mathcal F$ be a characteristic class of bounded measurable functions on $\mathcal Z\times\mathcal S$, that is, one for which $\mathrm{IPM}_{\mathcal F}$ separates probability laws on $\mathcal Z\times\mathcal S$, and let $(Z,S)$ be a random pair on $\mathcal Z\times\mathcal S$. Then the following are equivalent:
\begin{enumerate}
    \item $\mathrm{IPM}_{\mathcal F}(P_{Z,S},\,P_Z\otimes P_S)=0$;
    \item $Z\perp S$, equivalently $P_{Z,S}=P_Z\otimes P_S$;
    \item for some, equivalently every, regular conditional distribution $\{P_{Z\mid S=s}\}_{s\in\mathcal S}$,
    \[
        P_{Z\mid S=s}=P_Z
        \quad\text{for }P_S\text{-a.e. }s;
    \]
    \item for every bounded measurable $g:\mathcal Z\to\mathbb R$,
    \[
        \mathbb E[g(Z)\mid S=s]=\mathbb E[g(Z)]
        \quad\text{for }P_S\text{-a.e. }s.
    \]
\end{enumerate}
\end{proposition}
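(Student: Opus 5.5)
The plan is to prove the cycle (1)$\Leftrightarrow$(2), (2)$\Rightarrow$(3), (3)$\Rightarrow$(4), (4)$\Rightarrow$(2), and to settle the ``some/every'' clause in (3) separately.

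\textbf{(1)$\Leftrightarrow$(2).} This is the definition of a characteristic class. Because $\mathcal F$ consists of bounded measurable functions, both expectations are finite, so $\mathrm{IPM}_{\mathcal F}(P,Q)$ is well defined. It vanishes when $P=Q$ trivially. Conversely, it vanishes only if $P=Q$, by the separation hypothesis. Applying this with $P=P_{Z,S}$ and $Q=P_Z\otimes P_S$ gives the equivalence.

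\textbf{(2)$\Rightarrow$(3).} Since $\mathcal Z$ and $\mathcal S$ are Polish, a regular conditional distribution exists by the disintegration theorem \citep[Theorem~5.4]{kallenberg2002foundations}. Fix any such version $\{P_{Z\mid S=s}\}$. For Borel sets $A\subseteq\mathcal Z$ and $B\subseteq\mathcal S$, (2) gives
\[
\int_B P_{Z\mid S=s}(A)\,dP_S(s)=P(Z\in A,S\in B)=P_Z(A)P_S(B).
\]
Hence $P_{Z\mid S=s}(A)=P_Z(A)$ for $P_S$-a.e.\ $s$, with the null set depending on $A$. To remove that dependence, take a countable $\pi$-system $\mathcal A$ generating the Borel $\sigma$-field of $\mathcal Z$; one exists because $\mathcal Z$ is Polish, hence second countable. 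Taking the union of the countably many exceptional null sets, we get $P_{Z\mid S=s}=P_Z$ on $\mathcal A$ for $P_S$-a.e.\ $s$. The $\pi$--$\lambda$ theorem then extends the equality to all Borel sets. This countable-generation step is the only genuinely delicate point of the proof, and I expect it to be the main obstacle.

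\textbf{The ``some/every'' clause.} Regular conditional distributions are unique up to a $P_S$-null set of $s$, which follows from the same countable-generator argument. So the property holds for one version if and only if it holds for every version. The argument above in fact proves it for every version.

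\textbf{(3)$\Rightarrow$(4).} By the defining property of a regular conditional distribution, $\mathbb E[g(Z)\mid S=s]=\int g\,dP_{Z\mid S=s}$ for $P_S$-a.e.\ $s$. Under (3) the right side equals $\int g\,dP_Z=\mathbb E[g(Z)]$ for $P_S$-a.e.\ $s$.

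\textbf{(4)$\Rightarrow$(2).} Take $g=\mathbf 1_A$. Then for every Borel $B\subseteq\mathcal S$,
\[
P(Z\in A,S\in B)=\mathbb E\bigl[\mathbf 1_B(S)\,\mathbb E[\mathbf 1_A(Z)\mid S]\bigr]=P_Z(A)P_S(B).
\]
So $P_{Z,S}$ and $P_Z\otimes P_S$ agree on measurable rectangles. Rectangles form a $\pi$-system generating the product Borel $\sigma$-field, which on a product of Polish spaces coincides with the Borel $\sigma$-field of $\mathcal Z\times\mathcal S$. Dynkin's theorem therefore gives $P_{Z,S}=P_Z\otimes P_S$.
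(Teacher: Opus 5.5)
Your proposal is correct and follows essentially the same route as the paper: (1)$\Leftrightarrow$(2) from the separation property, (2)$\Rightarrow$(3) by disintegration plus a countable generating family and the $\pi$--$\lambda$ theorem, (3)$\Rightarrow$(4) by integrating against the conditional law, and the some/every clause from a.e.\ uniqueness of regular conditional distributions. The only difference is in the last leg. You verify $P_{Z,S}=P_Z\otimes P_S$ directly on rectangles via the tower property and Dynkin's theorem. The paper instead first recovers (3) by rerunning the countable-generator argument and then asserts independence without writing out the integration over $B$, so your version is slightly shorter and makes that implicit step explicit.
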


\begin{proof}
{\sloppy
The equivalences are proved in Appendix~\ref{app:pf_thm_subgroup_equivalence}: condition (1) is equivalent to (2) because a characteristic class separates $P_{Z,S}$ from $P_Z\otimes P_S$, and (2)--(4) are equivalent by disintegration. The MMD-based EIPM instance, $\mathrm{EIPM}_{\mathcal V}(Z;S)=0$ for the unit ball $\mathcal V\subset\cf_\cz$, and the downstream-head (GDP) form are recorded in Appendices~\ref{app:pf_prop_equivalence}--\ref{app:pf_cor_hsic_gdp_population}.\par}
\end{proof}

\par\smallskip\noindent
\textit{Remark (zero-level equivalence only).}\enspace Proposition~\ref{thm:subgroup_equivalence} is a zero-level statement. Away from zero the joint discrepancy and the conditional-integral criteria are different functionals: the conditional-integral criteria average a per-value discrepancy between $P_{Z\mid S=s}$ and $P_Z$ over the law of $S$, whereas the joint discrepancy applies a single supremum to the paired law.
\par\smallskip

\subsection{Conditional-integral and joint formulations}\label{sec:joint-integral-equivalence}\label{sec:framework}

Conditional-integral criteria compare $P_{Z\mid S=s}$ with $P_Z$ and average the comparison over $S$, whereas the joint route compares the joint law $P_{Z,S}$ with the product of marginals $P_Z\otimes P_S$.
When $S$ takes finitely many values the two formulations coincide as aggregated two-sample MMDs, recorded as Lemma~\ref{lem:discrete-coincidence} in Appendix~\ref{app:discrete-coincidence}.
The case of practical interest is continuous $S$, where the conditional family $\{P_{Z\mid S=s}\}_s$ is uncountable and the two formulations differ in how they are estimated.
The next theorem disintegrates the joint-vs-product IPM into a $P_S$-weighted conditional contrast.

\begin{theorem}[Disintegration of the joint-vs-product IPM]\label{thm:joint-equals-integral}
Let $\cz$ and $\cs$ be standard Borel spaces, and let $\{P_{Z\mid S=s}\}_{s\in\cs}$ be a regular conditional distribution of $Z$ given $S$. For any class $\mathcal F$ of bounded measurable functions on $\cz\times\cs$,
\[
\mathrm{IPM}_{\mathcal F}\bigl(P_{Z,S},\,P_Z\otimes P_S\bigr)
=\sup_{f\in\mathcal F}\left|\int_{\cs}\!\int_{\cz} f(z,s)\,d\bigl(P_{Z\mid S=s}-P_Z\bigr)(z)\,dP_S(ds)\right|.
\]
\end{theorem}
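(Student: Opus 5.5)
The identity is proved pointwise in $f$: for each fixed bounded measurable $f$ we show that the difference inside the supremum on the left equals the integral inside the supremum on the right. Taking absolute values and the supremum over $\mathcal F$ then gives the theorem, since both sides are suprema of the same family of numbers. No structure of $\mathcal F$ beyond boundedness and measurability is needed.

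\textbf{Joint term.} Fix $f\in\mathcal F$ with $\|f\|_\infty\le M$. Because $\cz,\cs$ are standard Borel, the disintegration theorem \citep[Theorem~5.4]{kallenberg2002foundations} applies to the regular conditional distribution $\{P_{Z\mid S=s}\}$. It gives, for every bounded measurable $f$ on $\cz\times\cs$,
\[
\mathbb E_{P_{Z,S}}[f]=\int_{\cs}\int_{\cz} f(z,s)\,P_{Z\mid S=s}(dz)\,P_S(ds),
\]
and asserts that $s\mapsto\int f(z,s)\,P_{Z\mid S=s}(dz)$ is measurable. If one prefers a self-contained argument, one checks the formula on indicators of rectangles $A\times B$, where it reduces to the definition of a regular conditional distribution. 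A $\pi$--$\lambda$ (monotone class) argument extends it to all measurable sets, and simple-function approximation with dominated convergence extends it to bounded measurable $f$.

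\textbf{Product term.} By Fubini--Tonelli for the bounded function $f$ on the product measure,
\[
\mathbb E_{P_Z\otimes P_S}[f]=\int_{\cs}\int_{\cz} f(z,s)\,P_Z(dz)\,P_S(ds).
\]
The inner map $s\mapsto\int f(z,s)\,P_Z(dz)$ is measurable and bounded by $M$.

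\textbf{Combining.} Both inner integrals are bounded by $M$, so the two outer integrals are finite. We may therefore subtract them under one $P_S$-integral and use linearity of the inner integral in the finite signed measure $P_{Z\mid S=s}-P_Z$ to obtain
\[
\mathbb E_{P_{Z,S}}[f]-\mathbb E_{P_Z\otimes P_S}[f]=\int_{\cs}\int_{\cz} f(z,s)\,d\bigl(P_{Z\mid S=s}-P_Z\bigr)(z)\,dP_S(ds).
\]
Taking absolute values and the supremum over $f\in\mathcal F$ yields the theorem.

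\textbf{Main obstacle.} The only delicate point is the measurability of $s\mapsto\int f(z,s)\,P_{Z\mid S=s}(dz)$ when $f$ depends jointly on $(z,s)$, so the conditional law and the integrand vary together. The monotone class extension above handles this, and it also shows the formula holds for every version of the regular conditional distribution, so the right-hand side does not depend on the chosen version. Note that the supremum is kept outside the $P_S$-average throughout; no interchange of the supremum with the integral is claimed.
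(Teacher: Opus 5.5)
Your proposal is correct and takes essentially the same approach as the paper. For each fixed $f$, you disintegrate $P_{Z,S}$, apply Fubini--Tonelli to $P_Z\otimes P_S$, subtract the two iterated integrals under a single $P_S$-integral, and then take absolute values and the supremum over $\mathcal F$. Your monotone-class check of measurability and of independence from the chosen version is extra detail that the paper leaves to the cited disintegration theorem.
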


\begin{proof}
See Appendix~\ref{app:pf_thm_joint_disintegration}.
\end{proof}

The theorem shows that the $P_S$ weighting arises from disintegrating the joint law, rather than from an external weighting choice.
For any class $\mathcal F$, it places a single supremum outside the $P_S$-average of the conditional contrast; this average equals the conditional-integral functional $\mathcal I_d$ exactly when $\mathcal F$ decomposes across $s$, which Definition~\ref{def:decomposable-class} and Corollary~\ref{cor:decomposable-class} make precise.

\begin{definition}[Decomposable witness class]
\label{def:decomposable-class}
Let $\mathcal V$ be a symmetric class of bounded measurable functions on $\mathcal Z$, meaning $g\in\mathcal V$ implies $-g\in\mathcal V$. A class $\mathcal F$ of bounded measurable functions on $\mathcal Z\times\mathcal S$ is decomposable over $\mathcal V$ if:
\begin{enumerate}
    \item for every measurable selector $s\mapsto g_s\in\mathcal V$, the function $(z,s)\mapsto g_s(z)$ belongs to $\mathcal F$;
    \item for every $f\in\mathcal F$, the slice $f(\cdot,s)$ belongs to $\mathcal V$ for $P_S$-almost every $s$.
\end{enumerate}
\end{definition}

\begin{corollary}[Recovery of the conditional-integral IPM under decomposability]
\label{cor:decomposable-class}
Adopt the assumptions of Theorem~\ref{thm:joint-equals-integral}: $\cz$ and $\cs$ are standard Borel spaces and $\{P_{Z\mid S=s}\}_{s\in\cs}$ is a regular conditional distribution of $Z$ given $S$. Let $\mathcal V$ be a symmetric uniformly bounded class of measurable functions on $\mathcal Z$, and define
\[
d_{\mathcal V}(P,Q):=\mathrm{IPM}_{\mathcal V}(P,Q).
\]
Let $\mathcal F$ be decomposable over $\mathcal V$ in the sense of Definition~\ref{def:decomposable-class}. Assume that for every $\varepsilon>0$ there exists a measurable selector $s\mapsto g_s^\varepsilon\in\mathcal V$ such that
\[
\int g_s^\varepsilon(z)\,d(P_{Z\mid S=s}-P_Z)(z)
\ge
d_{\mathcal V}(P_{Z\mid S=s},P_Z)-\varepsilon
\quad
\text{for }P_S\text{-a.e. }s.
\]
Then
\[
\mathrm{IPM}_{\mathcal F}(P_{Z,S},P_Z\otimes P_S)
=
\mathbb E_S\!\left[d_{\mathcal V}(P_{Z\mid S=s},P_Z)\right]
=
\mathcal I_{d_{\mathcal V}}(Z;S).
\]
\end{corollary}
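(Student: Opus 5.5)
The plan is to start from Theorem~\ref{thm:joint-equals-integral}, which already writes the joint-vs-product IPM as
\[
\sup_{f\in\mathcal F}\Bigl|\int_{\cs}\Phi_f(s)\,dP_S(s)\Bigr|,\qquad
\Phi_f(s):=\int_{\cz} f(z,s)\,d\bigl(P_{Z\mid S=s}-P_Z\bigr)(z).
\]
From there I prove the two inequalities ``$\le$'' and ``$\ge$'' against $\mathbb E_S[d_{\mathcal V}(P_{Z\mid S},P_Z)]$ separately. Throughout, measurability of $s\mapsto\Phi_f(s)$ comes from the regular conditional distribution, since $f$ is bounded and jointly measurable. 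Integrability is automatic because $\mathcal V$ is uniformly bounded, say by $B$, so $|\Phi_f|\le 2B$ almost surely. I would also check that $s\mapsto d_{\mathcal V}(P_{Z\mid S=s},P_Z)$ is measurable, or else interpret the expectation as an outer integral. The $\varepsilon$-selectors in fact sandwich it between measurable functions up to $\varepsilon$, so this causes no difficulty.

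\textbf{Upper bound.} Fix $f\in\mathcal F$. By condition~2 of Definition~\ref{def:decomposable-class}, the slice $f(\cdot,s)\in\mathcal V$ for $P_S$-a.e.\ $s$. Hence $|\Phi_f(s)|\le d_{\mathcal V}(P_{Z\mid S=s},P_Z)$ almost surely, directly from the definition of the IPM. Then
\[
\Bigl|\int\Phi_f\,dP_S\Bigr|\le\int|\Phi_f|\,dP_S\le \mathbb E_S\bigl[d_{\mathcal V}(P_{Z\mid S},P_Z)\bigr].
\]
Taking the supremum over $f$ gives $\mathrm{IPM}_{\mathcal F}\le\mathcal I_{d_{\mathcal V}}$.

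\textbf{Lower bound.} Given $\varepsilon>0$, take the assumed measurable selector $g^\varepsilon_s$ and set $f_\varepsilon(z,s):=g^\varepsilon_s(z)$. By condition~1 of the definition, $f_\varepsilon\in\mathcal F$; here I need $f_\varepsilon$ to be jointly measurable, which is what ``measurable selector'' is taken to mean. Then $\Phi_{f_\varepsilon}(s)\ge d_{\mathcal V}(P_{Z\mid S=s},P_Z)-\varepsilon$ almost surely. Integrating gives
\[
\mathrm{IPM}_{\mathcal F}\ge\int\Phi_{f_\varepsilon}\,dP_S\ge\mathcal I_{d_{\mathcal V}}-\varepsilon.
\]
Letting $\varepsilon\downarrow0$ yields the reverse inequality, and the last equality in the statement is just the definition of $\mathcal I_d$. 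Symmetry of $\mathcal V$ is used only implicitly: it guarantees $d_{\mathcal V}\ge0$ and that the supremum without absolute value coincides with the IPM, so the one-sided selector inequality is consistent with the definition.

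The main obstacle is purely measure-theoretic. I must make sure that composing the selector into a function on $\cz\times\cs$ gives a jointly measurable member of $\mathcal F$. I must also make sure that the pointwise selector inequality can be integrated, which requires measurability of $s\mapsto d_{\mathcal V}(P_{Z\mid S=s},P_Z)$ or an outer-integral interpretation. I would handle both by reading the hypotheses as providing jointly measurable selectors. Where needed, I would replace $d_{\mathcal V}(P_{Z\mid S=s},P_Z)$ by its measurable envelope $\sup_k \Phi_{f_{1/k}}(s)$, which agrees with it $P_S$-a.e.\ by the two bounds above.
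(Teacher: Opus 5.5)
Your proposal is correct and follows the paper's proof essentially step for step: invoke Theorem~\ref{thm:joint-equals-integral}, get the upper bound from slice condition~(2) plus the triangle inequality for the $P_S$-integral, and get the lower bound by assembling the $\varepsilon$-selector into a member of $\mathcal F$ via condition~(1) and letting $\varepsilon\downarrow 0$. Your extra measurability bookkeeping, which replaces $s\mapsto d_{\mathcal V}(P_{Z\mid S=s},P_Z)$ by the a.e.-equal envelope $\sup_k\Phi_{f_{1/k}}(s)$, handles correctly a point the paper passes over silently.
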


\begin{proof}
See Appendix~\ref{app:pf_cor_decomposable}.
\end{proof}

Corollary~\ref{cor:decomposable-class} therefore recovers the conditional-integral IPM, which GDP and EIPM instantiate, exactly when the witness class decomposes across $s$; mutual information is the Kullback--Leibler instance of the same $S$-averaged form.

%======================================================================
\section{HSIC: A Closed-Form Equivalent with Demographic-Parity Control}\label{sec:method}
%======================================================================

HSIC is equivalent, up to an explicit spectral tail, to the conditional MMD integral it replaces, and that equivalence carries over to the fairness metric itself. \S\ref{sec:hsic-instance} proves the equivalence with the conditional MMD integral, the conditional-integral criterion HSIC replaces. \S\ref{sec:hsic-gap-control} turns it into control of the demographic-parity gap at the population level and on a finite sample.

\subsection{Equivalence with the conditional MMD integral}\label{sec:hsic-instance}

HSIC is the kernel joint discrepancy obtained by comparing $P_{Z,S}$ with $P_Z\otimes P_S$.
Instantiating the joint IPM of Theorem~\ref{thm:joint-equals-integral} with the unit ball of the tensor-product RKHS $\cf_{\cz\otimes\cs}$ of Definition~\ref{def:hsic} gives the maximum mean discrepancy $\mathrm{MMD}_{k_\cz\otimes k_\cs}(P_{Z,S},P_Z\otimes P_S)$, and HSIC is its square,
\[
\hsic(Z,S)=\mathrm{MMD}^2_{k_\cz\otimes k_\cs}\bigl(P_{Z,S},\,P_Z\otimes P_S\bigr).
\]
We assume throughout that the product kernel $k_\cz \otimes k_\cs$ is characteristic to the relevant joint and product measures on $\cz \times \cs$; this holds under standard conditions for the Gaussian kernels used in our experiments.
The first of the two bounds comes from a decomposition. The following proposition writes HSIC over the conditional deviations $\mu_{Z\mid S=s}-\mu_Z$ and bounds it by the squared expected MMD, so a small conditional MMD makes HSIC small.

\begin{proposition}[Decomposition of HSIC and its expected-MMD bound]
\label{thm:EIPM_HSIC}
    Suppose $k_\cz$ and $k_\cs$ are bounded, i.e., $k_\cz(z,z) \leq \kappa_\cz$ and $k_\cs(s,s) \leq \kappa_\cs$ for all $z \in \cz$, $s \in \cs$.
    Then HSIC admits the decomposition
    \begin{equation}
    \label{eq:hsic_mmd}
        \mathrm{HSIC}(Z, S) = \mathbb{E}_{S, S'} \left[\left\langle\mu_{Z | S}-\mu_Z,\mu_{Z | S^{\prime}}-\mu_Z\right\rangle_{\mathcal{F}_{\mathcal{Z}}} \cdot k_{\mathcal{S}}(S, S') \right],
    \end{equation}
    where $S, S'$ are independent copies.
    Moreover, with $\mathrm{MMD}_{k_\cz}(P_{Z\mid S=s},P_Z)=\|\mu_{Z\mid S=s}-\mu_Z\|_{\cf_\cz}$ the maximum mean discrepancy of \S\ref{sec:preliminary} under $k_\cz$, HSIC is bounded by the squared expected MMD:
    \begin{equation}
    \label{eq:hsic_conditional}
        \mathrm{HSIC}(Z, S) \leq \kappa_\mathcal{S}\cdot\left(\mathbb{E}_{S}\!\left[
        \mathrm{MMD}_{k_\cz}\bigl(P_{Z\mid S},P_Z\bigr) \right]\right)^2.
    \end{equation}
\end{proposition}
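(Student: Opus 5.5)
We first rewrite the joint embedding through the conditional embeddings, then expand the squared norm with independent copies of $S$, and finally apply Cauchy--Schwarz.

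\emph{Disintegrating the joint embedding.} Since $k_\cz$ and $k_\cs$ are bounded, the feature maps are Bochner integrable in $\cf_\cz$, $\cf_\cs$ and $\cf_{\cz\otimes\cs}$. The tower property for Bochner integrals under the regular conditional distribution (as in Theorem~\ref{thm:joint-equals-integral}) gives
\[
\mu_{Z,S}=\mathbb E_S\bigl[\mu_{Z\mid S}\otimes k_\cs(S,\cdot)\bigr].
\]
The conditional embedding also averages back to the marginal one, $\mathbb E_S[\mu_{Z\mid S}]=\mu_Z$. Hence $\mu_Z\otimes\mu_S=\mathbb E_S[\mu_Z\otimes k_\cs(S,\cdot)]$, and subtracting,
\[
\mu_{Z,S}-\mu_Z\otimes\mu_S=\mathbb E_S\bigl[(\mu_{Z\mid S}-\mu_Z)\otimes k_\cs(S,\cdot)\bigr].
\]
We must check that $s\mapsto\mu_{Z\mid S=s}$ is measurable and bounded by $\sqrt{\kappa_\cz}$, so that all these integrals exist. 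Measurability follows from the measurability of $s\mapsto\int k_\cz(z,z')\,dP_{Z\mid S=s}(z)$ together with separability. This technical step is the main obstacle, and we handle it by appeal to standard Pettis-measurability arguments.

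\emph{Expanding the norm.} Take the squared norm and write it as a double expectation over independent copies $S,S'$; this is legitimate by bilinearity and continuity of the inner product under Bochner integrals. On elementary tensors the inner product factorizes as $\langle a\otimes b,a'\otimes b'\rangle=\langle a,a'\rangle_{\cf_\cz}\langle b,b'\rangle_{\cf_\cs}$, and the reproducing property gives $\langle k_\cs(S,\cdot),k_\cs(S',\cdot)\rangle_{\cf_\cs}=k_\cs(S,S')$. Together these yield exactly \eqref{eq:hsic_mmd}.

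\emph{The bound.} Use Cauchy--Schwarz twice: $|\langle\mu_{Z\mid S}-\mu_Z,\mu_{Z\mid S'}-\mu_Z\rangle|\le\|\mu_{Z\mid S}-\mu_Z\|\,\|\mu_{Z\mid S'}-\mu_Z\|$, and $|k_\cs(S,S')|\le\sqrt{k_\cs(S,S)k_\cs(S',S')}\le\kappa_\cs$. Applying these inside \eqref{eq:hsic_mmd} and using independence of $S$ and $S'$ to factor the expectation gives
\[
\hsic(Z,S)\le\kappa_\cs\bigl(\mathbb E_S\|\mu_{Z\mid S}-\mu_Z\|_{\cf_\cz}\bigr)^2.
\]
Identifying $\|\mu_{Z\mid S=s}-\mu_Z\|_{\cf_\cz}$ with $\mathrm{MMD}_{k_\cz}(P_{Z\mid S=s},P_Z)$ gives \eqref{eq:hsic_conditional}.

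As an alternative route to the bound, apply Jensen directly: $\hsic=\|\mathbb E_S[\cdot]\|^2\le(\mathbb E_S\|(\mu_{Z\mid S}-\mu_Z)\otimes k_\cs(S,\cdot)\|)^2$. Since $\|k_\cs(S,\cdot)\|=\sqrt{k_\cs(S,S)}\le\sqrt{\kappa_\cs}$, this gives the same result.
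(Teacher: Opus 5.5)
Your proposal is correct and follows the paper's proof: the tower-property rewriting of $\mu_{Z,S}-\mu_Z\otimes\mu_S$ as $\mathbb E_S[(\mu_{Z\mid S}-\mu_Z)\otimes k_\cs(S,\cdot)]$, the expansion over independent copies $S,S'$, and the tensor/reproducing-property factorization are exactly how the paper obtains \eqref{eq:hsic_mmd}. For \eqref{eq:hsic_conditional} the paper uses precisely your ``alternative'' Jensen/triangle-inequality route on the Bochner integral; your primary pointwise Cauchy--Schwarz argument inside \eqref{eq:hsic_mmd}, factored via independence of $S$ and $S'$, is an equivalent variant that gives the same bound.
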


\begin{proof}
See Appendix~\ref{app:pf_thm_eipm_hsic}.
\end{proof}

\begin{remark}[Relation to the MMD-based EIPM]
\label{rem:hsic-eipm-direction}
The expected MMD $\mathbb E_S[\mathrm{MMD}_{k_\cz}(P_{Z\mid S},P_Z)]$ is the MMD-based EIPM, the expectation-of-IPM criterion realized with the RKHS witness class $\mathcal V\subset\cf_\cz$. Equation~\eqref{eq:hsic_mmd} writes HSIC as a covariance-weighted aggregation of the conditional mean-embedding deviations, and Inequality~\eqref{eq:hsic_conditional} bounds HSIC by $\kappa_\cs$ times its square. The reverse control, of the conditional MMD integral $\mathbb E_S[\mathrm{MMD}_{k_\cz}^2(P_{Z\mid S},P_Z)]$ by HSIC up to an explicit spectral tail, is Theorem~\ref{cor:hsic-mmd-equivalence}.
\end{remark}

The second bound comes from the same decomposition. Because $\|\mu_{Z\mid S=s}-\mu_Z\|_{\cf_\cz}=\mathrm{MMD}_{k_\cz}(P_{Z\mid S=s},P_Z)$, the conditional deviation $\Delta(s):=\mu_{Z\mid S=s}-\mu_Z$ is the conditional MMD in vector form. The next theorem shows that a small HSIC makes every RKHS-smoothed average of $\Delta$ small, so the joint statistic controls the conditional MMD after smoothing.

\begin{theorem}[Smoothed conditional-distribution control by HSIC]
\label{thm:hsic_smoothed_conditional_control}
Let $k_\cz$ and $k_\cs$ be bounded measurable kernels with RKHSs $\cf_\cz$ and $\cf_\cs$. Let
\[
    \Delta(s):=\mu_{Z\mid S=s}-\mu_Z\in\cf_\cz,
    \qquad
    \mu_{Z\mid S=s}:=\mathbb E[k_\cz(Z,\cdot)\mid S=s].
\]
Then $\Delta\in L^2(P_S;\cf_\cz)$ and
\[
    \hsic(Z,S)
    =
    \bigl\|\mathbb E_S[\Delta(S)\otimes k_\cs(S,\cdot)]\bigr\|_{\cf_{\cz\otimes\cs}}^2 .
\]
Moreover, for every $g\in\cf_\cs$,
\begin{equation}
\label{eq:hsic_controls_weighted_embedding}
    \bigl\|\mathbb E_S[g(S)\Delta(S)]\bigr\|_{\cf_\cz}^2
    \le
    \|g\|_{\cf_\cs}^2\,\hsic(Z,S).
\end{equation}
\end{theorem}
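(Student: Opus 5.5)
The argument has three steps: show $\Delta$ is square-integrable, rewrite the centered joint embedding as a Bochner integral of $\Delta(S)\otimes k_\cs(S,\cdot)$, and obtain \eqref{eq:hsic_controls_weighted_embedding} by pairing that embedding against $h\otimes g$ and using Cauchy--Schwarz.

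\textbf{Step 1: measurability and square-integrability of $\Delta$.} Since $k_\cz(z,z)\le\kappa_\cz$, the map $z\mapsto k_\cz(z,\cdot)$ is bounded into $\cf_\cz$ with norm at most $\sqrt{\kappa_\cz}$. Using the regular conditional distribution $\{P_{Z\mid S=s}\}$ (available on standard Borel spaces), $\mu_{Z\mid S=s}=\int k_\cz(z,\cdot)\,dP_{Z\mid S=s}(z)$ is a Bochner integral. I would get measurability in $s$ from weak measurability: for each $h\in\cf_\cz$, the map $s\mapsto\langle h,\mu_{Z\mid S=s}\rangle=\int h\,dP_{Z\mid S=s}$ is measurable. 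Then Pettis's theorem applies, provided $\cf_\cz$ is separable, which I will assume or else reduce to by restricting to the closed span of the feature maps. Since $\|\Delta(s)\|\le 2\sqrt{\kappa_\cz}$, we get $\Delta\in L^2(P_S;\cf_\cz)$, and in fact $\Delta\in L^\infty$. I expect this measurability bookkeeping to be the only delicate point.

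\textbf{Step 2: the identity.} Disintegrate the joint mean embedding as
\[
\mu_{Z,S}=\mathbb E_S\bigl[\mathbb E[k_\cz(Z,\cdot)\mid S]\otimes k_\cs(S,\cdot)\bigr]=\mathbb E_S\bigl[\mu_{Z\mid S}\otimes k_\cs(S,\cdot)\bigr],
\]
which is justified because Bochner integrals commute with the bounded bilinear map $\otimes$ after fixing the second factor, via the disintegration formula $\int F\,dP_{Z,S}=\int\!\int F(z,s)\,dP_{Z\mid S=s}(z)\,dP_S(s)$ for Bochner-integrable $F$. Likewise
\[
\mu_Z\otimes\mu_S=\mathbb E_S\bigl[\mu_Z\otimes k_\cs(S,\cdot)\bigr].
\]
Subtracting gives $\mu_{Z,S}-\mu_Z\otimes\mu_S=\mathbb E_S[\Delta(S)\otimes k_\cs(S,\cdot)]$, and taking the squared norm yields the stated formula for $\hsic(Z,S)$.

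\textbf{Step 3: the inequality.} Write $C:=\mathbb E_S[\Delta(S)\otimes k_\cs(S,\cdot)]$. For $h\in\cf_\cz$ and $g\in\cf_\cs$, the reproducing property gives
\[
\langle C,h\otimes g\rangle=\mathbb E_S\bigl[\langle\Delta(S),h\rangle\,g(S)\bigr]=\bigl\langle\mathbb E_S[g(S)\Delta(S)],h\bigr\rangle.
\]
Cauchy--Schwarz then gives $|\langle C,h\otimes g\rangle|\le\|C\|\,\|h\|\,\|g\|$, using $\|h\otimes g\|=\|h\|\|g\|$. Taking the supremum over $\|h\|_{\cf_\cz}\le1$ gives $\|\mathbb E_S[g(S)\Delta(S)]\|_{\cf_\cz}\le\|g\|_{\cf_\cs}\|C\|$. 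Squaring and substituting $\|C\|^2=\hsic(Z,S)$ from Step 2 proves \eqref{eq:hsic_controls_weighted_embedding}. The Bochner integral $\mathbb E_S[g(S)\Delta(S)]$ exists because $|g(s)|\le\|g\|\sqrt{\kappa_\cs}$ is bounded.
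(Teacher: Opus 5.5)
Your proposal is correct and follows essentially the same route as the paper. The paper also writes $\mu_{Z,S}-\mu_Z\otimes\mu_S=\mathbb E_S[\Delta(S)\otimes k_\cs(S,\cdot)]$ via the tower property and then bounds the partial contraction of this tensor against $g$; your pairing with $h\otimes g$, followed by the supremum over $\|h\|_{\cf_\cz}\le 1$, is exactly that contraction bound made explicit.

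One small correction to your Step~1: the closed span of the feature maps $\{k_\cz(z,\cdot)\}_{z\in\cz}$ is all of $\cf_\cz$, so restricting to it gives no separability. The standard fix is to assume, e.g., that $k_\cz$ is continuous on a separable $\cz$, which makes $\cf_\cz$ separable. The paper makes this assumption implicitly by treating mean embeddings as Bochner integrals.
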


\begin{proof}
See Appendix~\ref{app:pf_prop_hsic_smooth_control}.
\end{proof}

At a single sensitive value $s_0$, the same bound controls the conditional MMD localized near $s_0$, the contrast a per-value estimator would target.

\begin{corollary}[Localized conditional-MMD control by HSIC]
\label{cor:localized_sensitive_contrasts}
Under the assumptions of Theorem~\ref{thm:hsic_smoothed_conditional_control},
for any $s_0\in\cs$ with $k_\cs(s_0,s_0)>0$,
\[
    \bigl\|
        \mathbb E_S[
            k_\cs(S,s_0)(\mu_{Z\mid S}-\mu_Z)
        ]
    \bigr\|_{\cf_\cz}^2
    \le
    k_\cs(s_0,s_0)\,\hsic(Z,S).
\]
More generally, for any $s_0,s_1\in\cs$,
\[
\begin{aligned}
    \bigl\|
        \mathbb E_S[
            \{k_\cs(S,s_0)-k_\cs(S,s_1)\}
            (\mu_{Z\mid S}-\mu_Z)
        ]
    \bigr\|_{\cf_\cz}^2
    \le
    \|k_\cs(s_0,\cdot)-k_\cs(s_1,\cdot)\|_{\cf_\cs}^2
    \hsic(Z,S).
\end{aligned}
\]
\end{corollary}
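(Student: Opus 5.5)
The plan is to specialize inequality~\eqref{eq:hsic_controls_weighted_embedding} of Theorem~\ref{thm:hsic_smoothed_conditional_control} to particular choices of $g\in\cf_\cs$. For the first claim we take $g:=k_\cs(s_0,\cdot)=k_\cs(\cdot,s_0)\in\cf_\cs$. The reproducing property then gives $\|g\|_{\cf_\cs}^2=\langle k_\cs(s_0,\cdot),k_\cs(s_0,\cdot)\rangle_{\cf_\cs}=k_\cs(s_0,s_0)$. By symmetry of the kernel, $g(S)=k_\cs(S,s_0)$, and the left-hand side of~\eqref{eq:hsic_controls_weighted_embedding} becomes exactly $\|\mathbb E_S[k_\cs(S,s_0)(\mu_{Z\mid S}-\mu_Z)]\|_{\cf_\cz}^2$, because $\Delta(S)=\mu_{Z\mid S}-\mu_Z$ by definition. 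The bound $k_\cs(s_0,s_0)\,\hsic(Z,S)$ follows at once.

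For the second claim we take $g:=k_\cs(s_0,\cdot)-k_\cs(s_1,\cdot)\in\cf_\cs$. Its norm is $\|k_\cs(s_0,\cdot)-k_\cs(s_1,\cdot)\|_{\cf_\cs}$ by definition, and could be expanded if desired as $k_\cs(s_0,s_0)-2k_\cs(s_0,s_1)+k_\cs(s_1,s_1)$. Since $g(S)=k_\cs(S,s_0)-k_\cs(S,s_1)$, inequality~\eqref{eq:hsic_controls_weighted_embedding} delivers the stated bound directly. Linearity of the Bochner expectation is implicit in $\mathbb E_S[g(S)\Delta(S)]$, so the difference inside the expectation is handled automatically.

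The only step needing care is well-definedness: the Bochner integrals $\mathbb E_S[g(S)\Delta(S)]$ must exist in $\cf_\cz$. This holds because $g$ is bounded, with $|g(s)|\le\|g\|_{\cf_\cs}\sqrt{\kappa_\cs}$, and because $\Delta\in L^2(P_S;\cf_\cz)\subset L^1$, which Theorem~\ref{thm:hsic_smoothed_conditional_control} already guarantees. The hypothesis $k_\cs(s_0,s_0)>0$ is not needed for the inequality itself, which holds trivially otherwise. It serves only to make the statement non-vacuous, for instance to allow normalizing by $k_\cs(s_0,s_0)$. No substantive obstacle is expected; the corollary is a direct instantiation of the earlier theorem.
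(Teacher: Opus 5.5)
Your proposal is correct and follows essentially the same route as the paper: it applies inequality~\eqref{eq:hsic_controls_weighted_embedding} with $g=k_\cs(\cdot,s_0)$, using $\|g\|_{\cf_\cs}^2=k_\cs(s_0,s_0)$, and then with $g=k_\cs(\cdot,s_0)-k_\cs(\cdot,s_1)$. Your added well-definedness check and your observation that $k_\cs(s_0,s_0)>0$ is not needed for the inequality are both correct.
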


\begin{proof}
See Appendix~\ref{app:pf_cor_localized}.
\end{proof}

Resolving $\Delta$ along the spectrum of the sensitive-attribute kernel turns the control of Theorem~\ref{thm:hsic_smoothed_conditional_control} into a bound on the conditional MMD integral; combined with the bound of Proposition~\ref{thm:EIPM_HSIC}, the two give a two-sided equivalence: HSIC and the conditional MMD integral control each other up to an explicit spectral tail, so making the closed-form statistic small makes the conditional MMD integral that EIPM targets small up to the residual tail $\rho_m^2$.

\begin{theorem}[Spectral equivalence of HSIC and the conditional MMD integral]
\label{cor:hsic-mmd-equivalence}
Assume the conditions of Theorem~\ref{thm:hsic_smoothed_conditional_control}, and let $\kappa_\cs$ be the bound on $k_\cs$ from Proposition~\ref{thm:EIPM_HSIC}. Let $\Delta(s):=\mu_{Z\mid S=s}-\mu_Z$ and let $T_S:L^2(P_S)\to L^2(P_S)$ be the kernel integral operator
\[
    (T_S u)(s):=\int k_\cs(s,t)u(t)\,dP_S(t),
\]
with orthonormal eigensystem $(\lambda_\ell,\psi_\ell)_{\ell\ge1}$, $\lambda_1\ge\lambda_2\ge\cdots\ge0$, and let $P_m$ denote the $L^2(P_S)$ projection onto $\mathrm{span}\{\psi_1,\ldots,\psi_m\}$, extended componentwise to $L^2(P_S;\cf_\cz)$. Then, for any $m$ with $\lambda_m>0$ such that $\|(I-P_m)\Delta\|_{L^2(P_S;\cf_\cz)}^2\le \rho_m^2$,
\[
    \frac{1}{\kappa_\cs}\,\hsic(Z,S)
    \;\le\;
    \mathbb E_S\!\left[\mathrm{MMD}_{k_\cz}^2(P_{Z\mid S},P_Z)\right]
    \;\le\;
    \frac{1}{\lambda_m}\,\hsic(Z,S)+\rho_m^2 .
\]
Hence $\hsic(Z,S)$ and the conditional MMD integral control each other up to the constants $\kappa_\cs,\lambda_m$ and the explicit spectral tail $\rho_m^2$.
\end{theorem}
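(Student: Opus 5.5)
The two inequalities are handled separately. In both, $\mathbb E_S[\mathrm{MMD}^2_{k_\cz}(P_{Z\mid S},P_Z)]=\|\Delta\|^2_{L^2(P_S;\cf_\cz)}$ is the common middle quantity.

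\emph{Lower bound.} Start from the decomposition \eqref{eq:hsic_mmd} of Proposition~\ref{thm:EIPM_HSIC}:
\[
\hsic(Z,S)=\mathbb E_{S,S'}\bigl[\langle\Delta(S),\Delta(S')\rangle_{\cf_\cz}\,k_\cs(S,S')\bigr].
\]
By Cauchy--Schwarz in $\cf_\cs$, $|k_\cs(S,S')|\le\kappa_\cs$. Cauchy--Schwarz in $\cf_\cz$ then gives $\hsic\le\kappa_\cs(\mathbb E_S\|\Delta(S)\|)^2$, which is exactly \eqref{eq:hsic_conditional}. Jensen's inequality turns this into $\hsic\le\kappa_\cs\,\mathbb E_S\|\Delta(S)\|^2$, which is the left inequality.

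\emph{Upper bound.} Here I rewrite HSIC spectrally. By Theorem~\ref{thm:hsic_smoothed_conditional_control}, $\Delta\in L^2(P_S;\cf_\cz)$, so it has the expansion $\Delta=\sum_\ell\psi_\ell\,a_\ell$ with coefficients
\[
a_\ell:=\mathbb E_S[\psi_\ell(S)\Delta(S)]\in\cf_\cz .
\]
The expansion converges in $L^2(P_S;\cf_\cz)$ after completing $(\psi_\ell)$ to an orthonormal basis of $L^2(P_S)$. Vectors in the kernel of $T_S$ contribute nothing to HSIC below, and they lie in the range of $I-P_m$ anyway.

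Take the Mercer expansion $k_\cs(s,t)=\sum_\ell\lambda_\ell\psi_\ell(s)\psi_\ell(t)$, valid in $L^2(P_S\otimes P_S)$ for a bounded kernel. Substituting it into \eqref{eq:hsic_mmd} gives
\[
\hsic(Z,S)=\sum_\ell\lambda_\ell\|a_\ell\|^2_{\cf_\cz}.
\]
Exchanging the sum and the expectation is the main technical point. To justify it, I would write $\hsic=\mathbb E_{S,S'}[\langle\Delta(S),\Delta(S')\rangle k_\cs(S,S')]$ as $\sum_j\langle T_S u_j,u_j\rangle_{L^2(P_S)}$, where $u_j=\langle\Delta(\cdot),e_j\rangle$ for an orthonormal basis $(e_j)$ of $\cf_\cz$. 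Each term is $\sum_\ell\lambda_\ell\langle u_j,\psi_\ell\rangle^2$, and since all terms are nonnegative, Tonelli lets me swap the sums over $j$ and $\ell$. The double integral against $k_\cs$ is a bounded bilinear form on $L^2(P_S)$, so this rewriting is legitimate.

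Since the eigenvalues are nonincreasing,
\[
\|P_m\Delta\|^2_{L^2}=\sum_{\ell\le m}\|a_\ell\|^2\le\lambda_m^{-1}\sum_{\ell\le m}\lambda_\ell\|a_\ell\|^2\le\lambda_m^{-1}\hsic(Z,S).
\]
Pythagoras in $L^2(P_S;\cf_\cz)$ gives $\|\Delta\|^2=\|P_m\Delta\|^2+\|(I-P_m)\Delta\|^2$. The second term is at most $\rho_m^2$ by hypothesis, which yields the right inequality.

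The main obstacle is making this vector-valued spectral identity rigorous, through the coordinatewise reduction and Tonelli argument described above. The rest is Cauchy--Schwarz, Jensen, and orthogonality.
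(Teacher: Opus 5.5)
Your proposal is correct and follows the paper's proof. The lower bound is Proposition~\ref{thm:EIPM_HSIC} followed by Jensen. The upper bound rests on the spectral identity $\hsic(Z,S)=\sum_\ell\lambda_\ell\|a_\ell\|_{\cf_\cz}^2$, truncation at $m$ using $\lambda_\ell\ge\lambda_m$, and Pythagoras in $L^2(P_S;\cf_\cz)$. The one difference is how you derive the spectral identity. You work coordinatewise through the quadratic form $\langle T_S u_j,u_j\rangle_{L^2(P_S)}$, whereas the paper computes the norm of $\sum_\ell\lambda_\ell\Delta_\ell\otimes\psi_\ell$ in $\cf_{\cz\otimes\cs}$. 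Your route has the small advantage of avoiding the fact that $\psi_\ell$ has $\cf_\cs$-norm $\lambda_\ell^{-1/2}$ rather than $1$, which the paper's appeal to orthonormality leaves implicit.
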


\begin{proof}
The upper bound is the spectral argument of Appendix~\ref{app:pf_cor_spectral_conditional_mmd}: with $\mathbb E_S[\mathrm{MMD}_{k_\cz}^2(P_{Z\mid S},P_Z)]=\|\Delta\|_{L^2(P_S;\cf_\cz)}^2$, expanding $\Delta$ in the eigenbasis of $T_S$ gives $\|\Delta\|_{L^2(P_S;\cf_\cz)}^2\le\lambda_m^{-1}\hsic(Z,S)+\rho_m^2$. For the lower bound, Proposition~\ref{thm:EIPM_HSIC} gives $\hsic(Z,S)\le\kappa_\cs\bigl(\mathbb E_S[\mathrm{MMD}_{k_\cz}(P_{Z\mid S},P_Z)]\bigr)^2$, and Jensen's inequality gives $\bigl(\mathbb E_S[\mathrm{MMD}_{k_\cz}(P_{Z\mid S},P_Z)]\bigr)^2\le\mathbb E_S[\mathrm{MMD}_{k_\cz}^2(P_{Z\mid S},P_Z)]$.
\end{proof}

HSIC controls the conditional-distribution deviations lying in the well-conditioned spectral directions of $T_S$. Because the kernel integral operator on $\cs$ is compact, the projected tail $\rho_m$ decreases in $m$ and the remaining directions enter only through it, so when $\rho_m$ is small, small HSIC implies small conditional-integral MMD. This is the population counterpart of the finite-sample spectral factor $\hat\lambda_m^{-1}$ in Theorem~\ref{thm:hsic_gdp}.

\subsection{Control of the demographic-parity gap}\label{sec:hsic-gap-control}

The equivalence transfers to the fairness metric, in population and then on a finite sample. At the population level, the same spectral bound reaches generalized demographic parity: for an RKHS prediction head, small HSIC controls $\Delta_{\mathrm{GDP}}(f)$ up to the spectral tail of its conditional mean $m_f$.

\begin{corollary}[Spectral control of GDP for RKHS heads]
\label{cor:spectral_gdp_control}
Under the assumptions of Theorem~\ref{cor:hsic-mmd-equivalence}, let $f\in\cf_\cz$ and define
\[
    m_f(s):=\mathbb E[f(Z)\mid S=s]-\mathbb E[f(Z)].
\]
Let $P_m$ be the spectral projection from Theorem~\ref{cor:hsic-mmd-equivalence}. Then
\[
    \|P_m m_f\|_{L^2(P_S)}^2
    \le
    \frac{\|f\|_{\cf_\cz}^2}{\lambda_m}\,\hsic(Z,S).
\]
If $\|(I-P_m)m_f\|_{L^2(P_S)}^2\le r_{m,f}^2$, then
\[
    \Delta_{\mathrm{GDP}}(f)^2
    =
    \bigl(\mathbb E_S|m_f(S)|\bigr)^2
    \le
    \frac{\|f\|_{\cf_\cz}^2}{\lambda_m}\,\hsic(Z,S)
    +
    r_{m,f}^2.
\]
\end{corollary}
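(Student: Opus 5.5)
The plan is to reduce the statement to the spectral argument behind the upper bound of Theorem~\ref{cor:hsic-mmd-equivalence}, applied to the scalar function $m_f$ in place of the $\cf_\cz$-valued $\Delta$. The reduction starts from the reproducing property. For $f\in\cf_\cz$ we have $\mathbb E[f(Z)\mid S=s]=\langle f,\mu_{Z\mid S=s}\rangle_{\cf_\cz}$, and this holds for a regular conditional distribution because $k_\cz$ is bounded, so Bochner integrals commute with the bounded functional $\langle f,\cdot\rangle$. Hence
\[
m_f(s)=\langle f,\Delta(s)\rangle_{\cf_\cz},
\]
and $m_f\in L^2(P_S)$, since $\Delta\in L^2(P_S;\cf_\cz)$ by Theorem~\ref{thm:hsic_smoothed_conditional_control}.

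The first bound comes from computing the spectral coefficients. For each $\ell$,
\[
\langle m_f,\psi_\ell\rangle_{L^2(P_S)}=\bigl\langle f,\mathbb E_S[\psi_\ell(S)\Delta(S)]\bigr\rangle_{\cf_\cz}=:\langle f,\Delta_\ell\rangle,
\]
where $\Delta_\ell$ is the $\ell$-th coefficient of $\Delta$ in the eigenbasis. Cauchy--Schwarz gives $\|P_m m_f\|^2=\sum_{\ell\le m}\langle f,\Delta_\ell\rangle^2\le\|f\|^2_{\cf_\cz}\sum_{\ell\le m}\|\Delta_\ell\|^2_{\cf_\cz}$. I then need $\sum_{\ell\le m}\|\Delta_\ell\|^2\le\lambda_m^{-1}\hsic(Z,S)$, which is the first step of the spectral argument in Appendix~\ref{app:pf_cor_spectral_conditional_mmd}.

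To verify this step, I use the decomposition \eqref{eq:hsic_mmd}. It reads $\hsic=\mathbb E_{S,S'}[\langle\Delta(S),\Delta(S')\rangle k_\cs(S,S')]=\langle\Delta,T_S\Delta\rangle_{L^2(P_S;\cf_\cz)}$, where $T_S$ acts componentwise. Expanding via Mercer/spectral theory of the compact positive operator $T_S$ gives $\hsic=\sum_\ell\lambda_\ell\|\Delta_\ell\|^2\ge\lambda_m\sum_{\ell\le m}\|\Delta_\ell\|^2$, because all $\lambda_\ell\ge0$ and $\lambda_\ell\ge\lambda_m$ for $\ell\le m$. This is the main technical point: justifying $\langle\Delta,T_S\Delta\rangle=\sum_\ell\lambda_\ell\|\Delta_\ell\|^2$ for vector-valued $\Delta$. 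I would handle it by fixing an orthonormal basis $(e_j)$ of $\cf_\cz$, applying the scalar identity to each $\langle\Delta,e_j\rangle\in L^2(P_S)$, and summing over $j$ by monotone convergence, since every term is nonnegative. Components in $\ker T_S$ contribute zero on both sides.

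The GDP bound then follows in two lines. First, $\Delta_{\mathrm{GDP}}(f)=\mathbb E_S|m_f(S)|$ by definition \eqref{eq:GDP}. Jensen (or Cauchy--Schwarz) gives $(\mathbb E_S|m_f|)^2\le\|m_f\|^2_{L^2(P_S)}$. Orthogonality of $P_m$ gives $\|m_f\|^2=\|P_m m_f\|^2+\|(I-P_m)m_f\|^2$. Inserting the first bound and the assumed bound $r_{m,f}^2$ on the tail yields the claim.
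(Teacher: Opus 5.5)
Your proposal is correct and matches the paper's proof step for step. The reproducing property gives $m_f(s)=\langle f,\Delta(s)\rangle_{\cf_\cz}$, and Cauchy--Schwarz on the coefficients $\langle f,\Delta_\ell\rangle_{\cf_\cz}$ together with $\sum_{\ell\le m}\|\Delta_\ell\|_{\cf_\cz}^2\le\lambda_m^{-1}\hsic(Z,S)$ from the proof of Theorem~\ref{cor:hsic-mmd-equivalence} gives the projected bound; orthogonality of $P_m$ plus Jensen then gives the GDP bound. The only variation is that you re-derive $\hsic(Z,S)=\sum_\ell\lambda_\ell\|\Delta_\ell\|_{\cf_\cz}^2$ from \eqref{eq:hsic_mmd}, as the quadratic form $\langle\Delta,T_S\Delta\rangle$ expanded along an orthonormal basis $(e_j)$ of $\cf_\cz$, rather than from the tensor form and Mercer expansion, and this is valid. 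One small fix: identifying $\langle\Delta,T_S\Delta\rangle$ with $\sum_j\langle\Delta^{(j)},T_S\Delta^{(j)}\rangle_{L^2(P_S)}$, where $\Delta^{(j)}:=\langle\Delta,e_j\rangle_{\cf_\cz}$, needs dominated convergence (with bound $\kappa_\cs\|\Delta(S)\|_{\cf_\cz}\|\Delta(S')\|_{\cf_\cz}$) or boundedness of $T_S$, not monotone convergence, because the pointwise partial sums under $\mathbb E_{S,S'}$ need not be nonnegative.
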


\begin{proof}
See Appendix~\ref{app:pf_cor_spectral_gdp}.
\end{proof}

On a finite sample, the same control holds for the gap computed from data. The next theorem is the finite-sample form of Corollary~\ref{cor:spectral_gdp_control}: the empirical HSIC statistic controls the empirical demographic-parity gap of any RKHS head, with the eigenvalue $\hat\lambda_m$ of the centered sensitive Gram matrix $\widetilde L$ in the role of $\lambda_m$.

\begin{theorem}[Finite-sample control of the empirical demographic-parity gap by HSIC]
\label{thm:hsic_gdp}
Let $\{(Z_i,S_i)\}_{i=1}^n$ be a sample. Let $K,L\in\mathbb R^{n\times n}$ be the Gram matrices
\[
    K_{ij}=k_\cz(Z_i,Z_j),
    \qquad
    L_{ij}=k_\cs(S_i,S_j),
\]
and let
\[
    H:=I_n-n^{-1}\mathbf 1\mathbf 1^\top,
    \qquad
    \widetilde K:=HKH,
    \qquad
    \widetilde L:=HLH.
\]
Define the biased empirical HSIC statistic
\begin{equation}
\label{eq:hsic_empirical}
    \widehat{\hsic}_n(Z,S):=n^{-2}\operatorname{tr}(\widetilde K\widetilde L).
\end{equation}
Let $\hat\lambda_1\ge\cdots\ge\hat\lambda_r>0$ be the positive eigenvalues of $n^{-1}\widetilde L$ with orthonormal eigenvectors $\hat e_1,\ldots,\hat e_r\in\mathbb R^n$, where $r=\operatorname{rank}(\widetilde L)$, and for $1\le m\le r$ let $P_m$ be the orthogonal projection onto $\operatorname{span}\{\hat e_1,\ldots,\hat e_m\}$. For $f\in\cf_\cz$, define
\[
    \bar f_n:=n^{-1}\sum_{j=1}^n f(Z_j),
    \qquad
    \hat\delta_{f,i}:=f(Z_i)-\bar f_n,
    \qquad
    \bm\delta_f:=(\hat\delta_{f,1},\ldots,\hat\delta_{f,n})^\top,
\]
and let $\widehat m_f:=P_m\bm\delta_f$ be the empirical conditional-mean gap of $f$, the prediction's dependence on $S$ resolved on the $m$ best-conditioned sensitive directions. Then the empirical demographic-parity gap $\widehat\Delta_{\mathrm{GDP}}(f):=n^{-1}\sum_{i=1}^n\lvert(\widehat m_f)_i\rvert$ satisfies
\begin{equation}
\label{eq:hsic_gdp_bound}
    \widehat\Delta_{\mathrm{GDP}}(f)^2
    \;\le\;
    \frac1n\sum_{i=1}^n(\widehat m_f)_i^2
    \;\le\;
    \frac{\|f\|_{\cf_\cz}^2}{\hat\lambda_m}\,\widehat{\hsic}_n(Z,S).
\end{equation}
\end{theorem}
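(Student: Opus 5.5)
The proof is finite-dimensional linear algebra that mirrors Theorem~\ref{thm:hsic_smoothed_conditional_control} and the spectral argument of Theorem~\ref{cor:hsic-mmd-equivalence}. The first inequality in \eqref{eq:hsic_gdp_bound} is Cauchy--Schwarz (equivalently Jensen) on the empirical uniform measure: $(n^{-1}\sum_i|a_i|)^2\le n^{-1}\sum_i a_i^2$ with $a=\widehat m_f$. The substance is the second inequality.

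\textbf{Step 1: write $\bm\delta_f$ through feature maps.} Let $\phi(z):=k_\cz(z,\cdot)$. By the reproducing property, $f(Z_i)=\langle f,\phi(Z_i)\rangle_{\cf_\cz}$, so $\bm\delta_f=H\bm v$ with $v_i=\langle f,\phi(Z_i)\rangle$. Define the centered feature operator $\Phi:\mathbb R^n\to\cf_\cz$, $\Phi a:=\sum_i (Ha)_i\phi(Z_i)$. Its adjoint is $\Phi^* g=H(g(Z_1),\dots,g(Z_n))^\top$, so $\bm\delta_f=\Phi^* f$, and $\Phi^*\Phi=HKH=\widetilde K$.

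\textbf{Step 2: project onto the top $m$ eigenvectors and apply Cauchy--Schwarz in $\cf_\cz$.} Since $\hat e_\ell$ is an eigenvector of $\widetilde L=HLH$ with positive eigenvalue, it lies in the range of $H$, so $H\hat e_\ell=\hat e_\ell$ and $\mathbf 1^\top\hat e_\ell=0$. Then
\[
\frac1n\|P_m\bm\delta_f\|^2=\frac1n\sum_{\ell\le m}\langle \hat e_\ell,\Phi^*f\rangle^2=\frac1n\sum_{\ell\le m}\langle f,\Phi\hat e_\ell\rangle_{\cf_\cz}^2 .
\]
I will not bound each term separately by $\|f\|^2\|\Phi\hat e_\ell\|^2$, since that loses a factor of order $m$. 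Instead, set $g:=\sum_{\ell\le m}c_\ell\,\Phi\hat e_\ell$ with $c_\ell:=\langle f,\Phi\hat e_\ell\rangle$. Then $\sum_{\ell\le m} c_\ell^2=\langle f,g\rangle\le\|f\|\,\|g\|$, and
\[
\|g\|^2=c^\top M c,\qquad M_{\ell\ell'}=\hat e_\ell^\top\widetilde K\hat e_{\ell'} .
\]
Hence $\sum_\ell c_\ell^2\le\|f\|^2\lambda_{\max}(M)$. Since $\lambda_{\max}(M)\le\operatorname{tr}(M)$, this gives
\[
\sum_{\ell\le m}c_\ell^2\le\|f\|^2\sum_{\ell\le m}\hat e_\ell^\top\widetilde K\hat e_\ell=\|f\|^2\operatorname{tr}(P_m\widetilde K).
\]

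\textbf{Step 3: bring in the eigenvalues of $\widetilde L$.} Because $\widetilde K\succeq0$ and $n^{-1}\widetilde L\succeq\hat\lambda_m P_m$ in the Loewner order,
\[
\operatorname{tr}(P_m\widetilde K)\le\hat\lambda_m^{-1}\operatorname{tr}(\widetilde K\,n^{-1}\widetilde L)=\hat\lambda_m^{-1}\,n\,\widehat{\hsic}_n(Z,S).
\]
Dividing by $n$ yields $n^{-1}\|\widehat m_f\|^2\le\|f\|^2\hat\lambda_m^{-1}\widehat{\hsic}_n$, using \eqref{eq:hsic_empirical} and $\widetilde K\widetilde L$ trace identities (note $\operatorname{tr}(HKHHLH)=\operatorname{tr}(\widetilde K\widetilde L)$).

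\textbf{Main obstacle.} The delicate point is the normalization bookkeeping between $n^{-1}\widetilde L$, $n^{-2}$ in $\widehat{\hsic}_n$, and the $n^{-1}$ average. I will check these powers of $n$ carefully, including checking the Loewner step and that the trace bound in Step 2 is not lossy beyond what is claimed. If the exponents do not match, I would instead use $\lambda_{\max}(M)$ directly.
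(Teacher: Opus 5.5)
Your proof is correct and follows the paper's route: Cauchy--Schwarz in $\cf_\cz$ gives $\|P_m\bm\delta_f\|^2\le\|f\|_{\cf_\cz}^2\operatorname{tr}(P_m\widetilde K)$, and $n^{-1}\widetilde L\succeq\hat\lambda_m P_m$ gives $\operatorname{tr}(P_m\widetilde K)\le n\,\widehat{\hsic}_n(Z,S)/\hat\lambda_m$, with the powers of $n$ matching as you claim. The paper's version of that second step drops the $j>m$ terms in the spectral expansion of $\widehat{\hsic}_n$, which is equivalent. One correction: the term-by-term Cauchy--Schwarz you avoid is exactly what the paper uses, and it gives the same $\|f\|_{\cf_\cz}^2\operatorname{tr}(M)$ that you reach after relaxing $\lambda_{\max}(M)\le\operatorname{tr}(M)$, so it loses no factor of $m$ and the detour through $g$ buys nothing.
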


\begin{proof}
See Appendix~\ref{app:pf_thm_hsic_gdp}.
\end{proof}

\paragraph{Interpretation}
Equation~\eqref{eq:hsic_gdp_bound} is the term-by-term empirical form of Corollary~\ref{cor:spectral_gdp_control}: the centered Gram matrix $\widetilde L$ replaces the operator $T_S$, its $m$-th eigenvalue $\hat\lambda_m$ replaces $\lambda_m$, and the projection $\widehat m_f=P_m\bm\delta_f$ of the prediction onto the $m$ best-conditioned sensitive directions replaces the projected conditional mean $P_m m_f$. The free parameter $m$ trades resolution for conditioning, as in the population bound: a larger $m$ resolves more of the prediction's dependence on $S$ but lowers $\hat\lambda_m$. A small $\widehat{\hsic}_n$ therefore bounds $\widehat\Delta_{\mathrm{GDP}}(f)$ for every RKHS head $f$, through the head-dependent factor $\|f\|_{\cf_\cz}^2/\hat\lambda_m$.

%======================================================================
\section{FRHSIC: A Regularizer with a Faster Train-to-Population Rate}\label{sec:frhsic}
%======================================================================

FRHSIC turns the population control of the previous section into a trainable penalty whose empirical value bounds the learned encoder's population dependence at the $n^{-1/2}$ rate. It is the empirical objective~\eqref{eq:frhsic_objective} together with its minibatch training algorithm: \S\ref{sec:frhsic-objective} defines the objective, and \S\ref{sec:uniform-train-pop} gives the training algorithm (the \emph{Implementation} paragraph) and the uniform train-to-population guarantee. The joint-distribution framework underlying the HSIC penalty was developed in Section~\ref{sec:joint-view}; this section is self-contained for a reader who wants only the method and its statistical guarantee.

\subsection{Empirical FRHSIC objective}\label{sec:frhsic-objective}

FRHSIC trains the representation $Z:=h(X)$ (as in \S\ref{sec:preliminary}) by penalizing empirical dependence between $Z$ and $S$ over a feasible encoder class $\mathcal H$ and prediction-head class $\mathcal F$. With fixed bounded characteristic kernels $k_\cz,k_\cs$, the empirical HSIC penalty $\widehat{\hsic}_n(h(X),S)$ is the biased $V$-statistic~\eqref{eq:hsic_empirical} formed from the paired sample $\{(h(X_i),S_i)\}_{i=1}^n$. FRHSIC solves the empirical regularized problem
\begin{equation}
\label{eq:frhsic_objective}
    \min_{h\in\mathcal H,\ f\in\mathcal F}
    \widehat{\mathcal L}_n(f\circ h)
    +
    \lambda\,\widehat{\hsic}_n(h(X),S),
\end{equation}
where $\widehat{\mathcal L}_n$ is the empirical prediction loss (cross-entropy for classification, MSE for regression) and $\lambda>0$ controls the fairness--accuracy tradeoff. Equation~\eqref{eq:frhsic_objective} is the Lagrangian relaxation of the empirical constrained problem $\min_{h,f}\widehat{\mathcal L}_n(f\circ h)$ subject to $\widehat{\hsic}_n(h(X),S)\le\delta$. We use the biased estimator because it is nonnegative and hence a valid penalty; the unbiased $U$-statistic estimator \citep{song2012feature} can be negative in finite samples, though both are consistent.

The penalty is a joint-sample statistic on $\{(h(X_i),S_i)\}$ and does not construct local estimates of $P_{Z\mid S=s}$. The concentration result below is stated for fixed kernels. In the experiments the kernel bandwidths are set by the median heuristic and the resulting statistic is used as a practical training penalty; the theorem isolates the fixed-kernel statistical behavior.

\subsection{Uniform train-to-population control}\label{sec:uniform-train-pop}

Empirical HSIC concentrates on its population value uniformly over the encoder class $\mathcal H$, so the data-dependent encoder that minimizes the FRHSIC penalty still has small population dependence, up to a uniform error of order $n^{-1/2}$ under controlled encoder complexity. Uniformity is essential because the penalty is evaluated on an encoder selected from the same sample, so a pointwise concentration bound for a fixed encoder would not suffice.

\begin{theorem}[Uniform concentration of empirical HSIC over the encoder class]
\label{thm:uniform-hsic-frhsic}
Let $\{(X_i,S_i)\}_{i=1}^n$ be i.i.d.\ copies of $(X,S)$, and let $\mathcal H$ be a class of encoders $h:\mathcal X\to\cz\subseteq\mathbb R^{d_\cz}$. For $h\in\mathcal H$ write $Z^h:=h(X)$ and $Z_i^h:=h(X_i)$. Let $k_\cz,k_\cs$ be fixed kernels with feature maps $\phi_\cz(z):=k_\cz(z,\cdot)\in\cf_\cz$ and $\phi_\cs(s):=k_\cs(s,\cdot)\in\cf_\cs$, satisfying the boundedness and Lipschitz conditions
\[
    \sup_{z\in\cz}k_\cz(z,z)\le\nu_\cz,
    \qquad
    \sup_{s\in\cs}k_\cs(s,s)\le\nu_\cs,
\]
\[
    \|\phi_\cz(z)-\phi_\cz(z')\|_{\cf_\cz}\le\ell_\cz\,\|z-z'\|,
    \qquad
    \|\phi_\cs(s)-\phi_\cs(s')\|_{\cf_\cs}\le\ell_\cs\,\|s-s'\|,
\]
where $k_\cz(z,z)=\|\phi_\cz(z)\|_{\cf_\cz}^2$ and $k_\cs(s,s)=\|\phi_\cs(s)\|_{\cf_\cs}^2$. For $h\in\mathcal H$, write $\widehat{\hsic}_n(h)$ for the biased empirical HSIC estimator~\eqref{eq:hsic_empirical} computed from the paired sample $\{(Z_i^h,S_i)\}_{i=1}^n$, and $\hsic(h)$ for the population HSIC of Definition~\ref{def:hsic} with $Z=Z^h$; these are the estimator and functional already defined, evaluated at the encoder-transformed representation $Z^h$ with the fixed kernels $k_\cz,k_\cs$ unchanged. Let
\[
    \widehat{\mathcal G}_n(\mathcal H):=\mathbb E_{\xi}\!\Bigl[\sup_{h\in\mathcal H}\tfrac1n\textstyle\sum_{i=1}^n\langle\xi_i,h(X_i)\rangle\,\Big|\,X_1,\ldots,X_n\Bigr],
    \quad
    \xi_i\stackrel{\mathrm{iid}}{\sim}\mathcal N(0,I_{d_\cz}),
\]
be the empirical Gaussian complexity of $\mathcal H$, and set $\mathcal G_n(\mathcal H):=\mathbb E[\widehat{\mathcal G}_n(\mathcal H)]$. Define
\begin{equation}
\label{eq:Bn_explicit}
    B_n(\mathcal H,\delta):=8\sqrt2\,\nu_\cz\nu_\cs\sqrt{\tfrac{\log(2/\delta)}{n}}+\frac{4\,\nu_\cz\nu_\cs}{n}+48\sqrt\pi\,\max\{\nu_\cz\ell_\cs,\nu_\cs\ell_\cz\}\,\mathcal G_n(\mathcal H).
\end{equation}
Then, for every $\delta\in(0,1)$, with probability at least $1-\delta$,
\begin{equation}
\label{eq:uniform-hsic-bound}
    \sup_{h\in\mathcal H}\bigl|\widehat{\hsic}_n(h)-\hsic(h)\bigr|\le B_n(\mathcal H,\delta).
\end{equation}
\end{theorem}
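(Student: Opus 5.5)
Our plan is to split the deviation into a centered fluctuation and a bias term, $\sup_h|\widehat{\hsic}_n(h)-\hsic(h)|\le\sup_h|\widehat{\hsic}_n(h)-\mathbb E\widehat{\hsic}_n(h)|+\sup_h|\mathbb E\widehat{\hsic}_n(h)-\hsic(h)|$, and bound each separately. For the bias term we write the biased $V$-statistic in feature-map form,
\[
\widehat{\hsic}_n(h)=\Bigl\|\tfrac1n\textstyle\sum_i\phi_\cz(Z_i^h)\otimes\phi_\cs(S_i)-\bigl(\tfrac1n\sum_i\phi_\cz(Z_i^h)\bigr)\otimes\bigl(\tfrac1n\sum_j\phi_\cs(S_j)\bigr)\Bigr\|^2_{\cf_{\cz\otimes\cs}},
\]
which is an expansion of $n^{-2}\operatorname{tr}(\widetilde K\widetilde L)$. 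Taking expectations, the diagonal terms of the double and triple sums (index coincidences) contribute corrections of order $n^{-1}$, each bounded by products of $\nu_\cz\nu_\cs$. Counting these coincidences gives $|\mathbb E\widehat{\hsic}_n(h)-\hsic(h)|\le 4\nu_\cz\nu_\cs/n$ uniformly in $h$, because the bound uses only the kernel bounds and not $h$. This accounts for the middle term of $B_n$.

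For the fluctuation term we define $\Phi(X_{1:n},S_{1:n}):=\sup_h|\widehat{\hsic}_n(h)-\mathbb E\widehat{\hsic}_n(h)|$. Replacing one pair $(X_i,S_i)$ changes each empirical embedding by at most $2\sqrt{\nu_\cz\nu_\cs}/n$ in norm. Since all embeddings lie in a ball of radius $\lesssim\sqrt{\nu_\cz\nu_\cs}$, the squared norm changes by at most $c\,\nu_\cz\nu_\cs/n$, uniformly over $h$. McDiarmid's inequality then gives $\Phi\le\mathbb E\Phi+8\sqrt2\,\nu_\cz\nu_\cs\sqrt{\log(2/\delta)/n}$ with probability $1-\delta$, once we fix the constants in the bounded difference. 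This yields the first term of $B_n$.

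The main obstacle is bounding $\mathbb E\Phi$ by $48\sqrt\pi\max\{\nu_\cz\ell_\cs,\nu_\cs\ell_\cz\}\mathcal G_n(\mathcal H)$. We would first symmetrize. Because $\widehat{\hsic}_n$ is a $V$-statistic rather than an empirical mean, we would write $\widehat{\hsic}_n(h)-\hsic(h)$ through the identity $\|a\|^2-\|b\|^2=\langle a-b,a+b\rangle$. Here $a$ is the empirical centered cross-covariance and $b$ its population version. The error is then linear in the deviations $\hat\mu_{Z,S}-\mu_{Z,S}$, $\hat\mu_Z-\mu_Z$ and $\hat\mu_S-\mu_S$, each paired with a bounded element of norm $\lesssim\sqrt{\nu_\cz\nu_\cs}$. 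Each deviation is an empirical process of the form $\sup_{\|F\|\le1}\frac1n\sum_i[F(Z_i^h,S_i)-\mathbb E F]$ over $h$ and the RKHS unit ball. We would symmetrize with Rademacher or Gaussian multipliers (the factor $\sqrt{\pi/2}$ converts Rademacher to Gaussian complexity). We would then apply a vector-valued contraction inequality (Maurer's) to the map $z\mapsto\phi_\cz(z)\otimes\phi_\cs(s)$. This map is $\ell_\cz\sqrt{\nu_\cs}$-Lipschitz in $z$, reducing the complexity to $\mathcal G_n(\mathcal H)$ with a Lipschitz factor. The $S$-only term $\hat\mu_S-\mu_S$ does not depend on $h$, and its contribution is absorbed into the McDiarmid-type term. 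The accumulated constants across the three pieces should combine into $48\sqrt\pi$ times the maximum of the mixed products. We expect the delicate points to be verifying the contraction step for the supremum over $h$ inside a Hilbert norm and tracking these constants; we would follow the template of \citet{ni2024uniform} there. Finally, taking expectations conditional on $X_{1:n}$ and then unconditionally replaces $\widehat{\mathcal G}_n$ by $\mathcal G_n$.
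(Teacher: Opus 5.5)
Your skeleton (bias term, bounded differences, symmetrization, Lipschitz reduction to $\mathcal G_n(\mathcal H)$) is a natural way to re-derive the result the paper imports. As a standalone proof, however, it has a gap at exactly the step that result supplies. After symmetrizing you need
\[
\be\sup_{h\in\mathcal H}\Bigl\|\tfrac1n\textstyle\sum_i\sigma_i\,\phi_\cz(h(X_i))\otimes\phi_\cs(S_i)\Bigr\|\lesssim \ell_\cz\sqrt{\nu_\cs}\,\mathcal G_n(\mathcal H)+O(n^{-1/2}),
\]
and Maurer's vector-contraction inequality does not give this. That inequality controls $\be\sup_h\sum_i\sigma_i\psi_i(h(X_i))$ for \emph{fixed} real-valued Lipschitz $\psi_i$. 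Here the Lipschitz functions $z\mapsto F(z,S_i)$ range over the RKHS unit ball jointly with $h$. If you move $F$ into the index and contract on $(h,F)$, you pay for the Rademacher width of the unit ball of an infinite-dimensional Hilbert space, which is infinite. You need a comparison argument that handles the supremum of a Hilbert norm (for instance Maurer's chain rule for Gaussian processes). This is precisely the step that Corollary~23 of \citet{ni2024uniform} packages.

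The paper therefore does not rebuild the argument; its proof is a reduction with three checks. First, $\widehat{\hsic}_n(h)$ is the biased two-sample statistic for the product kernel $\kappa_h((x,s),(x',s'))=k_\cz(h(x),h(x'))\,k_\cs(s,s')$. Second, the encoder acts only on the $\cz$-coordinate, so the bivariate class is $\mathcal H\times\{\mathrm{id}_\cs\}$. Its Gaussian complexity is at most $\mathcal G_n(\mathcal H)$, because the singleton contributes $\be_\xi[\frac1n\sum_i\langle\xi_i,S_i\rangle]=0$. Third, the stated kernel conditions are that paper's Assumption~20. The corollary then delivers $B_n$ with its constants.

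The constants in your sketch are asserted rather than derived, and your own bookkeeping does not reproduce them.
\begin{itemize}
\item A crude count of coincident indices in the double, triple and quadruple sums of $n^{-2}\operatorname{tr}(\widetilde K\widetilde L)$ gives a $1/n$ constant well above $4$.
\item You center $\Phi$ at $\be\widehat{\hsic}_n(h)$ but then expand around $\hsic(h)$, which counts the bias twice.
\item The complexity factor your route produces is $\sqrt{\nu_\cz}\,\nu_\cs\ell_\cz$: a $\sqrt{\nu_\cz\nu_\cs}$ from $\|a+b\|$ times the Lipschitz constant $\ell_\cz\sqrt{\nu_\cs}$. No $\ell_\cs$ appears at all, since $S$ is untransformed.
\end{itemize}
That last factor is dominated by the stated $\max\{\nu_\cz\ell_\cs,\nu_\cs\ell_\cz\}$ when $\nu_\cz\le 1$ (e.g.\ Gaussian kernels), but not in general. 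So the displayed $B_n$ is inherited from the cited corollary rather than something your argument yields. Since you already intend to take the delicate steps from \citet{ni2024uniform}, the complete proof is the paper's: verify the hypotheses of their Corollary~23 and the vanishing of the singleton complexity, then read off $B_n$.
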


\begin{proof}
See Appendix~\ref{app:uniform-hsic-application}. The proof maps FRHSIC to the uniform HSIC concentration theorem of \citet{ni2024uniform}: for each $h$, the empirical HSIC~\eqref{eq:hsic_empirical} at $\{(Z_i^h,S_i)\}_{i=1}^n$ is the biased estimator for the product kernel $\kappa_h\bigl((x,s),(x',s')\bigr):=k_\cz(h(x),h(x'))\,k_\cs(s,s')$; the sensitive coordinate is untransformed, so the singleton class on $\cs$ contributes zero Gaussian complexity; and the cited result applied to $\{\kappa_h:h\in\mathcal H\}$ yields~\eqref{eq:uniform-hsic-bound} with the constants of~\eqref{eq:Bn_explicit} (Proposition~\ref{prop:uniform_hsic}).
\end{proof}

\begin{corollary}[Population HSIC control for the learned encoder]
\label{cor:learned-encoder-hsic}
Suppose the assumptions of Theorem~\ref{thm:uniform-hsic-frhsic} hold, and let $\widehat h\in\mathcal H$ be any data-dependent encoder selected from the same sample, including one obtained by minimizing the FRHSIC objective~\eqref{eq:frhsic_objective}. Then, with probability at least $1-\delta$,
\[
    \hsic(\widehat h)\le\widehat{\hsic}_n(\widehat h)+B_n(\mathcal H,\delta).
\]
\end{corollary}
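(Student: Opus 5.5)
The corollary follows directly from the uniform bound of Theorem~\ref{thm:uniform-hsic-frhsic}. The point is that the bound~\eqref{eq:uniform-hsic-bound} is a statement about a single event $E_\delta$, defined by $\sup_{h\in\mathcal H}|\widehat{\hsic}_n(h)-\hsic(h)|\le B_n(\mathcal H,\delta)$. This event has probability at least $1-\delta$. It involves no particular encoder, so it does not matter how $\widehat h$ is later chosen from the sample.

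On $E_\delta$, the inequality $\hsic(h)\le\widehat{\hsic}_n(h)+B_n(\mathcal H,\delta)$ holds simultaneously for every $h\in\mathcal H$. Since $\widehat h(\omega)\in\mathcal H$ for each sample realization $\omega$, we may evaluate this pointwise at $h=\widehat h(\omega)$. That gives $\hsic(\widehat h)\le\widehat{\hsic}_n(\widehat h)+B_n(\mathcal H,\delta)$ on $E_\delta$, and hence with probability at least $1-\delta$. Here $\hsic(\widehat h)$ means the population functional $h\mapsto\hsic(h)$ evaluated at the random encoder $\widehat h$, with expectations taken over a fresh copy of $(X,S)$, not the sample.

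The only delicate step, and the one I expect to be the main obstacle, is measurability. The supremum over a possibly uncountable $\mathcal H$ and the composition $\hsic(\widehat h)$ need to be measurable so that the probability statement makes sense. I would handle this the way Theorem~\ref{thm:uniform-hsic-frhsic} implicitly does. One option is to assume $\mathcal H$ is separable in a sense that makes $h\mapsto(\widehat{\hsic}_n(h),\hsic(h))$ continuous, which holds for bounded linear encoders and fixed-architecture bounded MLPs under the Lipschitz kernel conditions. Then the supremum reduces to a countable one, and $\widehat h$ is taken to be a measurable selection. Alternatively, one can read the probability as outer probability. Either way, the event inclusion above is exact. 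In particular, no property of the FRHSIC minimizer is used beyond $\widehat h\in\mathcal H$.
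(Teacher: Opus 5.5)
Your proposal is correct and is the same argument as the paper's proof: the event of Theorem~\ref{thm:uniform-hsic-frhsic} holds simultaneously for every $h\in\mathcal H$, so it can be evaluated at the data-dependent $\widehat h\in\mathcal H$. Your measurability discussion (a separability/countable-supremum reduction or an outer-probability reading) is a precaution the paper leaves implicit and does not change the argument.
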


\begin{proof}
The event in Theorem~\ref{thm:uniform-hsic-frhsic} holds simultaneously for every $h\in\mathcal H$. Since $\widehat h\in\mathcal H$, evaluating~\eqref{eq:uniform-hsic-bound} at $h=\widehat h$ gives the claim.
\end{proof}

Corollary~\ref{cor:learned-encoder-hsic} is the train-to-population bridge: small empirical HSIC for the selected encoder implies small population dependence of the learned representation, up to the uniform concentration error, independently of the stochastic optimization used to minimize~\eqref{eq:frhsic_objective}.

\begin{corollary}[Root-$n$ rate under controlled encoder complexity]
\label{cor:uniform-hsic-rootn}
Under the assumptions of Theorem~\ref{thm:uniform-hsic-frhsic}, suppose $\mathcal G_n(\mathcal H)=O(n^{-1/2})$. Then, for every fixed $\delta\in(0,1)$,
\[
    \sup_{h\in\mathcal H}\bigl|\widehat{\hsic}_n(h)-\hsic(h)\bigr|=\widetilde O_p(n^{-1/2}),
\]
where $\widetilde O$ hides logarithmic factors, and the same rate controls the learned encoder $\widehat h$ through Corollary~\ref{cor:learned-encoder-hsic}.
\end{corollary}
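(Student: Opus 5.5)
The corollary follows by substituting the assumed complexity rate into the explicit bound $B_n(\mathcal H,\delta)$ of Theorem~\ref{thm:uniform-hsic-frhsic}; no new probabilistic argument is needed. Fix $\delta\in(0,1)$. By~\eqref{eq:Bn_explicit}, $B_n(\mathcal H,\delta)$ has three terms, and we bound each in turn.

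\begin{enumerate}
\item The first term, $8\sqrt2\,\nu_\cz\nu_\cs\sqrt{\log(2/\delta)/n}$, is $C_1(\delta)\,n^{-1/2}$. Here $C_1(\delta)$ depends only on the kernel bounds and on $\log(2/\delta)$.
\item The second term, $4\nu_\cz\nu_\cs/n$, is $O(n^{-1})$, which is $o(n^{-1/2})$.
\item The third term is $48\sqrt\pi\,\max\{\nu_\cz\ell_\cs,\nu_\cs\ell_\cz\}\,\mathcal G_n(\mathcal H)$. By hypothesis it is at most $C_2 n^{-1/2}$ for all large $n$. Note that $\mathcal G_n(\mathcal H)$ is a deterministic quantity, being the expectation of the empirical Gaussian complexity.
\end{enumerate}

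Hence $B_n(\mathcal H,\delta)\le C(\delta)\,n^{-1/2}$ for a constant $C(\delta)$ that grows only like $\sqrt{\log(1/\delta)}$.

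To turn this into a stochastic-order statement, fix $\varepsilon>0$ and apply Theorem~\ref{thm:uniform-hsic-frhsic} with $\delta=\varepsilon$. This gives
\[
\Pr\Bigl(\sup_{h}|\widehat\hsic_n(h)-\hsic(h)|>C(\varepsilon)\,n^{-1/2}\Bigr)\le\varepsilon
\]
for all large $n$, which is exactly the definition of $O_p(n^{-1/2})$. The $\widetilde O$ notation then absorbs the $\sqrt{\log(2/\delta)}$ dependence, as well as the logarithmic factors that appear when $\mathcal G_n(\mathcal H)$ is only known to be $O(\sqrt{\log n/n})$, as for bounded MLPs.

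For the learned encoder, combine this rate with Corollary~\ref{cor:learned-encoder-hsic}. On the same event, $\hsic(\widehat h)\le\widehat\hsic_n(\widehat h)+B_n(\mathcal H,\delta)$, so $\hsic(\widehat h)-\widehat\hsic_n(\widehat h)\le\widetilde O_p(n^{-1/2})$.

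There is no substantive obstacle. The one point needing care is the second step: the confidence level must stay fixed while $n$ grows, so that the constant in front of $n^{-1/2}$ does not depend on $n$. A related care point is noting that the hypothesis on $\mathcal G_n$ is deterministic, so no further concentration argument for the complexity term is required.
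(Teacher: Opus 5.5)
Your proposal is correct and follows the paper's proof: you substitute $\mathcal G_n(\mathcal H)=O(n^{-1/2})$ into~\eqref{eq:Bn_explicit} to get $B_n(\mathcal H,\delta)=O(n^{-1/2})$ for fixed $\delta$, then invoke~\eqref{eq:uniform-hsic-bound} and Corollary~\ref{cor:learned-encoder-hsic}. Your conversion to a stochastic-order statement by taking $\delta=\varepsilon$ writes out a step the paper leaves implicit.
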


\begin{proof}
Substituting $\mathcal G_n(\mathcal H)=O(n^{-1/2})$ into~\eqref{eq:Bn_explicit} gives $B_n(\mathcal H,\delta)=\widetilde O(n^{-1/2})$; the claim then follows from~\eqref{eq:uniform-hsic-bound} and Corollary~\ref{cor:learned-encoder-hsic}.
\end{proof}

\paragraph{Encoder classes covered by the rate}
The condition $\mathcal G_n(\mathcal H)=O(n^{-1/2})$ is a fixed-effective-complexity condition on the feasible encoder class. For bounded linear encoders $\mathcal H_{\mathrm{lin}}=\{h_W(x)=Wx:\|W\|_F\le R\}$ with $\|X\|\le M$ almost surely,
\[
    \widehat{\mathcal G}_n(\mathcal H_{\mathrm{lin}})
    \le
    R\,\bigl\|\tfrac1n\textstyle\sum_{i=1}^n\xi_iX_i^\top\bigr\|_F,
    \qquad\text{hence}\qquad
    \mathcal G_n(\mathcal H_{\mathrm{lin}})\le RM\sqrt{d_\cz/n},
\]
which is $O(n^{-1/2})$ for fixed representation dimension $d_\cz$. The same $n^{-1/2}$ order holds for fixed-depth, fixed-width MLP encoders with bounded weights, bounded inputs, and Lipschitz activations, and for finite-dimensional parametric families over compact parameter sets, under the corresponding complexity bounds of \citet{ni2024uniform}. The rate does not extend to arbitrary unregularized or growing-width networks.

\paragraph{Implementation}
Training optimizes \eqref{eq:frhsic_objective} with minibatch stochastic gradient methods: on a minibatch of size $m$ the HSIC penalty is the same trace formula~\eqref{eq:hsic_empirical} restricted to the batch, at $O(m^2)$ cost. Kernel bandwidths use the median heuristic. The penalty extends to Equal Opportunity by restricting the statistic to the $Y=1$ subset and to multiple sensitive attributes through a product kernel on the sensitive space, and $\lambda$ is chosen by held-out validation; details are in Appendix~\ref{sec:eo} and Appendix~\ref{sec:adaptive_lambda}.

%======================================================================
\section{Experimental Validation}
\label{sec:experiments}
%======================================================================

The experiments are consistent with the theory: the empirical estimator follows the predicted $O(n^{-1/2})$ rate, FRHSIC attains fairness--accuracy tradeoffs comparable to conditional-route baselines while keeping fairness stable across fresh downstream heads, and it trains about $36\times$ faster than FREM per epoch. We evaluate FRHSIC along four questions. \S\ref{sec:exp-synthetic} checks whether the empirical estimator follows the predicted convergence rate. \S\ref{sec:exp-real-tradeoff} compares its fairness--accuracy tradeoffs against the baselines on real data. \S\ref{sec:exp-transfer} tests whether the enforced fairness transfers to fresh downstream prediction heads. \S\ref{sec:exp-runtime} measures training time relative to conditional-route methods.

Code, cached results, and figures are available in a public repository at \url{https://github.com/Yijin911/FRHSIC}, with an archived DOI release accompanying the camera-ready version.

\subsection{Experimental setup}
\label{sec:exp-setup}

We use five real datasets with continuous sensitive attributes. Adult, ACS Income, MEPS, and COMPAS are classification tasks scored by accuracy; Crime is a regression task scored by mean squared error. Fairness is measured by held-out $\Delta_{\mathrm{GDP}}$, with lower values indicating less dependence on the sensitive attribute $S$.
\begin{itemize}
    \item \textbf{Adult}: $n \approx 30{,}000$ individuals; income ${>}50$K; sensitive attribute age.
    \item \textbf{ACS Income}: $n = 20{,}000$ from the 2018 California American Community Survey via folktables \citep{ding2021retiring}; income ${>}50$K; age.
    \item \textbf{MEPS}: $n \approx 13{,}000$ from the 2015 Medical Expenditure Panel Survey \citep{romano2020achieving}; healthcare utilization $\geq 10$ visits; age.
    \item \textbf{Crime} (the Communities and Crime dataset): $n \approx 2{,}000$ communities; violent crimes per population; racial composition.
    \item \textbf{COMPAS}: $n \approx 6{,}000$ defendants; two-year recidivism; age.
\end{itemize}
Features and $S$ are min--max scaled to $[0,1]$, fit on the training split only. We compare FRHSIC with the continuous-sensitive methods FREM \citep{kong2025fair} and Reg-GDP \citep{jiang2022generalized}, the adversarial method ADV \citep{grari2022fairness}, the binary-attribute methods MMD \citep{deka2023mmd} and LAFTR \citep{madras2018learning} applied to quartile-binned $S$, and an unconstrained baseline (Unfair, $\lambda=0$). Following \citet{kong2025fair}, all methods share a two-layer SELU encoder with hidden and representation dimension $50$ and a linear prediction head, train for $200$ epochs with Adam at learning rate $10^{-3}$, and are swept over $\lambda \in \{0.1, 1, 10, 100, 500\}$ with results averaged over five random $80/20$ splits, reported as mean $\pm$ standard deviation. Each method keeps the kernel bandwidth convention of its source publication; we do not retune bandwidths to favor any method, and the only fairness-specific hyperparameter FRHSIC tunes is $\lambda$, with kernel bandwidths fixed by the median heuristic; $\lambda$ is selected by the held-out procedure of Appendix~\ref{sec:adaptive_lambda}. Full preprocessing, hyperparameter grids, hardware, and the FREM $32$-anchor subsampling approximation are documented in Appendix~\ref{app:experiments}.

\subsection{Synthetic estimator behavior}
\label{sec:exp-synthetic}

Empirical HSIC converges to its population value at the predicted $O(n^{-1/2})$ rate, faster than the conditional-route EIPM estimator's $O(n^{-2/5})$. We draw $(Z,S)$ jointly Gaussian with correlation $\rho=0.5$ ($Z\sim\mathcal N(0,1)$, $S=\rho Z+\sqrt{1-\rho^2}\,\varepsilon$, $\varepsilon\sim\mathcal N(0,1)$), approximate each estimator's population value by its $n=20{,}000$ estimate, and report the absolute error at sample sizes from $50$ to $5{,}000$, averaged over $20$ replications with Gaussian kernels at bandwidth $1$. The EIPM estimator is the weighted MMD estimator of \citet{kong2025fair}, the same estimator used by the FREM baseline. On this dependence, $\widehat{\hsic}$ converges at the $O(n^{-1/2})$ rate that underlies the uniform train-to-population control of \S\ref{sec:uniform-train-pop}, while the EIPM estimator converges at the slower $O(n^{-2/5})$ rate. Figure~\ref{fig:convergence} reports both estimation errors against sample size. By Theorem~\ref{cor:hsic-mmd-equivalence}, HSIC and the conditional MMD integral bound each other up to an explicit spectral tail, so the joint estimator's faster $O(n^{-1/2})$ rate is a statistical-efficiency gain in estimating a closely related fairness quantity, equal to the conditional MMD integral up to that tail. The conditional-gap bound of Theorem~\ref{thm:hsic_gdp} is separately validated on a synthetic classification setup in Appendix~\ref{app:synthetic_validation}.

\begin{figure}[t]
    \centering
    \includegraphics[width=0.70\textwidth]{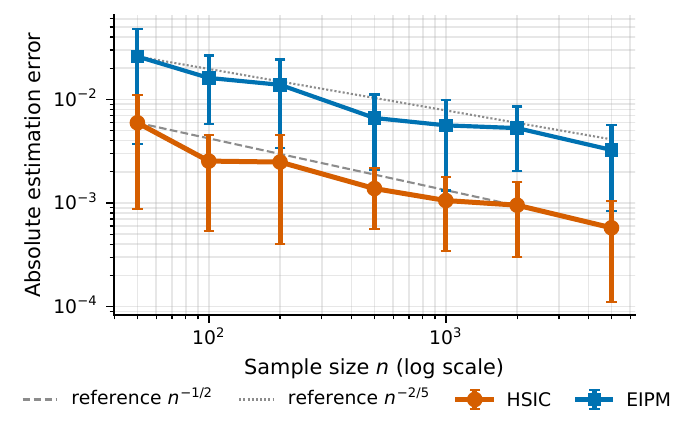}
    \caption{Synthetic estimator convergence. Absolute estimation error is plotted against sample size on log--log axes. Empirical HSIC follows the predicted $O(n^{-1/2})$ behavior and the smoothing-based EIPM estimator follows the slower $O(n^{-2/5})$ trend: the fitted log--log slopes are $-0.46$ for HSIC and $-0.44$ for EIPM, close to the predicted $-1/2$ and $-2/5$. The dashed and dotted gray lines are the $n^{-1/2}$ and $n^{-2/5}$ references.}
    \label{fig:convergence}
\end{figure}

\subsection{Fairness--accuracy tradeoffs}
\label{sec:exp-real-tradeoff}

On the five real datasets of \S\ref{sec:exp-setup}, scored by held-out $\Delta_{\mathrm{GDP}}$ against accuracy (mean squared error for Crime), FRHSIC achieves fairness--accuracy tradeoffs comparable to the strongest continuous-sensitive baselines, although it is not uniformly best on any dataset. Because HSIC and GDP penalties have different scales, equal $\lambda$ does not mean equal fairness, so Table~\ref{tab:real_data} compares at a matched operating point: for each method we report the lowest $\Delta_{\mathrm{GDP}}$ among models whose prediction performance stays within $1\%$ of the Unfair baseline, together with the bandwidth-free normalized area under the accuracy--fairness Pareto envelope, nAUP. Figures~\ref{fig:pareto} and~\ref{fig:pareto_crime} show the full frontiers over the $\lambda$ sweep, for the classification datasets and for Crime respectively.

No method dominates across datasets. By nAUP, FRHSIC is best on COMPAS, within $.004$ of the best on Crime, and mid-pack on Adult, ACS Income, and MEPS, staying within the spread of FREM, Reg-GDP, and the adversarial baselines. Among the binned baselines, LAFTR is consistently among the weakest by nAUP, while MMD on quartile-binned $S$ is competitive on Adult and ACS Income; binning $S$ thus yields no systematic advantage over the continuous-sensitive methods on these datasets. The matched band also removes a Reg-GDP artifact: Reg-GDP reaches near-zero $\Delta_{\mathrm{GDP}}$ only by collapsing accuracy outside the $1\%$ band, from $.849$ to $.758$ on Adult and from $.790$ to $.590$ on ACS Income at its unconstrained point. Differences within $\pm 0.01$ accuracy or $\pm 0.05$ on $\Delta_{\mathrm{GDP}}$ fall inside the five-split spread and should be read as practically equivalent. COMPAS age is integer-valued with few distinct values, so its centered sensitive Gram matrix has small rank, and the finite-sample bound of Theorem~\ref{thm:hsic_gdp} is governed by a few well-conditioned sensitive directions.

\begin{table}[!htbp]
\centering
\caption{Matched-performance comparison on the five real datasets. For each method we report the lowest $\Delta_{\mathrm{GDP}}$ among models whose performance stays within $1\%$ of the Unfair baseline. Perf is accuracy, except Crime where lower $\mathrm{MSE}\times10^2$ is better; $\Delta_{\mathrm{GDP}}$ lower is fairer; nAUP is the normalized area under the accuracy--fairness Pareto envelope, higher is better. Bold marks the best fairness method per column; ``---'' means no swept $\lambda$ meets the band. Setup and bandwidth conventions are in \S\ref{sec:exp-setup}.}
\label{tab:real_data}
\renewcommand{\arraystretch}{0.92}
\begin{tabular}{llccc}
\toprule
Dataset & Method & Perf.\ & $\Delta_{\mathrm{GDP}}$ ($\downarrow$) & nAUP ($\uparrow$) \\
\midrule
\multirow{7}{*}{Adult (Acc $\uparrow$)}
& Unfair              & $.849$ & $.749$ & --- \\
& FRHSIC (Ours)       & $.845$ & $.232$ & $.786$ \\
& FREM                & $.845$ & $.167$ & $.835$ \\
& Reg-GDP             & $.842$ & $\mathbf{.082}$ & $.872$ \\
& ADV                 & $.841$ & $.466$ & $.434$ \\
& MMD (binned)        & $.847$ & $.126$ & $\mathbf{.880}$ \\
& LAFTR (binned)      & $.842$ & $.385$ & $.555$ \\
\midrule
\multirow{7}{*}{ACS Income (Acc $\uparrow$)}
& Unfair              & $.790$ & $.514$ & --- \\
& FRHSIC (Ours)       & $.783$ & $.228$ & $.805$ \\
& FREM                & $.787$ & $.274$ & $.865$ \\
& Reg-GDP             & $.787$ & $.292$ & $.852$ \\
& ADV                 & $.785$ & $.458$ & $.713$ \\
& MMD (binned)        & $.783$ & $\mathbf{.157}$ & $\mathbf{.881}$ \\
& LAFTR (binned)      & $.787$ & $.474$ & $.754$ \\
\midrule
\multirow{7}{*}{MEPS (Acc $\uparrow$)}
& Unfair              & $.804$ & $.626$ & --- \\
& FRHSIC (Ours)       & $.797$ & $.329$ & $.464$ \\
& FREM                & $.800$ & $\mathbf{.317}$ & $\mathbf{.549}$ \\
& Reg-GDP             & $.801$ & $.370$ & $.469$ \\
& ADV                 & $.803$ & $.480$ & $.487$ \\
& MMD (binned)        & $.800$ & $.407$ & $.492$ \\
& LAFTR (binned)      & $.803$ & $.530$ & $.477$ \\
\midrule
\multirow{7}{*}{Crime (MSE $\downarrow$)}
& Unfair              & $1.91$ & $.029$ & --- \\
& FRHSIC (Ours)       & $1.91$ & $.029$ & $.686$ \\
& FREM                & $1.87$ & $\mathbf{.023}$ & $.680$ \\
& Reg-GDP             & $1.91$ & $.025$ & $\mathbf{.690}$ \\
& ADV                 & $1.92$ & $.029$ & $.002$ \\
& MMD (binned)        & ---    & ---   & $.573$ \\
& LAFTR (binned)      & $1.89$ & $.029$ & $.103$ \\
\midrule
\multirow{7}{*}{COMPAS (Acc $\uparrow$)}
& Unfair              & $.654$ & $.123$ & --- \\
& FRHSIC (Ours)       & $.649$ & $.090$ & $\mathbf{.679}$ \\
& FREM                & $.648$ & $.101$ & $.530$ \\
& Reg-GDP             & $.650$ & $\mathbf{.079}$ & $.480$ \\
& ADV                 & $.649$ & $.095$ & $.314$ \\
& MMD (binned)        & $.651$ & $.082$ & $.485$ \\
& LAFTR (binned)      & $.655$ & $.122$ & $.475$ \\
\bottomrule
\end{tabular}
\end{table}

\begin{figure}[htbp]
    \centering
    \includegraphics[width=0.8\textwidth]{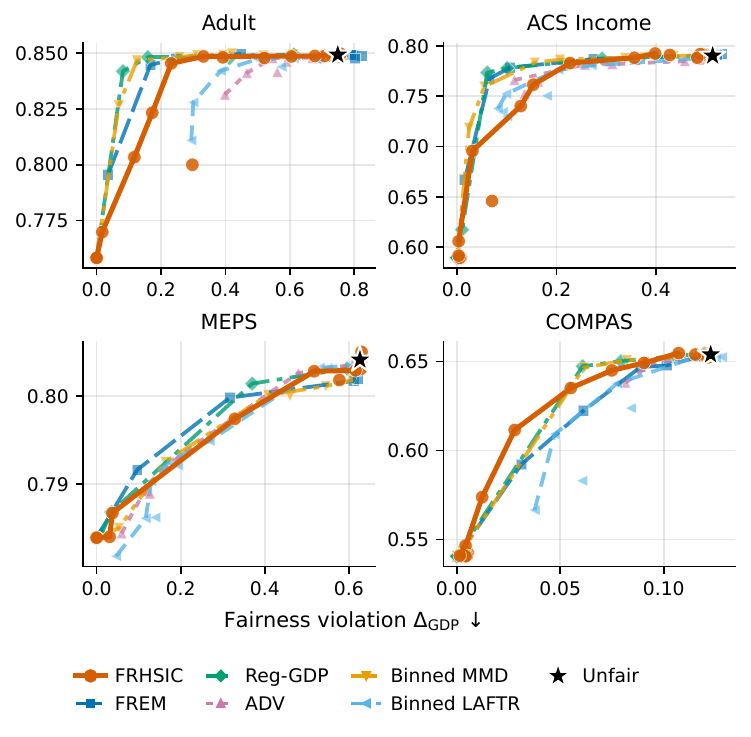}
    \caption{Fairness--accuracy frontiers on the classification datasets. Each point corresponds to one value of the fairness regularization parameter $\lambda$; lower $\Delta_{\mathrm{GDP}}$ is fairer and higher accuracy is better. Lines connect the Pareto-efficient points of each method. Axes are scaled separately by dataset to show the frontier structure. The black star marks the Unfair baseline, the unconstrained model trained without any fairness penalty ($\lambda=0$). FRHSIC is competitive with continuous-sensitive baselines across datasets, but no method uniformly dominates. Table~\ref{tab:real_data} gives the matched-performance summary.}
    \label{fig:pareto}
\end{figure}

\begin{figure}[t]
    \centering
    \includegraphics[width=0.55\textwidth]{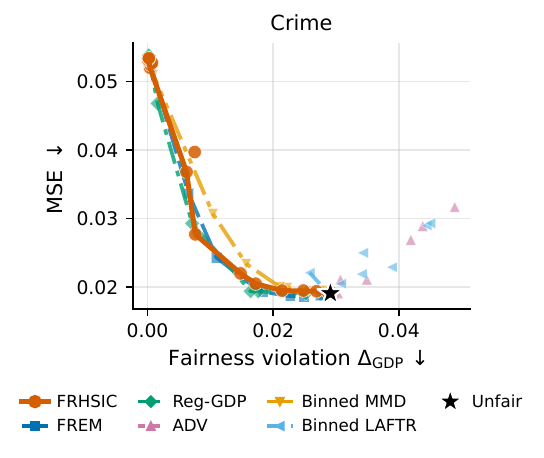}
    \caption{Fairness--prediction-error frontier on Crime. Each point corresponds to one value of $\lambda$; lower $\Delta_{\mathrm{GDP}}$ is fairer and lower MSE is better. Lines connect the Pareto-efficient points of each method; the black star marks the Unfair baseline ($\lambda=0$, no fairness penalty). The separate panel avoids mixing regression error with classification accuracy.}
    \label{fig:pareto_crime}
\end{figure}

\subsection{Transfer to fresh downstream heads}
\label{sec:exp-transfer}

Fairness enforced by FRHSIC stays stable across fresh downstream heads, consistent with a penalty that constrains all dependence between $Z$ and $S$ rather than the single head used in training. On Adult (at $\lambda=10$) and Crime (at $\lambda=100$), we freeze the learned representation of each method and train four fresh heads on it: linear, two-layer MLP, random forest, and SVM. Table~\ref{tab:transfer_summary} reports the standard deviation of $\Delta_{\mathrm{GDP}}$ across the four heads, a stability diagnostic in which lower values indicate fairness less tied to the head used during training. FRHSIC's cross-head variability is comparable to FREM and, on Adult, well below the Unfair baseline. Reg-GDP attains lower variability on Adult, but only at the accuracy collapse noted in \S\ref{sec:exp-real-tradeoff}. This is a stability diagnostic rather than a guarantee that every downstream head is fair; the full per-head breakdown is in Appendix~\ref{sec:transfer}.

\begin{table}[!htbp]
\centering
\caption{Cross-head $\Delta_{\mathrm{GDP}}$ standard deviation across four fresh heads (linear, MLP, random forest, SVM) trained on a single frozen representation at $\lambda = 10$ (Adult) and $\lambda = 100$ (Crime); lower indicates fairness less tied to the training head. Full per-head results are in Appendix~\ref{sec:transfer}.}
\label{tab:transfer_summary}
\renewcommand{\arraystretch}{0.95}
\small
\begin{tabular}{lcc}
\toprule
Method & Adult ($\lambda=10$) & Crime ($\lambda=100$) \\
\midrule
Unfair          & $0.348$ & --- \\
FRHSIC (Ours)   & $0.244$ & $0.001$ \\
FREM            & $0.247$ & $0.002$ \\
Reg-GDP         & $0.058$\,$^\dagger$ & $0.002$ \\
ADV             & $0.252$ & $0.005$ \\
\bottomrule
\end{tabular}
\par\smallskip
\footnotesize $^\dagger$ Reg-GDP attains lower cross-head variance on Adult but with accuracy collapsing to $0.758$ (Table~\ref{tab:real_data}).
\end{table}

\subsection{Computational efficiency}
\label{sec:exp-runtime}\label{sec:runtime}

FRHSIC trains substantially faster than FREM because it replaces FREM's per-sample weighted conditional-MMD computation with a single $O(m^2)$ HSIC trace per minibatch. Figure~\ref{fig:runtime} reports per-epoch wall-clock time, averaged over five epochs after a warm-up epoch, on a synthetic binary-classification task ($S\sim\mathrm{Unif}(0,1)$, features $X=(S+\varepsilon,\,\varepsilon')$ with $\varepsilon\sim\mathcal N(0,0.3^2)$ and $\varepsilon'\sim\mathcal N(0,1)$, label $Y\sim\mathrm{Bernoulli}(\sigma(X_1))$) of varying size, at batch size $256$ from $n=500$ to $n=20{,}000$, with all methods sharing the encoder and prediction head. At $n = 20{,}000$ and batch size $256$, FRHSIC completes an epoch in $2.82$s versus $100.7$s for FREM, a roughly $36\times$ speedup. Reg-GDP is faster still at $1.69$s because it evaluates conditional expectations on a fixed grid rather than kernel matrices, but it optimizes a head-specific smoothed moment criterion, whereas FRHSIC penalizes a representation-level distributional dependence that binds every downstream head. The full-batch $O(n^2)$ versus $O(n^3)$ analysis is given in Appendix~\ref{app:experiments}.

\begin{figure}[t]
    \centering
    \includegraphics[width=0.50\textwidth]{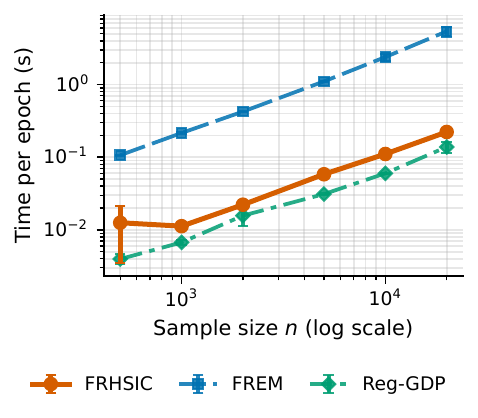}
    \caption{Per-epoch wall-clock time versus sample size at batch size $256$, on log--log axes; error bars are $\pm1$ standard deviation over repeats and are negligible except for FRHSIC at the smallest $n$. FRHSIC uses one HSIC trace per batch and trains about $36\times$ faster than FREM at $n=20{,}000$ (\S\ref{sec:runtime}). Reg-GDP is faster but optimizes a head-specific smoothed criterion rather than a representation-level distributional penalty.}
    \label{fig:runtime}
\end{figure}

Additional experiments are reported in Appendices~\ref{sec:transfer}\nobreakdash--\ref{sec:lambda_exp}: an empirical tightness sweep for the Theorem~\ref{thm:hsic_gdp} bound, a high-dimensional sensitive-attribute comparison, a mutual-information diagnostic, ablations over kernel choice and representation dimensionality, and validation of the $\lambda$-selection rule of Appendix~\ref{sec:adaptive_lambda}.

%======================================================================
\section{Related Work}\label{sec:related}

\paragraph{Fair representation learning}
\citet{zemel2013learning} introduced the framework of mapping data to a latent space that preserves task information while obfuscating $S$. Subsequent work extended this through variational autoencoders~\citep{louizos2015variational}, adversarial training~\citep{madras2018learning}, disentangled representations~\citep{creager2019flexibly}, MMD- and Sinkhorn-based formulations~\citep{NEURIPS2020_ac3870fc,deka2023mmd}, and data-domain transformations~\citep{quadrianto2019discovering}. Most of these methods are designed for categorical sensitive attributes and discretize continuous $S$ at training time.

\paragraph{Continuous-sensitive-attribute fairness criteria}
GDP~\citep{jiang2022generalized} is estimated via Nadaraya--Watson kernel smoothing on $S$. EIPM/FREM~\citep{kong2025fair} extends GDP to higher moments via a weighted empirical MMD with bandwidth $\gamma_n$ and leave-one-out correction. Kernel HGR / NLD~\citep{mary2019fairness,grari2022fairness} estimates a global dependence criterion via regularized kernel canonical correlation. Mutual information~\citep{cho2020fair} is estimated via density or copula approximations. Theorem~\ref{thm:joint-equals-integral} of \S\ref{sec:framework} disintegrates the joint-vs-product IPM into a $P_S$-weighted conditional contrast, and Corollary~\ref{cor:decomposable-class} recovers the conditional-integral IPM that GDP and EIPM instantiate when the witness class decomposes across $s$; kHGR is a supremum-of-correlations criterion outside this form.

\paragraph{HSIC and dependence measures}
HSIC was introduced by~\citet{gretton2005measuring} and has been used for independence testing~\citep{gretton2007kernel,albert2022adaptive}, feature selection~\citep{song2007supervised,song2012feature}, dimensionality reduction~\citep{ma2018nonlinear}, and as a training objective for deep networks~\citep{ma2020hsic}; its equivalence with distance covariance~\citep{sejdinovic2013equivalence} places it within a single family of joint-distribution dependence measures. HSIC has also been used directly as a fairness penalty: \citet{perezsuay2017fair} introduced it as the fairness term in kernel regression and dimensionality reduction, \citet{li2022kernel} added closed-form ridge solutions and a Gaussian-process treatment, and \citet{quadrianto2019discovering} used it for independence from the sensitive attribute in a data-domain model. Our contribution is the continuous-$S$ disintegration that ties the HSIC penalty to the conditional-integral criteria (Theorem~\ref{thm:joint-equals-integral}, Corollary~\ref{cor:decomposable-class}) and the spectral and train-to-population guarantees (Theorems~\ref{cor:hsic-mmd-equivalence} and~\ref{thm:uniform-hsic-frhsic}) that these empirical and ridge-based treatments lack.

%======================================================================
\section{Discussion}\label{sec:discussion}

\paragraph{The joint-distribution view}
The joint-distribution view reaches the same independence target as the conditional-integral criteria but changes the object that is estimated. Rather than a conditional law at each sensitive value, it estimates a single closed-form $V$-statistic, which controls smoothed conditional deviations and, under spectral regularity, the conditional MMD integral up to a spectral tail (Theorems~\ref{thm:hsic_smoothed_conditional_control} and~\ref{cor:hsic-mmd-equivalence}).

\paragraph{Extensions}
The framework extends to representation-level Equal Opportunity by computing the joint discrepancy on the $Y=1$ subset, and to multiple continuous sensitive attributes by taking a product kernel on $\mathcal S_1\times\mathcal S_2$. Both extensions are derived in Appendix~\ref{sec:eo}. Replacing the reference distribution $P_Z\otimes P_S$ by $P_Z\otimes\mathrm{Unif}(\mathcal S)$ yields a minority-protection target in which rare values of $S$ receive equal weight; this is a modeling choice and does not affect the structural identity of Theorem~\ref{thm:joint-equals-integral}.

\paragraph{Open questions}
Two questions remain. First, the empirical bound in Theorem~\ref{thm:hsic_gdp} scales as $\|f\|_{\mathcal F_Z}^2/\hat\lambda_m$; tighter spectral arguments exploiting the full singular-value distribution of $\widetilde L$ would yield sharper guarantees on the integral form. Second, the framework treats the joint discrepancy with an RKHS witness class; non-RKHS instances (Wasserstein, total variation) inherit the structural identity but lose the $O(n^{-1/2})$ rate and closed-form estimator. We leave these to future work.

%======================================================================
% Appendices: the journal Supplementary Materials, inlined for arXiv.
%======================================================================
\appendix

\section{Proofs}
\label{app:proofs}
%======================================================================

This section collects the proofs of the results stated in the main text, in the order in which the results appear there. The technical lemma used in the proof of Proposition~\ref{thm:EIPM_HSIC} (Lemma~\ref{lem:fubini}) is recorded in Appendix~\ref{app:notation} below.

\subsection{Proof of Proposition~\ref{thm:subgroup_equivalence}: conditions (1)--(4)}
\label{app:pf_thm_subgroup_equivalence}

\begin{proof}
We show $(i) \Rightarrow (ii) \Rightarrow (iii) \Rightarrow (iv) \Rightarrow (i)$.

\emph{$(i) \Rightarrow (ii)$.} Since $\mathcal F$ is characteristic, $\mathrm{IPM}_{\mathcal F}(P_{Z,S}, P_Z \otimes P_S) = 0$ implies $P_{Z, S} = P_Z \otimes P_S$.

\emph{$(ii) \Rightarrow (iii)$.} On the Polish product space $\mathcal{Z} \times \mathcal{S}$, the disintegration theorem \citep[Thm.~5.4]{kallenberg2002foundations} yields a regular conditional probability $\{P_{Z|S=s}\}_{s \in \mathcal{S}}$ such that for every Borel $A \subseteq \mathcal{Z}, B \subseteq \mathcal{S}$,
\[
    P_{Z, S}(A \times B) = \int_B P_{Z|S=s}(A)\,dP_S(s).
\]
Under $(ii)$, the same Borel set $A \times B$ also satisfies $P_{Z, S}(A \times B) = P_Z(A)\,P_S(B) = \int_B P_Z(A)\,dP_S(s)$. Comparing the two expressions, $\int_B [P_{Z|S=s}(A) - P_Z(A)]\,dP_S(s) = 0$ for every Borel $B$, so $P_{Z|S=s}(A) = P_Z(A)$ for $P_S$-a.e.\ $s$. Choose a countable algebra $\mathcal{A}$ generating the Borel $\sigma$-algebra of $\mathcal{Z}$ (which exists since $\mathcal{Z}$ is Polish); the exceptional $P_S$-null sets, taken over the countable family $\mathcal{A}$, have countable union of $P_S$-measure zero. Hence $P_{Z|S=s}|_{\mathcal{A}} = P_Z|_{\mathcal{A}}$ for $P_S$-a.e.\ $s$, and by the $\pi$-$\lambda$ theorem $P_{Z|S=s} = P_Z$ on the full Borel $\sigma$-algebra for $P_S$-a.e.\ $s$. Equivalence of ``some'' and ``every'' regular conditional version follows from the $P_S$-a.e.\ uniqueness clause of the disintegration theorem.

\emph{$(iii) \Rightarrow (iv)$.} For every bounded measurable $f$ and $P_S$-a.e.\ $s$,
\[
    \mathbb{E}[f(Z) \mid S = s] = \int f(z)\,dP_{Z|S=s}(z) = \int f(z)\,dP_Z(z) = \mathbb{E}[f(Z)],
\]
where the second equality uses $(iii)$.

\emph{$(iv) \Rightarrow (i)$.} Indicator functions of Borel sets are bounded and measurable, so applying $(iv)$ with $f = \mathbf{1}_A$ gives $P_{Z|S=s}(A) = P_Z(A)$ for $P_S$-a.e.\ $s$, for every Borel $A \subseteq \mathcal{Z}$. Take a countable algebra $\mathcal{A}$ generating the Borel $\sigma$-algebra of the Polish space $\mathcal{Z}$; the exceptional $P_S$-null sets over the countable family $\mathcal{A}$ have null union, so $P_{Z|S=s}|_{\mathcal{A}} = P_Z|_{\mathcal{A}}$ for $P_S$-a.e.\ $s$, and the $\pi$-$\lambda$ theorem gives $P_{Z|S=s} = P_Z$ for $P_S$-a.e.\ $s$. Hence $Z \perp S$, i.e., $P_{Z,S} = P_Z \otimes P_S$, so $\mathrm{IPM}_{\mathcal F}(P_{Z,S}, P_Z \otimes P_S) = 0$.
\end{proof}

\subsection{Proof of Proposition~\ref{thm:subgroup_equivalence}: the MMD-based EIPM instance}
\label{app:pf_prop_equivalence}

\begin{proof}
($\Rightarrow$) If $\hsic(Z, S) = 0$, then $\mu_{Z,S} = \mu_Z \otimes \mu_S$ in $\cf_{\cz \otimes \cs}$. Because $k_\cz$ and $k_\cs$ are characteristic, the product kernel $k_\cz \otimes k_\cs$ is $\mathcal{I}$-characteristic, i.e., its joint mean embedding distinguishes $P_{Z,S}$ from $P_Z \otimes P_S$ (Theorem~3 of \citealp{szabo2018characteristic}, taking $M=2$). Hence $\mu_{Z,S} = \mu_Z \otimes \mu_S$ implies $P_{Z,S} = P_Z \otimes P_S$, i.e., $Z \perp S$.
Hence $P_{Z \mid S = s} = P_Z$ for $P_S$-almost every $s$, so $\mathrm{MMD}(P_{Z \mid S = s}, P_Z) = 0$ a.s., and $\mathrm{EIPM}_\mathcal{V}(Z; S) = \be_S[\mathrm{MMD}(P_{Z \mid S}, P_Z)] = 0$.

($\Leftarrow$) Suppose $\mathrm{EIPM}_\mathcal{V}(Z; S) = 0$.
Since $\mathrm{MMD}(P_{Z \mid S = s}, P_Z) \geq 0$, this implies $\mathrm{MMD}(P_{Z \mid S = s}, P_Z) = 0$ for $P_S$-almost every $s$.
Because $k_\cz$ is characteristic, MMD is a metric on probability distributions, so $P_{Z \mid S = s} = P_Z$ a.s., i.e., $Z \perp S$.
Independence then implies that the joint mean embedding factorizes, $\mu_{Z,S} = \mu_Z \otimes \mu_S$, so $\hsic(Z, S) = \|\mu_{Z,S} - \mu_Z \otimes \mu_S\|^2_{\cf_{\cz \otimes \cs}} = 0$.
\end{proof}

\subsection{Proof of Proposition~\ref{thm:subgroup_equivalence}: downstream-head (GDP) form}
\label{app:pf_cor_hsic_gdp_population}

Since $k_\cz\otimes k_\cs$ is characteristic, $\mathrm{HSIC}(Z,S)=0$ is the product-RKHS instance of condition (1) of Proposition~\ref{thm:subgroup_equivalence}, so $P_{Z\mid S=s}=P_Z$ for $P_S$-almost every $s$ by condition (3), hence $\mathbb E[f(Z)\mid S=s]=\mathbb E[f(Z)]$ for $P_S$-almost every $s$ and every measurable $f:\cz\to\mathbb R$ with $\mathbb E|f(Z)|<\infty$. Therefore $\Delta_{\mathrm{GDP}}(f)=\mathbb E_S\bigl|\mathbb E[f(Z)\mid S]-\mathbb E[f(Z)]\bigr|=0$.

\subsection{Proof of Theorem~\ref{thm:joint-equals-integral} (Disintegration of the joint-vs-product IPM)}
\label{app:pf_thm_joint_disintegration}

Because $\cz$ and $\cs$ are standard Borel, the disintegration theorem \citep[Theorem~5.4]{kallenberg2002foundations} gives $P_{Z,S}(dz\,ds)=P_{Z\mid S=s}(dz)\,P_S(ds)$, and by construction $(P_Z\otimes P_S)(dz\,ds)=P_Z(dz)\,P_S(ds)$. Fix $f\in\mathcal F$ with $\|f\|_\infty\le b<\infty$. Both $P_{Z\mid S=s}(dz)\,P_S(ds)$ and $P_Z(dz)\,P_S(ds)$ are probability measures and $|f|\le b$, so each iterated integral below is absolutely convergent and the Fubini--Tonelli theorem applies:
\[
\mathbb E_{P_{Z,S}}[f]-\mathbb E_{P_Z\otimes P_S}[f]
=\int_{\cs}\!\left(\int_{\cz} f(z,s)\,P_{Z\mid S=s}(dz)-\int_{\cz} f(z,s)\,P_Z(dz)\right)P_S(ds).
\]
The inner difference is $\int_{\cz} f(z,s)\,d(P_{Z\mid S=s}-P_Z)(z)$. Taking absolute values and the supremum over $f\in\mathcal F$ yields the identity.

\subsection{Proof of Corollary~\ref{cor:decomposable-class} (Recovery of the conditional-integral IPM)}
\label{app:pf_cor_decomposable}

By Theorem~\ref{thm:joint-equals-integral},
\[
\mathrm{IPM}_{\mathcal F}(P_{Z,S},P_Z\otimes P_S)=\sup_{f\in\mathcal F}\Bigl|\,\mathbb E_S\!\int_{\cz} f(z,S)\,d(P_{Z\mid S}-P_Z)(z)\Bigr|.
\]
\emph{Upper bound.} Fix $f\in\mathcal F$. By Definition~\ref{def:decomposable-class}(2), $f(\cdot,s)\in\mathcal V$ for $P_S$-a.e.\ $s$, so for those $s$
\[
\Bigl|\int_{\cz} f(z,s)\,d(P_{Z\mid S=s}-P_Z)(z)\Bigr|\le \sup_{g\in\mathcal V}\Bigl|\int_{\cz} g\,d(P_{Z\mid S=s}-P_Z)\Bigr|=d_{\mathcal V}(P_{Z\mid S=s},P_Z).
\]
By the triangle inequality for the $P_S$-integral, $\bigl|\mathbb E_S\!\int f(z,S)\,d(P_{Z\mid S}-P_Z)\bigr|\le \mathbb E_S\bigl[d_{\mathcal V}(P_{Z\mid S},P_Z)\bigr]$; taking the supremum over $f\in\mathcal F$ gives $\mathrm{IPM}_{\mathcal F}\le \mathbb E_S[d_{\mathcal V}(P_{Z\mid S},P_Z)]$.
\emph{Lower bound.} Fix $\varepsilon>0$. Since $\mathcal V$ is symmetric ($g\in\mathcal V\Rightarrow-g\in\mathcal V$), $d_{\mathcal V}(P_{Z\mid S=s},P_Z)=\sup_{g\in\mathcal V}\bigl(\int g\,dP_{Z\mid S=s}-\int g\,dP_Z\bigr)$, the absolute value being attained because $-g\in\mathcal V$. By the selector assumption of Corollary~\ref{cor:decomposable-class} there is a measurable map $s\mapsto g_s^\varepsilon\in\mathcal V$ with $\int g_s^\varepsilon\,d(P_{Z\mid S=s}-P_Z)\ge d_{\mathcal V}(P_{Z\mid S=s},P_Z)-\varepsilon$ for $P_S$-a.e.\ $s$. By Definition~\ref{def:decomposable-class}(1), $f_\star(z,s):=g_s^\varepsilon(z)$ lies in $\mathcal F$, and since the integrand is nonnegative up to $\varepsilon$,
\[
\mathrm{IPM}_{\mathcal F}\ge \Bigl|\mathbb E_S\!\int g_S^\varepsilon\,d(P_{Z\mid S}-P_Z)\Bigr|\ge \mathbb E_S\bigl[d_{\mathcal V}(P_{Z\mid S},P_Z)\bigr]-\varepsilon.
\]
Letting $\varepsilon\downarrow0$ gives $\mathrm{IPM}_{\mathcal F}\ge\mathbb E_S[d_{\mathcal V}(P_{Z\mid S},P_Z)]$. Combining the two bounds and recalling $\mathcal I_{d_{\mathcal V}}(Z;S)=\mathbb E_S[d_{\mathcal V}(P_{Z\mid S},P_Z)]$ from the preliminaries yields the identity.

\subsection{Proof of Proposition~\ref{thm:EIPM_HSIC} (HSIC and Expected MMD)}
\label{app:pf_thm_eipm_hsic}

\begin{proof}
\textbf{Step 1: Decomposition.}
By definition of the joint mean embedding and the tower property of conditional expectation,
\[
    \mu_{Z,S} = \mathbb{E}_{Z,S}\!\left[k_\cz(Z, \cdot) \otimes k_\cs(S, \cdot)\right] = \mathbb{E}_S\!\left[\mu_{Z\mid S} \otimes k_\cs(S, \cdot)\right],
\]
and by the definition $\mu_S = \mathbb{E}_{S'}[k_\cs(S', \cdot)]$ and linearity of the tensor product (with $\mu_Z$ constant in $S'$),
$\mu_Z \otimes \mu_S = \mathbb{E}_{S'}[\mu_Z \otimes k_\cs(S', \cdot)].$
Subtracting and using linearity of the Bochner integral (Lemma~\ref{lem:fubini}),
\begin{equation}
\label{eq:joint_minus_marginal}
    \mu_{Z,S} - \mu_Z \otimes \mu_S = \mathbb{E}_S\!\left[(\mu_{Z\mid S} - \mu_Z) \otimes k_\cs(S, \cdot)\right].
\end{equation}
Squaring the norm and writing it as an inner product gives, with $D_S := (\mu_{Z\mid S} - \mu_Z) \otimes k_\cs(S, \cdot)$,
\begin{align*}
    \hsic(Z, S)
    &= \langle \mathbb{E}_S[D_S],\, \mathbb{E}_{S'}[D_{S'}] \rangle_{\cf_{\cz \otimes \cs}} \\
    &= \mathbb{E}_{S, S'}\!\left[\langle D_S,\, D_{S'}\rangle_{\cf_{\cz \otimes \cs}}\right],
\end{align*}
where the exchange of expectation and inner product is justified by Lemma~\ref{lem:fubini} together with the bound $\|(\mu_{Z\mid S} - \mu_Z) \otimes k_\cs(S, \cdot)\|_{\cf_{\cz \otimes \cs}} \leq 2\sqrt{\kappa_\cz \kappa_\cs}$.
By the tensor-product inner product identity $\langle a \otimes b, a' \otimes b'\rangle = \langle a, a'\rangle \langle b, b'\rangle$ and the reproducing property $\langle k_\cs(S, \cdot), k_\cs(S', \cdot)\rangle_{\cf_\cs} = k_\cs(S, S')$,
\[
    \hsic(Z, S) = \mathbb{E}_{S, S'}\!\left[\langle \mu_{Z\mid S} - \mu_Z, \mu_{Z\mid S'} - \mu_Z\rangle_{\cf_\cz} \cdot k_\cs(S, S')\right],
\]
which establishes~\eqref{eq:hsic_mmd}.

\textbf{Step 2: Squared expected MMD bound.}
Starting from~\eqref{eq:joint_minus_marginal}, apply the triangle inequality (Jensen's inequality for the convex norm):
\begin{align*}
    \hsic(Z, S)^{1/2}
    &= \left\| \mathbb{E}_S[(\mu_{Z\mid S} - \mu_Z) \otimes k_\cs(S, \cdot)] \right\|_{\cf_{\cz \otimes \cs}}\\
    &\leq \mathbb{E}_S\!\left[\|(\mu_{Z\mid S} - \mu_Z) \otimes k_\cs(S, \cdot)\|_{\cf_{\cz \otimes \cs}}\right].
\end{align*}
The tensor-product norm factorizes: $\|a \otimes b\|_{\cf_{\cz \otimes \cs}} = \|a\|_{\cf_\cz} \|b\|_{\cf_\cs}$, and $\|k_\cs(S, \cdot)\|_{\cf_\cs} = \sqrt{k_\cs(S, S)} \leq \sqrt{\kappa_\cs}$.
Therefore
\[
    \hsic(Z, S)^{1/2} \leq \mathbb{E}_S\!\left[\|\mu_{Z\mid S} - \mu_Z\|_{\cf_\cz} \sqrt{k_\cs(S, S)}\right] \leq \sqrt{\kappa_\cs} \cdot \mathbb{E}_S\!\left[\mathrm{MMD}_{k_\cz}(P_{Z\mid S},P_Z)\right].
\]
Squaring both sides yields~\eqref{eq:hsic_conditional}.
\end{proof}

\subsection{Proof of Theorem~\ref{thm:hsic_smoothed_conditional_control} (Smoothed conditional-distribution control by HSIC)}
\label{app:pf_prop_hsic_smooth_control}

\begin{proof}
Since $k_\cz$ is bounded, $\|\mu_{Z\mid S=s}\|_{\cf_\cz}\le \sqrt{\kappa_\cz}$ and $\|\mu_Z\|_{\cf_\cz}\le \sqrt{\kappa_\cz}$, so $\Delta\in L^2(P_S;\cf_\cz)$. The joint mean embedding satisfies
\[
\mu_{Z,S}-\mu_Z\otimes\mu_S
=
\mathbb E_S[(\mu_{Z\mid S=S}-\mu_Z)\otimes k_\cs(S,\cdot)]
=
\mathbb E_S[\Delta(S)\otimes k_\cs(S,\cdot)].
\]
Taking the squared norm in $\cf_{\cz\otimes\cs}$ gives the first identity.

For $g\in\cf_\cs$, the partial contraction of the tensor against $g$ over the $\cf_\cs$ factor gives an element of $\cf_\cz$:
\[
\bigl(\mathrm{id}_{\cf_\cz}\otimes\langle\,\cdot\,,g\rangle_{\cf_\cs}\bigr)
\bigl(\mathbb E_S[\Delta(S)\otimes k_\cs(S,\cdot)]\bigr)
=
\mathbb E_S[g(S)\,\Delta(S)].
\]
Hence
\[
\bigl\|\mathbb E_S[g(S)\Delta(S)]\bigr\|_{\cf_\cz}
\le
\|g\|_{\cf_\cs}\,
\bigl\|\mathbb E_S[\Delta(S)\otimes k_\cs(S,\cdot)]\bigr\|_{\cf_{\cz\otimes\cs}},
\]
which, with the first identity, proves~\eqref{eq:hsic_controls_weighted_embedding}. Two equivalent reformulations follow. Writing $m_f(s):=\mathbb E[f(Z)\mid S=s]-\mathbb E[f(Z)]=\langle f,\Delta(s)\rangle_{\cf_\cz}$ for $f\in\cf_\cz$, the same contraction argument applied to the $\cf_\cz$-coordinate gives the fixed-head form
\begin{equation}
\label{eq:hsic_controls_smoothed_head_gap}
    \bigl\|\mathbb E_S[m_f(S)\,k_\cs(S,\cdot)]\bigr\|_{\cf_\cs}^2
    \le
    \|f\|_{\cf_\cz}^2\,\hsic(Z,S).
\end{equation}
Finally, since $\mathbb E_S[g(S)m_f(S)]=\bigl\langle f,\;\mathbb E_S[g(S)\Delta(S)]\bigr\rangle_{\cf_\cz}$, Cauchy--Schwarz together with~\eqref{eq:hsic_controls_weighted_embedding} gives the bilinear form
\begin{equation}
\label{eq:hsic_controls_covariance_contrast}
    \bigl|\mathbb E_S[g(S)m_f(S)]\bigr|^2
    \le
    \|f\|_{\cf_\cz}^2\,\|g\|_{\cf_\cs}^2\,\hsic(Z,S)
\end{equation}
for all $f\in\cf_\cz$ and $g\in\cf_\cs$.
\end{proof}

\subsection{Proof of Corollary~\ref{cor:localized_sensitive_contrasts} (Localized sensitive-value contrasts)}
\label{app:pf_cor_localized}

\begin{proof}
Apply Inequality~\eqref{eq:hsic_controls_weighted_embedding} with $g=k_\cs(\cdot,s_0)\in\cf_\cs$: by the reproducing property $g(S)=k_\cs(S,s_0)$ and $\|g\|_{\cf_\cs}^2=\langle k_\cs(\cdot,s_0),k_\cs(\cdot,s_0)\rangle_{\cf_\cs}=k_\cs(s_0,s_0)$, so $\bigl\|\mathbb E_S[k_\cs(S,s_0)\Delta(S)]\bigr\|_{\cf_\cz}^2\le k_\cs(s_0,s_0)\,\hsic(Z,S)$, which is the single-point bound since $\Delta(S)=\mu_{Z\mid S}-\mu_Z$. The two-point form follows by the same argument with $g=k_\cs(\cdot,s_0)-k_\cs(\cdot,s_1)\in\cf_\cs$, for which $g(S)=k_\cs(S,s_0)-k_\cs(S,s_1)$ and $\|g\|_{\cf_\cs}^2=\|k_\cs(s_0,\cdot)-k_\cs(s_1,\cdot)\|_{\cf_\cs}^2$.
\end{proof}

\subsection{Proof of Theorem~\ref{cor:hsic-mmd-equivalence} (upper bound: spectral control of the conditional MMD integral)}
\label{app:pf_cor_spectral_conditional_mmd}

\begin{proof}
Since $k_\cs$ is bounded, $T_S$ is a self-adjoint, positive, trace-class operator on $L^2(P_S)$; let $(\lambda_\ell,\psi_\ell)_{\ell\ge1}$ be its eigensystem with $\lambda_\ell\ge0$ and $\{\psi_\ell\}$ orthonormal in $L^2(P_S)$, and recall the Mercer expansion $k_\cs(s,t)=\sum_{\ell\ge1}\lambda_\ell\psi_\ell(s)\psi_\ell(t)$ in $L^2(P_S\otimes P_S)$. By Theorem~\ref{thm:hsic_smoothed_conditional_control},
\[
\hsic(Z,S)=\bigl\|\mathbb E_S[\Delta(S)\otimes k_\cs(S,\cdot)]\bigr\|_{\cf_{\cz\otimes\cs}}^2 .
\]
Substituting the Mercer expansion and writing $\Delta_\ell:=\mathbb E_S[\Delta(S)\psi_\ell(S)]=\int\Delta(s)\psi_\ell(s)\,dP_S(s)\in\cf_\cz$ gives, by orthonormality of $\{\psi_\ell\}$,
\[
\mathbb E_S[\Delta(S)\otimes k_\cs(S,\cdot)]
=\sum_{\ell\ge1}\lambda_\ell\,\Delta_\ell\otimes\psi_\ell,
\qquad
\hsic(Z,S)=\sum_{\ell\ge1}\lambda_\ell\|\Delta_\ell\|_{\cf_\cz}^2,
\]
which is the first identity. For any $m$ with $\lambda_m>0$, since $\lambda_\ell\ge\lambda_m$ for $\ell\le m$ and every term is nonnegative,
\[
\sum_{\ell=1}^m\|\Delta_\ell\|_{\cf_\cz}^2
\le\frac{1}{\lambda_m}\sum_{\ell=1}^m\lambda_\ell\|\Delta_\ell\|_{\cf_\cz}^2
\le\frac{1}{\lambda_m}\hsic(Z,S).
\]
Because $\{\psi_\ell\}$ is orthonormal in $L^2(P_S)$, the $\cf_\cz$-valued coefficients of $\Delta$ are $\Delta_\ell$, so $\|P_m\Delta\|_{L^2(P_S;\cf_\cz)}^2=\sum_{\ell=1}^m\|\Delta_\ell\|_{\cf_\cz}^2$, giving the projected bound. Finally, $P_m$ is an orthogonal projection on $L^2(P_S;\cf_\cz)$, so $\|\Delta\|^2=\|P_m\Delta\|^2+\|(I-P_m)\Delta\|^2\le \lambda_m^{-1}\hsic(Z,S)+\rho_m^2$; the identity $\|\Delta\|_{L^2(P_S;\cf_\cz)}^2=\mathbb E_S[\|\Delta(S)\|_{\cf_\cz}^2]=\mathbb E_S[\mathrm{MMD}_{k_\cz}^2(P_{Z\mid S},P_Z)]$ uses $\|\Delta(s)\|_{\cf_\cz}=\mathrm{MMD}_{k_\cz}(P_{Z\mid S=s},P_Z)$.
\end{proof}

\subsection{Proof of Corollary~\ref{cor:spectral_gdp_control} (Spectral control of GDP for RKHS heads)}
\label{app:pf_cor_spectral_gdp}

\begin{proof}
By the reproducing property, $m_f(s)=\mathbb E[f(Z)\mid S=s]-\mathbb E[f(Z)]=\langle f,\Delta(s)\rangle_{\cf_\cz}$, so $m_f\in L^2(P_S)$ with $L^2(P_S)$ coefficients $\int m_f(s)\psi_\ell(s)\,dP_S(s)=\langle f,\Delta_\ell\rangle_{\cf_\cz}$, where $\Delta_\ell:=\mathbb E_S[\Delta(S)\psi_\ell(S)]$ as in the proof of Theorem~\ref{cor:hsic-mmd-equivalence}. Hence, by Cauchy--Schwarz and that theorem,
\[
\|P_m m_f\|_{L^2(P_S)}^2
=\sum_{\ell=1}^m \langle f,\Delta_\ell\rangle_{\cf_\cz}^2
\le \|f\|_{\cf_\cz}^2\sum_{\ell=1}^m\|\Delta_\ell\|_{\cf_\cz}^2
\le \frac{\|f\|_{\cf_\cz}^2}{\lambda_m}\,\hsic(Z,S).
\]
Since $P_m$ is an orthogonal projection on $L^2(P_S)$, $\|m_f\|_{L^2(P_S)}^2=\|P_m m_f\|^2+\|(I-P_m)m_f\|^2\le \lambda_m^{-1}\|f\|_{\cf_\cz}^2\hsic(Z,S)+r_{m,f}^2$. Finally, by Jensen's inequality $\Delta_{\mathrm{GDP}}(f)=\mathbb E_S|m_f(S)|\le (\mathbb E_S m_f(S)^2)^{1/2}=\|m_f\|_{L^2(P_S)}$, so $\Delta_{\mathrm{GDP}}(f)^2\le \lambda_m^{-1}\|f\|_{\cf_\cz}^2\hsic(Z,S)+r_{m,f}^2$.
\end{proof}

\subsection{Proof of Theorem~\ref{thm:hsic_gdp} (Empirical demographic-parity control by HSIC)}
\label{app:pf_thm_hsic_gdp}

\begin{proof}[Proof of Theorem~\ref{thm:hsic_gdp}]
{\sloppy
We work with empirical quantities throughout. Denote by $\hat d_{S_i} := k_\cz(Z_i, \cdot) - \hat\mu_Z$ the empirical centered embedding at $S = S_i$, with $\hat\mu_Z = n^{-1}\sum_{j=1}^n k_\cz(Z_j, \cdot)$, and write $\bm{\delta}_f := (\hat\delta_{f, 1}, \ldots, \hat\delta_{f, n})^\top$ for the centered prediction-deviation vector.\par}

\textbf{Step 1: Spectral identity for $\widehat{\hsic}$ using positive eigenpairs only.}
Let $K \in \mathbb{R}^{n \times n}$ have entries $K_{ij} = k_\cz(Z_i, Z_j)$ and $\widetilde K = H K H$. The biased V-statistic estimator is
\[
    \widehat{\hsic}(Z, S) = \frac{1}{n^2} \mathrm{tr}(K H L H) = \frac{1}{n^2}\mathrm{tr}(\widetilde K \, \widetilde L).
\]
Since $L$ is symmetric positive semi-definite and $H = H^\top$ with $H^2 = H$, $\widetilde L = HLH$ is symmetric PSD. Let $\hat\lambda_1 \geq \cdots \geq \hat\lambda_r > 0$ be the positive eigenvalues of $n^{-1}\widetilde L$ with associated orthonormal eigenvectors $\hat e_1, \ldots, \hat e_r \in \mathbb{R}^n$, where $r = \mathrm{rank}(\widetilde L)$. Then
\[
    \frac{1}{n}\widetilde L = \sum_{j = 1}^r \hat\lambda_j\, \hat e_j \hat e_j^\top, \qquad P_m = \sum_{j=1}^m \hat e_j \hat e_j^\top.
\]
Substituting and using $\widehat{\hsic} = n^{-1}\mathrm{tr}(\widetilde K \cdot n^{-1}\widetilde L)$,
\begin{equation}
\label{eq:hsic_spectral}
    \widehat{\hsic}(Z, S) = \frac{1}{n}\sum_{j = 1}^r \hat\lambda_j \cdot \hat e_j^\top \widetilde K \hat e_j.
\end{equation}

\textbf{Step 2: Lower bound via the $m$-th eigenvalue.}
Since $\widetilde K = HKH$ is PSD, every term in~\eqref{eq:hsic_spectral} is nonnegative; dropping the terms $j>m$ and using $\hat\lambda_j \geq \hat\lambda_m$ for $j \leq m$,
\begin{equation}
\label{eq:hsic_lower}
    \widehat{\hsic}(Z, S) \;\geq\; \frac{1}{n}\sum_{j=1}^m \hat\lambda_j\, \hat e_j^\top \widetilde K \hat e_j \;\geq\; \frac{\hat\lambda_m}{n}\sum_{j=1}^m \hat e_j^\top \widetilde K \hat e_j \;=\; \frac{\hat\lambda_m}{n}\,\mathrm{tr}\!\big(P_m\, \widetilde K\, P_m\big),
\end{equation}
where we used $\sum_{j=1}^m \hat e_j^\top \widetilde K \hat e_j = \mathrm{tr}(P_m \widetilde K P_m)$ (since $P_m = \sum_{j=1}^m \hat e_j \hat e_j^\top$, $P_m^2 = P_m$, and the cyclic-trace identity).

\textbf{Step 3: Bound the projected prediction deviation by $\mathrm{tr}(P_m\widetilde K P_m)$.}
By the reproducing property, $\hat\delta_{f, i} = \langle f, \hat d_{S_i}\rangle_{\cf_\cz}$, so the vector $\bm{\delta}_f$ has entries $(\bm{\delta}_f)_i = \langle f, \hat d_{S_i}\rangle_{\cf_\cz}$. For any unit vector $u \in \mathbb{R}^n$, the linear combination $\sum_i u_i \hat d_{S_i} \in \cf_\cz$ and $u^\top \bm{\delta}_f = \langle f, \sum_i u_i \hat d_{S_i}\rangle_{\cf_\cz}$. By Cauchy--Schwarz in $\cf_\cz$,
\[
    (u^\top \bm{\delta}_f)^2 \;\leq\; \|f\|^2_{\cf_\cz}\,\Big\|\sum_i u_i \hat d_{S_i}\Big\|^2_{\cf_\cz} \;=\; \|f\|^2_{\cf_\cz}\,u^\top \widetilde K\, u,
\]
where the last equality uses $\langle \hat d_{S_i}, \hat d_{S_j}\rangle_{\cf_\cz} = \widetilde K_{ij}$. Applying this with $u = \hat e_j$ for $j = 1, \ldots, m$ and summing,
\[
    \sum_{j=1}^m (\hat e_j^\top \bm{\delta}_f)^2 \;\leq\; \|f\|^2_{\cf_\cz}\, \sum_{j=1}^m \hat e_j^\top \widetilde K \hat e_j \;=\; \|f\|^2_{\cf_\cz}\,\mathrm{tr}\!\big(P_m\widetilde K P_m\big).
\]
The left-hand side equals $\|P_m\bm{\delta}_f\|_2^2 = \sum_{i=1}^n (P_m\bm{\delta}_f)_i^2$ since $\{\hat e_j\}_{j=1}^m$ is an orthonormal basis for the range of $P_m$. Dividing by $n$,
\begin{equation}
\label{eq:proj_bound}
    \frac{1}{n}\sum_{i=1}^n (P_m\bm{\delta}_f)_i^2 \;\leq\; \frac{\|f\|^2_{\cf_\cz}}{n}\,\mathrm{tr}\!\big(P_m\widetilde K P_m\big).
\end{equation}

\textbf{Step 4: Combine.}
Combining~\eqref{eq:hsic_lower} (which gives $\mathrm{tr}(P_m\widetilde K P_m) \leq n\,\widehat{\hsic}/\hat\lambda_m$) with~\eqref{eq:proj_bound},
\[
    \frac{1}{n}\sum_{i=1}^n (P_m\bm{\delta}_f)_i^2 \;\leq\; \frac{\|f\|^2_{\cf_\cz}}{\hat\lambda_m}\,\widehat{\hsic}(Z, S),
\]
which is the second inequality of~\eqref{eq:hsic_gdp_bound} since $\widehat m_f = P_m\bm{\delta}_f$. The first inequality, $\widehat\Delta_{\mathrm{GDP}}(f)^2 = \big(n^{-1}\sum_i |(\widehat m_f)_i|\big)^2 \leq n^{-1}\sum_i (\widehat m_f)_i^2$, is the Cauchy--Schwarz inequality (equivalently, Jensen applied to $x \mapsto x^2$).
\end{proof}

The population counterpart of this finite-sample spectral control is the upper bound of Theorem~\ref{cor:hsic-mmd-equivalence}, which handles the compact kernel operator on $\cs$ through its spectral tail $\rho_m$ without assuming a spectral gap.

%======================================================================
\section{Additional Notation and Technical Lemmas}
\label{app:notation}
%======================================================================

\begin{lemma}[Fubini's theorem for RKHS-valued integrals]
\label{lem:fubini}
{\sloppy
Let $k$ be a bounded kernel, $\sup_{x \in \mathcal{X}} k(x, x) \leq \kappa < \infty$, with RKHS $\mathcal{F}$. For any random variable $X$ on $\mathcal{X}$, the mean embedding $\mu_X = \mathbb{E}[k(X, \cdot)]$ exists as a Bochner integral in $\mathcal{F}$. Moreover, for any $\mathcal{F}$-valued Bochner-integrable random element $U$ and any $v \in \mathcal{F}$, $\langle \mathbb{E}[U], v\rangle_{\mathcal{F}} = \mathbb{E}[\langle U, v\rangle_{\mathcal{F}}]$; in particular $\langle \mu_X, v\rangle_{\mathcal{F}} = \mathbb{E}[v(X)]$.\par}
\end{lemma}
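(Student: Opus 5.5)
The plan is to reduce the lemma to two standard facts about Bochner integrals in separable Hilbert spaces: the Bochner integrability criterion and the commutation of bounded linear functionals with the Bochner integral.

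\textbf{Existence of $\mu_X$.} First I will check that the feature map $x\mapsto k(x,\cdot)$ is strongly measurable into $\mathcal F$. For every $v\in\mathcal F$, the function $x\mapsto\langle k(x,\cdot),v\rangle_{\mathcal F}=v(x)$ is measurable, because elements of the RKHS of a measurable kernel are measurable functions. This gives weak measurability. Assuming $\mathcal F$ is separable, as it is whenever $\mathcal X$ is separable and $k$ is continuous, or under the standing Polish assumptions, Pettis' measurability theorem upgrades weak measurability to strong measurability.

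Next, the norm is uniformly bounded: $\|k(x,\cdot)\|_{\mathcal F}=\sqrt{k(x,x)}\le\sqrt\kappa$. Hence $\mathbb E\|k(X,\cdot)\|_{\mathcal F}\le\sqrt\kappa<\infty$, and Bochner's criterion gives that $\mu_X=\mathbb E[k(X,\cdot)]$ exists in $\mathcal F$.

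\textbf{Interchange with the inner product.} Take a Bochner-integrable $U$ and fix $v\in\mathcal F$. The map $\ell_v(u):=\langle u,v\rangle_{\mathcal F}$ is a bounded linear functional on $\mathcal F$. I will prove $\ell_v(\mathbb E U)=\mathbb E\,\ell_v(U)$ in two stages:
\begin{enumerate}
\item For simple functions $U=\sum_i \mathbf 1_{A_i}u_i$, the identity holds by linearity.
\item For general $U$, take simple $U_n\to U$ pointwise with $\|U_n\|\le 2\|U\|$; this approximating sequence is the one that defines the Bochner integral. Then $|\ell_v(U_n)-\ell_v(U)|\le\|v\|\,\|U_n-U\|$. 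Dominated convergence, with dominating function $3\|v\|\,\|U\|$, passes the limit on the scalar side. Continuity of $\ell_v$ together with $\mathbb E U_n\to\mathbb E U$ in norm passes the limit on the vector side.
\end{enumerate}

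\textbf{Reproducing identity.} Specializing to $U=k(X,\cdot)$ and applying the reproducing property $\langle k(x,\cdot),v\rangle=v(x)$ gives $\langle\mu_X,v\rangle_{\mathcal F}=\mathbb E[v(X)]$.

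\textbf{Main obstacle.} The only delicate step is measurability: ensuring strong measurability of $k(X,\cdot)$, which rests on separability of $\mathcal F$ or on a continuity assumption for $k$. I would state separability explicitly as part of the setting. An alternative route avoids Pettis entirely: define $\mu_X$ via the Riesz representation of the bounded functional $v\mapsto\mathbb E[v(X)]$, which has norm at most $\sqrt\kappa\|v\|$. This yields the identity directly and is the fallback I would use if separability is in doubt.
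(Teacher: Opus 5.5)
Your proof is correct and follows the paper's route: the bound $\|k(x,\cdot)\|_{\mathcal F}=\sqrt{k(x,x)}\le\sqrt\kappa$ gives Bochner integrability, and the interchange is continuity of the bounded functional $u\mapsto\langle u,v\rangle_{\mathcal F}$ passed through the Bochner integral. You prove that interchange by simple-function approximation, where the paper just cites ``continuity of the inner product together with Fubini's theorem.''

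You also supply the strong-measurability step (Pettis plus separability of $\mathcal F$) that the paper's proof skips. Two caveats: separability needs something like a continuous $k$ on a separable $\mathcal X$, not Polishness alone (e.g., $k(x,y)=\mathbf 1\{x=y\}$ on $\mathbb R$ has a nonseparable RKHS). And your Riesz fallback yields only a weak mean embedding, not the Bochner integral the lemma asserts.
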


\begin{proof}
By the reproducing property,
\[
    \|k(x, \cdot)\|_\mathcal{F} = \sqrt{k(x,x)} \leq \sqrt{\kappa},
    \quad\text{so}\quad
    \mathbb{E}[\|k(X, \cdot)\|_\mathcal{F}] \leq \sqrt{\kappa} < \infty,
\]
guaranteeing existence of the Bochner integral.
The exchange of expectation and inner product follows from continuity of the inner product together with Fubini's theorem.
\end{proof}

\subsection{Discrete sensitive attribute: coincidence of the two formulations}
\label{app:discrete-coincidence}

When $S$ takes finitely many values, the joint and conditional-integral formulations of representation-level fairness reduce to the same aggregated two-sample MMD object, which motivates studying the continuous case in the main text.

\begin{lemma}[Discrete sensitive attribute coincidence]\label{lem:discrete-coincidence}
Use the notation: $\cf_\cz$ is the RKHS on $\cz$ with characteristic kernel $k_\cz$, $\mu_Z:=\mathbb E[k_\cz(Z,\cdot)]\in\cf_\cz$ is the marginal mean embedding, and $\mu_{Z\mid s_k}:=\mathbb E[k_\cz(Z,\cdot)\mid S=s_k]\in\cf_\cz$ is the conditional mean embedding. Let $\mathcal S=\{s_1,\dots,s_K\}$ be finite with $\pi_k:=\Pr(S=s_k)>0$, let $k_\cs$ be the strictly positive-definite sensitive-attribute kernel, write $\delta_k:=\mu_{Z\mid s_k}-\mu_Z$, and let $\mathcal I_{\mathrm{MMD}^2}(Z;S):=\mathbb E_S[\mathrm{MMD}^2(P_{Z\mid S},P_Z)]$ be the conditional-integral functional with local discrepancy $d=\mathrm{MMD}^2$. Then
\[
\mathrm{HSIC}(Z,S)=\sum_{k,k'=1}^K \pi_k\pi_{k'}\,k_\cs(s_k,s_{k'})\,\langle \delta_k,\delta_{k'}\rangle_{\cf_\cz}.
\]
$\mathrm{HSIC}(Z,S)$ and $\mathcal I_{\mathrm{MMD}^2}(Z;S)$ both vanish if and only if $P_{Z\mid S=s_k}=P_Z$ for every $k$; both reduce to aggregated two-sample MMDs.
\end{lemma}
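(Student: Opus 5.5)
The plan is to prove the three assertions of Lemma~\ref{lem:discrete-coincidence} in order: the closed form for HSIC, the common zero set, and the reduction to aggregated two-sample MMDs. Everything rests on the decomposition~\eqref{eq:hsic_mmd} of Proposition~\ref{thm:EIPM_HSIC}, specialized to a finite sensitive space.

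\textbf{Step 1: closed form for HSIC.} The kernel $k_\cz$ is bounded, as in Definition~\ref{def:hsic}. A kernel on a finite set is automatically bounded, so Proposition~\ref{thm:EIPM_HSIC} applies. With $S,S'$ independent copies taking value $s_k$ with probability $\pi_k$, the expectation $\mathbb E_{S,S'}$ in~\eqref{eq:hsic_mmd} becomes the finite double sum $\sum_{k,k'}\pi_k\pi_{k'}$. The conditional embedding at $S=s_k$ is the elementary conditional expectation $\mu_{Z\mid s_k}$; this is well defined since $\pi_k>0$, so no disintegration subtlety arises. Substituting gives the displayed formula directly. In matrix form it reads $\mathrm{HSIC}=\pi^\top\bigl(G_\cs\circ G_\delta\bigr)\pi$, where $G_\cs=(k_\cs(s_k,s_{k'}))$ and $G_\delta=(\langle\delta_k,\delta_{k'}\rangle)$. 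In the same notation, $\mathcal I_{\mathrm{MMD}^2}=\sum_k\pi_k\|\delta_k\|^2_{\cf_\cz}$.

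\textbf{Step 2: common zero set.} The HSIC formula equals $\|\sum_k\pi_k\,\delta_k\otimes k_\cs(s_k,\cdot)\|^2_{\cf_{\cz\otimes\cs}}$; this is the identity of Theorem~\ref{thm:hsic_smoothed_conditional_control} specialized to the finite case.

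For the ``if'' direction: if every $P_{Z\mid S=s_k}=P_Z$, then all $\delta_k=0$ and both quantities vanish.

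For the ``only if'' direction, the case $\mathcal I_{\mathrm{MMD}^2}=0$ is immediate. All $\pi_k>0$, so each $\|\delta_k\|=0$. Since $k_\cz$ is characteristic, each $\delta_k=0$ forces $P_{Z\mid S=s_k}=P_Z$.

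For HSIC the argument needs more care, and I expect this to be the main obstacle, because cross terms with $k\neq k'$ could in principle cancel. Strict positive definiteness of $k_\cs$ on the distinct points $s_1,\dots,s_K$ makes the feature vectors $k_\cs(s_k,\cdot)$ linearly independent in $\cf_\cs$. I would then argue as follows:
\begin{itemize}
\item Write $w=\sum_k\pi_k\delta_k\otimes k_\cs(s_k,\cdot)$ and suppose $w=0$.
\item Because the $k_\cs(s_k,\cdot)$ are independent, there exist dual vectors $g_j\in\mathrm{span}\{k_\cs(s_k,\cdot)\}$ with $\langle k_\cs(s_k,\cdot),g_j\rangle=\mathbf 1\{k=j\}$.
\item Contracting $w$ against $g_j$, exactly as in Theorem~\ref{thm:hsic_smoothed_conditional_control}, gives $\pi_j\delta_j=0$, hence $\delta_j=0$.
\end{itemize}
Equivalently, the Gram matrix $G_\cs$ is positive definite, so $G_\cs\otimes I$ acting on $(\pi_k\delta_k)_k$ in $\cf_\cz^K$ is injective. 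Characteristicness of $k_\cz$ then again gives $P_{Z\mid S=s_k}=P_Z$ for every $k$.

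\textbf{Step 3: reduction to aggregated two-sample MMDs.} Note that $\|\delta_k\|_{\cf_\cz}=\mathrm{MMD}(P_{Z\mid S=s_k},P_Z)$, and $\langle\delta_k,\delta_{k'}\rangle$ is a bilinear combination of two-sample mean-embedding contrasts. Hence $\mathcal I_{\mathrm{MMD}^2}$ is the $\pi$-weighted sum of per-group two-sample $\mathrm{MMD}^2$. HSIC is the $k_\cs$-weighted aggregation of the same group contrasts, and it reduces exactly to the $\pi_k^2$-weighted sum of $\mathrm{MMD}^2$ terms when $k_\cs(s_k,s_{k'})=\mathbf 1\{k=k'\}$.
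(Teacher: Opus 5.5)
Your proposal is correct and follows the paper's route. The paper's proof simply expands HSIC through the identity $\mu_{Z,S}-\mu_Z\otimes\mu_S=\mathbb E_S[(\mu_{Z\mid S}-\mu_Z)\otimes k_\cs(S,\cdot)]$ and integrates against the finite law of $S$, which is your Step~1. Your Steps~2--3 go beyond that one-line argument by justifying the common zero set and the aggregated-MMD reading, both of which the paper leaves implicit; in particular, you use strict positive definiteness of $k_\cs$ to make the features $k_\cs(s_k,\cdot)$ linearly independent, so the cross terms cannot cancel.
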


\begin{proof}
Expand $\mathrm{HSIC}$ via $\mu_{P_{Z,S}}-\mu_{P_Z\otimes P_S}=\mathbb E_S[(\mu_{Z\mid S}-\mu_Z)\otimes k_{\mathcal S}(S,\cdot)]$ and integrate the finite measure on $\mathcal S$.
\end{proof}

\subsection{Bandwidth scaling of the spectral constant}
\label{app:lambda2_scaling}

\begin{remark}[Bandwidth scaling of the full-resolution constant $\hat\lambda_S$]
\label{rem:lambda2_scaling}
{\sloppy
The full-resolution constant $\hat\lambda_S = \lambda_{\min}^+(n^{-1}\widetilde L)$ of Theorem~\ref{thm:hsic_gdp} at $m=r$ reflects the conditioning of the centered Gram matrix on $S$. As the sensitive-kernel bandwidth tends to zero and $L \to I_n$, $\widetilde L \to H$ (whose positive eigenvalues are all $1$), and $\hat\lambda_S \to 1/n$. As the bandwidth tends to infinity and $L \to \mathbf{1}\mathbf{1}^\top$, $\widetilde L \to 0$ and $\hat\lambda_S \to 0$. Very large bandwidths therefore make the conditional-gap bound loose, while extremely small bandwidths give a finite-sample point-mass notion of conditioning; the median heuristic selects an intermediate regime. Standard results on Gaussian-kernel Gram matrices for samples from densities with compact support \citep[e.g.,][]{koltchinskii2000random} confirm a polynomial dependence on $\sigma$ between these extremes; in our experiments at $n = 1500$ this yields $\hat\lambda_S \sim 10^{-9}$--$10^{-7}$ across the bandwidth sweep (Appendix~\ref{sec:bound_tightness}).\par}
\end{remark}

%======================================================================
\section{Regularized empirical kHGR}
\label{app:khgr_supplement}
%======================================================================

This appendix records a regularized empirical bound linking $\widehat{\hsic}$ to a Tikhonov-regularized empirical kHGR. The unregularized empirical kHGR requires inverting empirical covariance operators with arbitrarily small positive eigenvalues, so a meaningful finite-sample statement is naturally regularized; see \citet{fukumizu2007statistical} for the kCCA interpretation. The FRHSIC guarantees in the main text do not depend on this result.

\paragraph{Empirical covariance operators}
Define the empirical centered cross-covariance operator $\hat\Sigma_{ZS}: \cf_\cs \to \cf_\cz$ by
\[
    \hat\Sigma_{ZS} = \frac{1}{n}\sum_{i=1}^n \big(k_\cz(Z_i, \cdot) - \hat\mu_Z\big) \otimes \big(k_\cs(S_i, \cdot) - \hat\mu_S\big),
\]
where $\hat\mu_Z, \hat\mu_S$ are the empirical mean embeddings, and define $\hat\Sigma_{ZZ}, \hat\Sigma_{SS}$ analogously. All three operators are finite-rank (rank at most $n$), hence Hilbert--Schmidt without further assumptions on the kernels. By a direct expansion, $\widehat{\hsic}(Z, S) = \|\hat\Sigma_{ZS}\|_{\mathrm{HS}}^2$.

\paragraph{Regularized empirical kHGR}
{\sloppy For ridge parameters $\eta_Z, \eta_S > 0$, define the Tikhonov-regularized empirical kHGR\par}
\[
    \widehat{\mathrm{kHGR}}_{\eta_Z, \eta_S}(Z, S) \;:=\; \big\|(\hat\Sigma_{ZZ} + \eta_Z I)^{-1/2}\,\hat\Sigma_{ZS}\,(\hat\Sigma_{SS} + \eta_S I)^{-1/2}\big\|_{\mathrm{op}}.
\]

\begin{proposition}[Regularized empirical sandwich bound]
\label{prop:khgr_reg_sandwich}
Let $k_\cz, k_\cs$ be characteristic kernels and let $\eta_Z, \eta_S > 0$. Then
\begin{equation}
\label{eq:khgr_reg_bound}
    \widehat{\mathrm{kHGR}}_{\eta_Z, \eta_S}(Z, S)^2 \;\leq\; \frac{\widehat{\hsic}(Z, S)}{\eta_Z\,\eta_S}.
\end{equation}
\end{proposition}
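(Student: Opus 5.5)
The plan is to bound the operator norm by the Hilbert--Schmidt norm and then strip off the two regularized inverse square roots using operator-norm bounds that depend only on the ridge parameters. Write $A:=(\hat\Sigma_{ZZ}+\eta_Z I)^{-1/2}$ and $B:=(\hat\Sigma_{SS}+\eta_S I)^{-1/2}$.

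First we check that $A$ and $B$ are well defined and bounded. The empirical covariance operators $\hat\Sigma_{ZZ}$ and $\hat\Sigma_{SS}$ are finite-rank, self-adjoint and positive semidefinite, since each is an average of rank-one operators $v\otimes v$. Hence $\hat\Sigma_{ZZ}+\eta_Z I\succeq \eta_Z I$ is invertible with a bounded inverse. By the spectral theorem, its inverse square root is self-adjoint and satisfies
\[
\|A\|_{\mathrm{op}}\le \eta_Z^{-1/2},
\]
because all eigenvalues of $\hat\Sigma_{ZZ}+\eta_Z I$ are at least $\eta_Z$. Likewise $\|B\|_{\mathrm{op}}\le\eta_S^{-1/2}$.

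Second, we use the chain of operator inequalities
\[
\|A\hat\Sigma_{ZS}B\|_{\mathrm{op}}\le\|A\|_{\mathrm{op}}\,\|\hat\Sigma_{ZS}\|_{\mathrm{op}}\,\|B\|_{\mathrm{op}}\le \frac{\|\hat\Sigma_{ZS}\|_{\mathrm{HS}}}{\sqrt{\eta_Z\eta_S}}.
\]
The first inequality is submultiplicativity of the operator norm. The second uses the bounds on $A$ and $B$ together with $\|T\|_{\mathrm{op}}\le\|T\|_{\mathrm{HS}}$, which holds since the top singular value is at most the $\ell^2$ norm of all singular values. Squaring gives
\[
\widehat{\mathrm{kHGR}}_{\eta_Z,\eta_S}(Z,S)^2\le \frac{\|\hat\Sigma_{ZS}\|_{\mathrm{HS}}^2}{\eta_Z\eta_S}.
\]

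Third, we invoke the identity $\widehat{\hsic}(Z,S)=\|\hat\Sigma_{ZS}\|_{\mathrm{HS}}^2$ stated just before the proposition. To verify it, expand
\[
\|\hat\Sigma_{ZS}\|_{\mathrm{HS}}^2=n^{-2}\sum_{i,j}\langle \tilde k_\cz(Z_i),\tilde k_\cz(Z_j)\rangle\,\langle \tilde k_\cs(S_i),\tilde k_\cs(S_j)\rangle,
\]
where tildes denote features centered by the empirical mean embeddings. The two factors are the entries $\widetilde K_{ij}$ and $\widetilde L_{ij}$, so the sum equals $n^{-2}\operatorname{tr}(\widetilde K\widetilde L)$, which is~\eqref{eq:hsic_empirical}. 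This is the step we expect to need the most care, because it requires the inner-product identity $\langle a\otimes b,a'\otimes b'\rangle_{\mathrm{HS}}=\langle a,a'\rangle\langle b,b'\rangle$ for rank-one operators and the centering bookkeeping. Both are routine, and the characteristic assumption plays no role in the argument.
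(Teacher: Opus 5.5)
Your proposal is correct and follows the paper's proof essentially step for step. You bound the two regularized inverse square roots by $\eta_Z^{-1/2}$ and $\eta_S^{-1/2}$ in operator norm, pass from operator norm to Hilbert--Schmidt norm on $\hat\Sigma_{ZS}$, and conclude with $\widehat{\hsic}(Z,S)=\|\hat\Sigma_{ZS}\|_{\mathrm{HS}}^2$; your explicit check of that identity through the centered Gram entries $\widetilde K_{ij}\widetilde L_{ij}$, and your remark that the characteristic assumption is never used, are both accurate additions to what the paper leaves as a direct expansion.
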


\begin{proof}
Set $A = (\hat\Sigma_{ZZ} + \eta_Z I)^{-1/2}$, $B = \hat\Sigma_{ZS}$, $C = (\hat\Sigma_{SS} + \eta_S I)^{-1/2}$. Each operator $\hat\Sigma_{ZZ}, \hat\Sigma_{SS}$ is positive semi-definite, so $\hat\Sigma_{ZZ} + \eta_Z I \succeq \eta_Z I$ and $\hat\Sigma_{SS} + \eta_S I \succeq \eta_S I$. Consequently
\[
    \|A\|_{\mathrm{op}} \;\leq\; 1/\sqrt{\eta_Z}, \qquad \|C\|_{\mathrm{op}} \;\leq\; 1/\sqrt{\eta_S}.
\]
By the operator-norm $\leq$ Hilbert--Schmidt-norm inequality and the standard submultiplicative bound $\|A B C\|_{\mathrm{op}} \leq \|A\|_{\mathrm{op}}\,\|B\|_{\mathrm{HS}}\,\|C\|_{\mathrm{op}}$ (valid because the Hilbert--Schmidt norm dominates the operator norm),
\[
    \widehat{\mathrm{kHGR}}_{\eta_Z, \eta_S}(Z, S)
    = \|A B C\|_{\mathrm{op}}
    \;\leq\; \|A\|_{\mathrm{op}}\,\|B\|_{\mathrm{HS}}\,\|C\|_{\mathrm{op}}
    \;\leq\; \frac{\|\hat\Sigma_{ZS}\|_{\mathrm{HS}}}{\sqrt{\eta_Z\,\eta_S}}.
\]
Squaring and using $\widehat{\hsic}(Z, S) = \|\hat\Sigma_{ZS}\|_{\mathrm{HS}}^2$ gives~\eqref{eq:khgr_reg_bound}.
\end{proof}

The proposition makes no claim about the unregularized limit $\eta_Z, \eta_S \downarrow 0$, for which the spectrum of the empirical covariance operators must be controlled separately. Population-level zero-equivalence of HSIC and kHGR under characteristic kernels is independent of this regularization choice.

%======================================================================
\section{Experimental Details}
\label{app:experiments}
%======================================================================

\subsection{Datasets}

\paragraph{Adult Income}
The UCI Adult dataset contains $n \approx 30{,}000$ records (after removing missing values).
Features include numeric variables and one-hot encoded categoricals ($d = 107$ after encoding).
The target is whether income exceeds \$50K/year.
The sensitive attribute is age (continuous, range 17--90).
All features and the sensitive attribute are min-max scaled to $[0, 1]$.

\paragraph{ACS Income}
The ACS Income dataset \citep{ding2021retiring} is drawn from the 2018 American Community Survey (California) via the folktables package.
We subsample $n = 20{,}000$ individuals.
The target is whether income exceeds \$50K/year.
The sensitive attribute is age (continuous, range 17--94).
Features include 9 demographic and employment variables.

\paragraph{MEPS}
The Medical Expenditure Panel Survey (Panel 19, 2015) \citep{romano2020achieving} contains $n \approx 13{,}000$ respondents after removing records with missing values (negative sentinel codes).
The target is healthcare utilization $\geq 10$ visits (binary classification).
The sensitive attribute is age (continuous, range 17--85).
Features include 19 variables covering demographics, health conditions, insurance status, and income.

\paragraph{Communities \& Crime}
The UCI Communities \& Crime dataset contains $n \approx 2{,}000$ communities.
The target is violent crimes per population (regression).
The sensitive attribute is racial composition (continuous).
Columns with ${>}20\%$ missing values are dropped; remaining missing values are imputed with column medians.

\paragraph{COMPAS}
The ProPublica COMPAS dataset contains $n \approx 6{,}000$ defendants after standard filtering \citep{angwin2016machine}.
The target is two-year recidivism (binary classification).
The sensitive attribute is age (continuous).
Features include prior counts, charge degree (one-hot), sex, and race (one-hot), yielding $d = 16$ features.

\subsection{Model Architecture}

Following \citet{kong2025fair}, the encoder $h$ is a 2-layer MLP: $d_X \to 50 \to 50$ with SELU activations (hidden dimension $H = 50$, output representation dimension $Z = 50$).
The prediction head $f$ is a linear layer: $50 \to 1$ (no hidden layers).
This architecture matches FREM exactly, ensuring a fair comparison.
The adversary in LAFTR and ADV uses the same 2-layer structure ($50 \to 50 \to n_{\mathrm{out}}$) with SELU activation.

\subsection{Training Details}

All models are trained for $200$ epochs with the Adam optimizer (learning rate $10^{-3}$, default $\beta_1 = 0.9$, $\beta_2 = 0.999$, $\varepsilon = 10^{-8}$, weight decay $0$), batch size $256$, and the standard PyTorch default initialization.
For FRHSIC, kernel bandwidths are set via the median heuristic: $\sigma_\mathcal{S}$ is computed once on the training data, and $\sigma_\mathcal{Z}$ is recomputed every 20 epochs on the current mini-batch of encoded representations.
For baselines (FREM, Reg-GDP, etc.), we use fixed $\sigma_Z = 1.0$ for the $Z$-kernel after min-max scaling to $[0, 1]$, following \citet{kong2025fair}; the additional $S$-smoothing bandwidths used by FREM ($\gamma = 0.5$) and Reg-GDP (Nadaraya--Watson bandwidth $0.2$) are taken from their source publications and described per-method below.
We do not modify per-method bandwidth conventions, ensuring each method is evaluated as published.

\paragraph{Train/validation/test splits}
For all real datasets we use an $80\%/20\%$ random train/test split. Hyperparameter selection (i.e., choosing $\lambda$ along the Pareto frontier) is performed using the same training split as the model fit; for each method and each dataset we sweep $\lambda$ over the full grid and report results across the resulting Pareto frontier (no separate validation split is used to pick a single $\lambda$, since the goal of the comparison is to characterize the fairness--accuracy tradeoff curve, not to choose one operating point). We repeat 5 times with different random splits and report mean $\pm$ standard deviation on the held-out test set.

\paragraph{Regularization-strength ($\lambda$) grids}
For all methods on all real datasets, we sweep
\[
    \lambda \in \{0.1,\; 1.0,\; 10.0,\; 100.0,\; 500.0\}.
\]
For the Equal Opportunity and multi-sensitive-attribute experiments (Appendix~\ref{app:additional_results}) the grid is $\lambda \in \{0.1, 1.0, 10.0, 50.0\}$.

\subsection{Baseline Implementations and Hyperparameter Grids}

\paragraph{Reg-GDP}
We implement the GDP regularizer from \citet{jiang2022generalized} using kernel-smoothed conditional expectations on a grid of 30 evenly-spaced points over the range of $S$. Reg-GDP uses two distinct bandwidths: a Gaussian $Z$-kernel bandwidth $\sigma_Z = 1.0$ (matching the FREM convention of \citet{kong2025fair} on min-max-scaled features) and a Nadaraya--Watson smoothing bandwidth $0.2$ on $S$ for the conditional expectation; both follow the values reported by \citet{jiang2022generalized}.

\paragraph{FREM}
We implement the weighted EIPM estimator from \citet{kong2025fair} using the MMD discriminator.
Kernel-smoothed weights are computed with a Gaussian kernel on $S$ with bandwidth $\gamma = 0.5$ (following \citet{kong2025fair}).
For computational tractability we subsample $n_{\mathrm{anchor}} = 32$ anchor points per batch for the EIPM computation; see Appendix~\ref{app:frem_anchors} below for details.

\paragraph{LAFTR}
We bin the continuous sensitive attribute into 4 quartiles and apply the adversarial FRL method of \citet{madras2018learning}.
The adversary is trained with alternating gradient steps (one adversary step per encoder step).
The adversary is a 2-layer MLP ($50 \to 50 \to 4$) with SELU activations.

\paragraph{ADV}
The continuous-target adversary of \citet{grari2022fairness}, with the same 2-layer architecture as LAFTR but with a single scalar output (predicting $S$); trained with one adversary step per encoder step.

\paragraph{MMD-binned}
Bins continuous $S$ into $n_{\mathrm{bins}} = 10$ equal-width bins and applies pairwise MMD across bins; bin centers serve as the smoothing grid.

\paragraph{dCor}
We use the unbiased empirical distance correlation of \citet{szekely2007measuring} as the fairness penalty; no further hyperparameters beyond $\lambda$.

\subsection{FREM 32-anchor subsampling}
\label{app:frem_anchors}

{\sloppy The FREM EIPM objective of \citet{kong2025fair} computes, at each mini-batch, an integral over $s$ of a weighted MMD term:\par}
\[
    \widehat{\mathrm{EIPM}}_n(Z; S) = \int_\cs \widehat{\mathrm{MMD}}^2\big(\widehat{P}_{Z\mid S=s},\, \widehat{P}_Z\big)\, d\widehat{P}_S(s).
\]
Estimating this expectation by Monte Carlo with all $n$ training points as anchors would require $O(n^3)$ work per batch (an $n \times n$ kernel matrix evaluated at each of $n$ anchors). For computational tractability we instead draw $n_{\mathrm{anchor}} = 32$ uniformly-random anchor points per mini-batch and average the MMD terms over those anchors. With batch size $256$ this is a $32/256 = 12.5\%$ Monte Carlo subsample and reduces the per-batch cost to $O(n_{\mathrm{anchor}} \cdot n^2)$. The number $32$ was chosen to match the FREM implementation in our codebase (see \texttt{experiments/real\_data.py}, function \texttt{train\_frem}); we report results with $n_{\mathrm{anchor}} = 32$ throughout.

\paragraph{FREM anchor-subsampling sensitivity (limitation)}
The 32-anchor subsampling we use for FREM is a computational approximation; an anchor-count sweep is the natural next step to characterize the sensitivity of FREM's reported Pareto frontier to the anchor budget, and we plan to include such a sweep in a follow-up version of this work. The per-epoch wall-clock cost grows linearly with the anchor count, so the principal cost of the sweep is compute rather than implementation.

\paragraph{Baseline bandwidth sensitivity (limitation)}
The bandwidths reported in Appendix~\ref{app:experiments} (FRHSIC median heuristic, FREM $\gamma = 0.5$ and $\sigma_Z = 1.0$, Reg-GDP Nadaraya--Watson bandwidth $0.2$ and $\sigma_Z = 1.0$) are the values reported in the source publications. We retain published bandwidths in this paper to avoid favoring any particular method by retuning, but a full bandwidth-sensitivity sweep for the FREM and Reg-GDP baselines is the natural next step, and we plan to include such a sweep in a follow-up version of this work.

\paragraph{Paired significance test (limitation)}
We report mean and standard deviation across $n=5$ random splits in Table~\ref{tab:real_data} but do not run a formal paired test (e.g., Wilcoxon signed-rank), since $5$ paired observations is too few for reliable inference at conventional significance levels. Increasing the number of repeats to enable a credible paired comparison is the natural next step and we plan to include such an analysis in a follow-up version of this work.

\subsection{Reproducibility checklist}
\label{app:repro}

We summarize the elements needed to reproduce the experiments end to end.

\begin{itemize}
\item \textbf{Random seed.} A single base seed \verb|SEED = 42| is set at the top of each experiment script for \verb|numpy|, \verb|torch|, and the train/test split RNG. The 5 repeats use seeds $\verb|SEED|, \verb|SEED|+1, \ldots, \verb|SEED|+4$. See \verb|experiments/real_data.py:27| and \verb|experiments/utils.py|.
\item \textbf{Train/validation/test split.} $80\%/20\%$ random train/test split; no separate validation split (rationale above). Repeated 5 times with different splits.
\item \textbf{Preprocessing.} Min--max scaling for the features and the sensitive attribute is fit on the \emph{training split only} and then applied to the held-out test split (\verb|experiments/real_data.py|, \verb|run_single_dataset|). All cells of Table~\ref{tab:real_data} in the main text are reported under this leakage-free scaling protocol; the resulting numbers agree with our earlier leakage-affected numbers to within standard deviations on essentially every cell, with the most visible shifts in the MMD and ADV rows of Crime (where the small sample size $n \approx 2000$ amplifies any scaling difference). Categoricals are one-hot encoded; rows with missing values are dropped except in Communities \& Crime, where columns with ${>}20\%$ missing are dropped and remaining missing values are imputed with column medians.
\item \textbf{Model architecture.} Encoder $h$: 2-layer MLP $d_X \to 50 \to 50$ with SELU activations. Predictor $f$: linear $50 \to 1$. Adversaries (LAFTR / ADV / MMD-binned): 2-layer MLP $50 \to 50 \to n_{\mathrm{out}}$ with SELU.
\item \textbf{Optimizer.} Adam, learning rate $10^{-3}$, $\beta_1=0.9, \beta_2=0.999$, $\varepsilon=10^{-8}$, weight decay $0$. PyTorch defaults.
\item \textbf{Batch size and epochs.} Batch size $256$, $200$ epochs.
\item \textbf{Kernel bandwidths.} FRHSIC: median heuristic for $\sigma_\cs$ (fixed at start of training, on training data) and for $\sigma_\cz$ (recomputed every 20 epochs on the current mini-batch). Baselines: fixed $\sigma_Z = 1.0$ on min-max-scaled $Z$, FREM $\gamma = 0.5$, Reg-GDP NW bandwidth $0.2$.
\item \textbf{$\lambda$ grids.} Main experiments: $\lambda \in \{0.1, 1.0, 10.0, 100.0, 500.0\}$. EO and multi-sensitive: $\lambda \in \{0.1, 1.0, 10.0, 50.0\}$.
\item \textbf{Baseline hyperparameter grids.} See Appendix~\ref{app:experiments}, ``Baseline Implementations''; all baselines use the same encoder, predictor, optimizer, batch size, and number of epochs as FRHSIC, varying only the per-method fairness-penalty hyperparameters listed above. The adversary step ratio (LAFTR, ADV) is fixed at $1\!:\!1$.
\item \textbf{FREM anchor count.} $n_{\mathrm{anchor}} = 32$ uniformly-sampled anchors per mini-batch; see Appendix~\ref{app:frem_anchors}.
\item \textbf{Hardware.} All experiments were run on a single workstation with one NVIDIA RTX-class GPU; the experiment code falls back to CPU if no CUDA device is available (see the device selector in \verb|experiments/utils.py|). Runtime numbers in Section~\ref{sec:experiments} are wall-clock per-epoch on this single device, averaged over 5 epochs after a warm-up epoch. Training the full $\lambda$-sweep across all methods and datasets fits within $\sim$24 GPU-hours.
\item \textbf{Software.} Python 3.11, PyTorch 2.x, NumPy 1.26, scikit-learn 1.3, folktables (for ACS Income). See \verb|experiments/README.md| for the exact versions used at submission time.
\item \textbf{Code availability.} All scripts to reproduce the figures and tables are in the \verb|experiments/| directory of the public code repository (\url{https://github.com/Yijin911/FRHSIC}); each main-paper figure/table is generated by a single named script (e.g.\ \verb|run_single_method.py|, \verb|regen_pareto_mi.py|, \verb|runtime.py|, \verb|convergence.py|).
\end{itemize}

\subsection{Pareto summary: lowest GDP within 1\% of the unfair-baseline performance}
\label{app:pareto_summary}

For a single-number summary of the fairness--accuracy tradeoff that complements the matched-performance comparison in Section~\ref{sec:experiments}, Table~\ref{tab:pareto_summary} reports, for each (method, dataset) pair, the operating point on the $\lambda$-sweep that attains the \emph{lowest GDP} subject to the constraint that the prediction performance is within $1\%$ of the unfair baseline (Acc $\geq 0.99 \cdot \mathrm{Acc}_{\mathrm{Unfair}}$ for classification; MSE $\leq 1.01 \cdot \mathrm{MSE}_{\mathrm{Unfair}}$ for regression). Entries are read off the Pareto sweep underlying Figure~\ref{fig:pareto}. ``---'' denotes that no $\lambda$ in our grid satisfies the constraint for that method on that dataset. Values are means across $5$ random splits.

\begin{table}[!htbp]
\centering
\caption{Pareto summary: $(\text{Acc / MSE}, \text{GDP})$ at the lowest-GDP operating point with prediction performance within $1\%$ of the unfair baseline. Adult/ACS/MEPS/COMPAS report Acc; Crime reports $\mathrm{MSE} \times 10^2$. ``---'' indicates no $\lambda$ in the sweep satisfies the constraint. Read alongside Figure~\ref{fig:pareto}, since a tight $1\%$ band can exclude operating points that are visually close on the frontier.}
\label{tab:pareto_summary}
\renewcommand{\arraystretch}{0.95}
\small
\setlength{\tabcolsep}{4pt}
\begin{tabular}{lccccc}
\toprule
Method & Adult & ACS Income & MEPS & Crime & COMPAS \\
\midrule
Unfair (ref.)   & ($.849$, $.749$) & ($.790$, $.514$) & ($.804$, $.626$) & ($1.91$, $.029$) & ($.654$, $.123$) \\
\midrule
FRHSIC (Ours)   & ($.845$, $.232$) & ($.792$, $.399$) & ($.803$, $.517$) & ($1.91$, $.029$) & ($.655$, $.107$) \\
FREM            & ($.845$, $.167$) & ($.787$, $.274$) & ($.800$, $.317$) & ($1.87$, $.023$) & ($.648$, $.101$) \\
Reg-GDP         & ($.842$, $.082$) & ($.787$, $.292$) & ($.801$, $.370$) & ($1.91$, $.025$) & ($.650$, $.079$) \\
ADV             & ($.841$, $.466$) & ($.785$, $.458$) & ($.803$, $.480$) & ($1.92$, $.029$) & ($.649$, $.095$) \\
MMD (binned)    & ($.847$, $.126$) & ($.783$, $.157$) & ($.800$, $.407$) & ---              & ($.651$, $.082$) \\
LAFTR (binned)  & ($.842$, $.385$) & ($.787$, $.474$) & ($.803$, $.530$) & ($1.89$, $.029$) & ($.655$, $.122$) \\
dCor            & ($.846$, $.194$) & ($.783$, $.203$) & ($.800$, $.487$) & ($1.92$, $.023$) & ($.653$, $.091$) \\
\bottomrule
\end{tabular}
\end{table}

\paragraph{Caveats}
The 1\% accuracy band is intentionally tight, and on some datasets (notably COMPAS, where multiple methods cluster near the unfair-baseline accuracy) the band excludes operating points that are visually close on the Pareto frontier (Figure~\ref{fig:pareto}). Reg-GDP can drive GDP to near zero on Adult and ACS, but only at accuracy losses well outside the 1\% band; those points are visible on Figure~\ref{fig:pareto} but do not appear in the table above. The single-number summary should therefore be read alongside the full Pareto curves rather than as a standalone ranking.

%======================================================================
\section{Additional Experimental Results}
\label{app:additional_results}
%======================================================================

\subsection{Synthetic validation of the conditional-gap bound}
\label{app:synthetic_validation}

On a controlled synthetic classification setup, enforcing HSIC drives GDP monotonically to zero at stable accuracy, confirming the conditional-gap bound of Theorem~\ref{thm:hsic_gdp}.
We generate $n = 5000$ samples with $S \sim \mathrm{Uniform}(0, 1)$, $X = (S + \epsilon_1, \epsilon_2)$ where $\epsilon_1 \sim \mathcal{N}(0, 0.09)$, $\epsilon_2 \sim \mathcal{N}(0, 1)$, and $Y \sim \mathrm{Bernoulli}(1/(1+e^{-X_1}))$.
The encoder $h$ is a 2-layer MLP (hidden dimension 64, representation dimension 8) and the prediction head $f$ is linear; a smaller architecture than the real-data experiments isolates the effect of the HSIC penalty in a controlled setting.
We train with objective $\mathcal{L}_{\mathrm{CE}}(f \circ h) + \lambda \cdot \widehat{\mathrm{HSIC}}(Z, S)$ for $\lambda \in \{0, 0.1, 0.5, 1, 5, 10, 50\}$.
As $\lambda$ increases, GDP decreases monotonically toward zero (Figure~\ref{fig:synthetic}a), GDP and HSIC decrease together in a monotone relationship consistent with the upper bound in Theorem~\ref{thm:hsic_gdp} (Figure~\ref{fig:synthetic}c), and accuracy remains stable (around $0.62$), indicating that the sensitive attribute contributes primarily to unfair discrimination rather than useful predictive signal in this setting.

\begin{figure}[t]
    \centering
    \includegraphics[width=\textwidth]{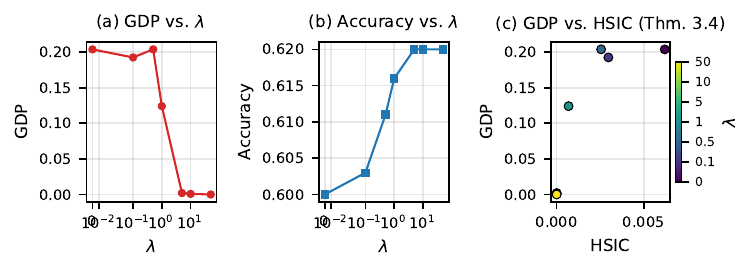}
    \caption{Synthetic validation. (a) GDP decreases as $\lambda$ increases. (b) Accuracy remains stable across regularization strengths. (c) GDP vs.\ HSIC shows a monotone relationship, consistent with the conditional-gap-control interpretation of Theorem~\ref{thm:hsic_gdp} (the plot displays monotonic dependence, not the LHS/RHS of the bound).}
    \label{fig:synthetic}
\end{figure}

\subsection{Equal Opportunity and multiple sensitive attributes}
\label{sec:eo}

The HSIC penalty extends to Equal Opportunity \citep{madras2018learning}, which requires fairness conditional on the positive outcome $Y=1$, by restricting the estimator to the $n_1$ samples with $Y_i=1$:
\[
    \widehat{\mathrm{HSIC}}_{\mathrm{EO}}(Z, S) = \frac{1}{n_1^2}\,\mathrm{tr}(K_1 H_1 L_1 H_1),
\]
where $K_1, L_1, H_1$ are the kernel and centering matrices restricted to that subset. It extends to multiple continuous sensitive attributes by taking a product kernel on $\mathcal S_1\times\mathcal S_2$ (Appendix~\ref{app:multi_s}).

We evaluate the EO extension on Adult and COMPAS, comparing FRHSIC-DP (HSIC on all samples) to FRHSIC-EO (HSIC on $Y=1$ subset only).
Table~\ref{tab:eo} reports accuracy and EO-GDP (GDP restricted to the $Y=1$ subset) across $\lambda$ values.
On COMPAS, the DP variant achieves comparable or better EO-GDP than the EO variant at similar accuracy across all $\lambda$ values.
On Adult, the EO variant achieves slightly lower EO-GDP at high $\lambda$ (0.352 vs.\ 0.418 at $\lambda = 50$), but the DP variant simultaneously reduces full GDP from 0.889 to 0.240---a broader fairness guarantee.
Overall, enforcing full demographic parity via HSIC provides a strong baseline for Equal Opportunity, with the dedicated EO variant offering marginal gains on larger datasets.

\begin{table}[ht]
\centering
\caption{Equal Opportunity results: FRHSIC-DP vs.\ FRHSIC-EO. EO-GDP is GDP restricted to $Y=1$ samples. FRHSIC-DP achieves comparable EO fairness while also enforcing DP.}
\label{tab:eo}
\begin{tabular}{llccc}
\toprule
Dataset & $\lambda$ & Method & Acc & EO-GDP ($\downarrow$) \\
\midrule
\multirow{8}{*}{Adult}
& \multirow{2}{*}{0.1} & DP & 0.851 & 0.453 \\
& & EO & 0.849 & 0.406 \\
& \multirow{2}{*}{1.0} & DP & 0.846 & 0.403 \\
& & EO & 0.850 & 0.416 \\
& \multirow{2}{*}{10.0} & DP & 0.846 & 0.437 \\
& & EO & 0.850 & 0.392 \\
& \multirow{2}{*}{50.0} & DP & 0.840 & 0.418 \\
& & EO & 0.849 & 0.352 \\
\midrule
\multirow{8}{*}{COMPAS}
& \multirow{2}{*}{0.1} & DP & 0.659 & 0.067 \\
& & EO & 0.662 & 0.070 \\
& \multirow{2}{*}{1.0} & DP & 0.662 & 0.070 \\
& & EO & 0.662 & 0.075 \\
& \multirow{2}{*}{10.0} & DP & 0.657 & 0.064 \\
& & EO & 0.655 & 0.065 \\
& \multirow{2}{*}{50.0} & DP & 0.599 & 0.033 \\
& & EO & 0.659 & 0.076 \\
\bottomrule
\end{tabular}
\end{table}

\subsection{Multiple Sensitive Attributes}
\label{app:multi_s}

We test FRHSIC with two continuous sensitive attributes (age and hours-per-week) on Adult.
Table~\ref{tab:multi_s} compares two approaches: (1) a product kernel on joint $(S_1, S_2)$ (FRHSIC-Joint), and (2) a sum of per-attribute HSIC terms (FRHSIC-Sum).
Both approaches reduce GDP with respect to each attribute as $\lambda$ increases while preserving accuracy, confirming that HSIC naturally extends to the multi-attribute setting without modification.
The Joint approach achieves slightly lower GDP at $\lambda = 50$ (Age: $0.306$ vs.\ $0.678$; Hours: $0.517$ vs.\ $0.759$), likely because the product kernel captures cross-attribute structure.
Figure~\ref{fig:multi_s} visualizes the Pareto frontiers.

\begin{table}[ht]
\centering
\caption{Multiple sensitive attributes on Adult (age + hours-per-week). GDP is reported separately for each attribute. Both product-kernel (Joint) and sum-of-HSIC (Sum) approaches reduce GDP for both attributes as $\lambda$ increases.}
\label{tab:multi_s}
\begin{tabular}{lccc}
\toprule
Method ($\lambda$) & Acc & GDP(Age) ($\downarrow$) & GDP(Hours) ($\downarrow$) \\
\midrule
Unfair & 0.843 & 0.837 & 1.142 \\
FRHSIC-Joint ($\lambda=1$) & 0.844 & 0.779 & 1.029 \\
FRHSIC-Joint ($\lambda=10$) & 0.846 & 0.553 & 0.857 \\
FRHSIC-Joint ($\lambda=50$) & 0.851 & 0.306 & 0.517 \\
FRHSIC-Sum ($\lambda=1$) & 0.847 & 0.753 & 1.055 \\
FRHSIC-Sum ($\lambda=10$) & 0.848 & 0.485 & 0.826 \\
FRHSIC-Sum ($\lambda=50$) & 0.849 & 0.678 & 0.759 \\
\bottomrule
\end{tabular}
\end{table}

\begin{figure}[ht]
    \centering
    \includegraphics[width=0.9\textwidth]{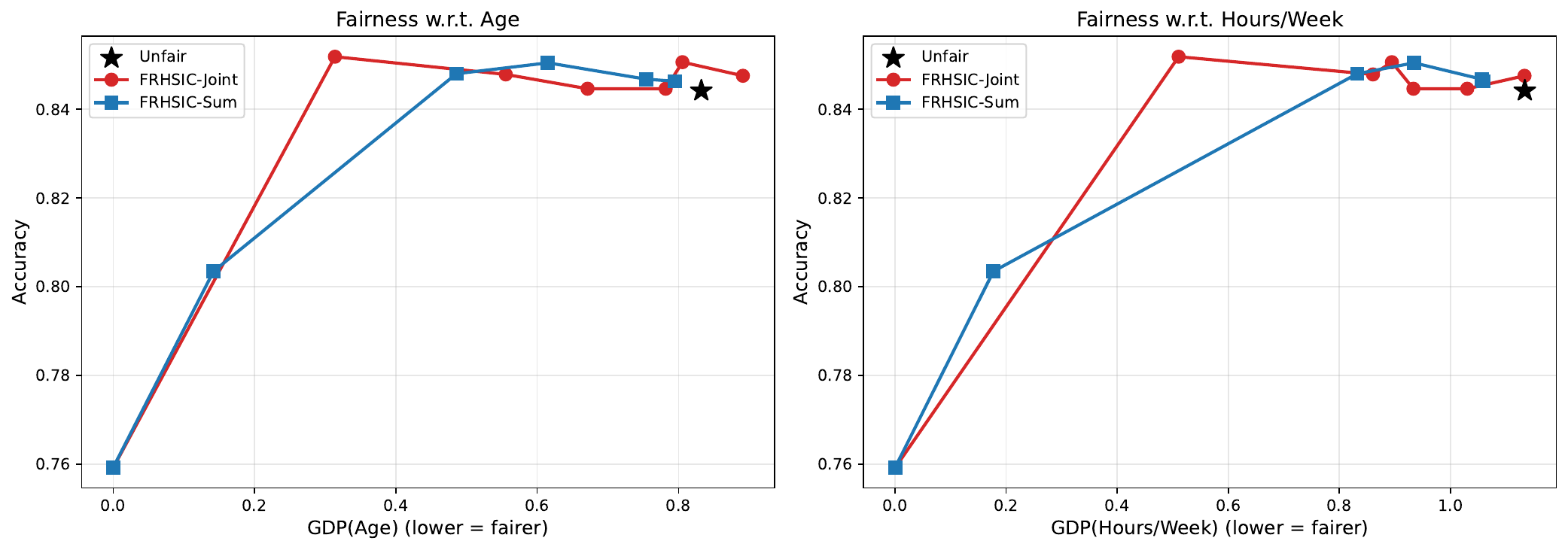}
    \caption{Multiple sensitive attributes: fairness--accuracy Pareto frontiers for FRHSIC-Joint (product kernel) and FRHSIC-Sum (sum of per-attribute HSIC), evaluated on Adult with age and hours-per-week as sensitive attributes.}
    \label{fig:multi_s}
\end{figure}

%======================================================================
\section{Structural Causal Models}
\label{app:scm}
%======================================================================

\subsection{Structural Causal Models and Counterfactual Fairness}
\label{sec:scm_prelim}

Following \citet{pearl2009causality} and \citet{kusner2017counterfactual}, a Structural Causal Model (SCM) over $(X, S, Y)$ specifies structural equations
\[
    S = U_S, \qquad X = f_X(S, U_X), \qquad Y = f_Y(X, S, U_Y),
\]
with jointly independent exogenous noise $U_X, U_S, U_Y$. The counterfactual prediction $\widehat Y_{S \leftarrow s'}$ is the value of $\widehat Y$ obtained by replacing $S = U_S$ with $S = s'$ in the structural equations and propagating downstream. A predictor $\widehat Y$ is \emph{counterfactually fair} \citep{kusner2017counterfactual} if, for every observable $(X = x, S = s)$ and every alternative $s'$,
\[
    \widehat Y_{S \leftarrow s'} = \widehat Y_{S \leftarrow s} \quad \text{a.s.\ given } X = x, S = s.
\]

%======================================================================
\section{Additional Theoretical Augmentations}
\label{app:augmentations}
%======================================================================

This section collects two augmentations referenced from the main text: a necessity-for-counterfactual-fairness result and a uniform-concentration bound that bridges training-sample HSIC to its population counterpart.

\subsection{Necessity for Counterfactual Fairness}
\label{sec:causal}

The joint-distribution route to independence also clarifies the relationship between HSIC-fairness and the causal-fairness literature.

\begin{proposition}[HSIC-fairness as a necessary condition for counterfactual fairness]
\label{prop:causal_necessary}
Assume the SCM in Appendix~\ref{sec:scm_prelim}. Suppose the encoder $h$ is a deterministic function so that $Z = h(X) = h(f_X(S, U_X))$. If the prediction $\widehat Y = f(Z)$ is counterfactually fair (Appendix~\ref{sec:scm_prelim}) for every $f$ in a class $\mathcal{F}$ rich enough to identify equality of distributions in $\cz$ (e.g., $\mathcal{F}$ is the unit ball of an RKHS with characteristic kernel $k_\cz$), then $\hsic(Z, S) = 0$.
\end{proposition}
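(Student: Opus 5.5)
The plan is to show that counterfactual fairness for every head in the characteristic class forces $P_{Z\mid S=s}=P_Z$ for $P_S$-a.e.\ $s$. Condition (3) of Proposition~\ref{thm:subgroup_equivalence}, specialized to the product RKHS witness class, then gives $\hsic(Z,S)=0$; this is the same route as Appendix~\ref{app:pf_cor_hsic_gdp_population}.

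\emph{Step 1: interventional laws.} Write $Z_{S\leftarrow s}:=h(f_X(s,U_X))$ for the counterfactual representation. Because $U_S$ and $U_X$ are independent and $S=U_S$, the law of $Z_{S\leftarrow s}$ is the law of $h(f_X(s,U_X))$ with $U_X$ drawn from its marginal. This law coincides with $P_{Z\mid S=s}$ for $P_S$-a.e.\ $s$, since conditioning on $S=s$ does not alter the law of $U_X$. Call this common law $Q_s$.

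\emph{Step 2: from counterfactual fairness to equality of laws.} Fix $f\in\mathcal F$. Counterfactual fairness gives $f(Z_{S\leftarrow s'})=f(Z_{S\leftarrow s})$ almost surely, given $X=x,S=s$, for every $s'$. Now integrate over the conditional law of $U_X$ given $(X,S)$, and then over $(X,S)$. This abduction step uses the fact that the counterfactual is a fixed measurable function of $(s',U_X)$. The result is that $f(h(f_X(s',U_X)))=f(h(f_X(s,U_X)))$ almost surely under the joint law of $(U_X,S)$. Taking expectations over $U_X$ then yields $\int f\,dQ_{s'}=\int f\,dQ_{S}$ for $P_S$-a.e.\ realized $S$ and every $s'$.

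Averaging over $s'\sim P_S$ gives
\[
\mathbb E[f(Z)\mid S=s]=\int f\,dQ_s=\int f\,dP_Z=\mathbb E[f(Z)]
\quad\text{for }P_S\text{-a.e. }s.
\]
If $\mathcal F$ is the unit ball of a characteristic RKHS, this says $\mu_{Z\mid S=s}=\mu_Z$. By linearity the statement holds for all $f\in\cf_\cz$. To make the null set independent of $f$, apply it to a countable dense subset of the separable RKHS and pass to the limit using continuity of $f\mapsto\langle f,\mu\rangle$. Characteristicness then gives $P_{Z\mid S=s}=P_Z$ for $P_S$-a.e.\ $s$, which is condition (3).

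\emph{Step 3: conclude.} By Proposition~\ref{thm:subgroup_equivalence}, $P_{Z,S}=P_Z\otimes P_S$. Hence $\mu_{Z,S}=\mu_Z\otimes\mu_S$ and $\hsic(Z,S)=0$. Alternatively, $\Delta(s)=0$ a.e.\ plugged into the identity of Theorem~\ref{thm:hsic_smoothed_conditional_control} gives $\hsic(Z,S)=\|\mathbb E_S[\Delta(S)\otimes k_\cs(S,\cdot)]\|^2=0$ directly.

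\emph{Expected obstacle.} The main difficulty is Step 2. The definition of counterfactual fairness is pointwise in $(x,s)$, and it has to be turned into an equality of marginal interventional laws, with measurability handled carefully. I would first use the fact that, conditional on $(X,S)=(x,s)$, both counterfactuals are deterministic functions of $U_X$ under its posterior. The disintegration theorem then lets the almost-sure equality be integrated to the unconditional statement. If this step proves delicate, a fallback is to restrict to the a.e.\ version of the definition, which is all the argument needs.
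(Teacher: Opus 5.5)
Your proposal is correct and follows the paper's route. Like the paper, you identify the interventional law of $Z_{S\leftarrow s'}$ with $P_{Z\mid S=s'}$ using exogeneity of $S=U_S$, deduce $\mathbb E[f(Z)\mid S=s']=\mathbb E[f(Z)]$ for $P_S$-a.e.\ $s'$, and conclude via Proposition~\ref{thm:subgroup_equivalence}. Your Step~2 adds two details the paper passes over: you derive the equality of expectations from the pointwise definition of counterfactual fairness, and you use characteristicness of $k_\cz$ with a countable dense family to reach condition (3) with an $f$-independent null set. The paper instead cites condition (4), which strictly requires all bounded measurable test functions, not just $f\in\mathcal F$.
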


\begin{proof}
Counterfactual fairness for every $f \in \mathcal{F}$ implies, in particular, that for all $s' \in \mathcal{S}$ in the support of $S$ and all $f \in \mathcal{F}$,
\[
    \mathbb{E}[f(Z_{S \leftarrow s'})] = \mathbb{E}[f(Z)],
\]
where $Z_{S \leftarrow s'} = h(f_X(s', U_X))$. Since $S = U_S$ and $U_X$ are independent under the SCM, and the marginal of $Z_{S \leftarrow s'}$ over $U_X$ has the same distribution as the marginal of $Z$ given $S = s'$ (because intervening with $S = s'$ matches the conditional given $S = s'$ when $S$ is exogenous \citep[Section~3.2.2]{pearl2009causality}), we obtain
\[
    \mathbb{E}[f(Z) \mid S = s'] = \mathbb{E}[f(Z)] \quad \text{for $P_S$-a.e.\ } s'.
\]
By Proposition~\ref{thm:subgroup_equivalence}, this is condition (4), equivalent to $Z\perp S$ and hence, for the characteristic product kernel, to $\hsic(Z, S) = 0$.
\end{proof}

In words, if a representation can support counterfactually fair predictions for every (sufficiently rich) prediction head, the representation must already satisfy the observational-independence condition that HSIC enforces. The converse does not hold: $\hsic(Z, S) = 0$ ensures that the marginal distribution of $f(Z)$ does not depend on $S$, but counterfactual fairness is a strictly pointwise statement about individual-level interventions, which can fail when the structural dependence on $S$ is observationally invisible (e.g., $Z = S \oplus U_X$ with binary $S, U_X$ uniform yields $Z \perp S$ marginally but $Z_{S \leftarrow s'} \neq Z_{S \leftarrow s}$ pointwise). HSIC-fairness should therefore be understood as the strongest \emph{observational} fairness consequence of counterfactual fairness, not as a substitute for it.

\subsection{Uniform Concentration over the Encoder Class}
\label{sec:uniform_concentration}
\label{app:uniform-hsic-application}

The empirical Theorem~\ref{thm:hsic_gdp} controls the empirical conditional-mean gap $\frac{1}{n}\sum_i (P_m\bm{\delta}_f)_i^2$ through $\widehat{\hsic}(h(X), S)$ for a single encoder $h$. In practice, $h$ is learned from the same data, so the bound must hold uniformly over a hypothesis class $\ch$ of encoders. We invoke Corollary~23 of \citet{ni2024uniform} (an HSIC specialization of their Theorem~12), which we adapt to the one-sided encoder class arising in FRHSIC. We do not reprove that result; this appendix only maps FRHSIC onto it.

\paragraph{Mapping FRHSIC to the prior uniform concentration result}
First, $\hsic(Z,S)$ is the squared MMD between $P_{Z,S}$ and $P_Z\otimes P_S$ under the fixed product kernel $k\big((z,s),(z',s')\big)=k_\cz(z,z')\,k_\cs(s,s')$, and $\widehat{\hsic}_n(h(X),S)=n^{-2}\,\mathrm{tr}(K_hHLH)$ is exactly the empirical kernel-based two-sample statistic to which Theorem~12 of \citet{ni2024uniform} applies, with $(Z',S')\sim P_Z\otimes P_S$. Second, the encoder class $\ch$ acts only on the $\cz$-coordinate, inducing the transformed paired samples $\{(h(X_i),S_i)\}$ and the encoder-indexed product-kernel class $\mathcal K_{\ch}$. Third, the boundedness and Lipschitz hypotheses needed by \citet{ni2024uniform} (Assumption~20) are the bounded-kernel and Lipschitz-feature-map conditions stated in Proposition~\ref{prop:uniform_hsic}, which $k_\cz$, $k_\cs$, and $h\in\ch$ satisfy under the standing assumptions. Fourth, bounded linear maps over a compact parameter set, and fixed-architecture MLPs with bounded weights, bounded inputs, and Lipschitz activations, have empirical Gaussian (equivalently, up to logarithmic factors, Rademacher) complexity of order $n^{-1/2}$, so the complexity term in the imported bound is controlled; growing-width or unregularized networks need not satisfy this. Data-dependent kernel selection, which would enlarge $\mathcal K_{\ch}$ to a composite-kernel class, is outside the scope of this statement.

\begin{proposition}[Uniform concentration of $\widehat{\hsic}$ over an encoder class]
\label{prop:uniform_hsic}
Let $\ch$ be a class of measurable encoders $h: \mathcal{X} \to \cz$.
Suppose the kernels $k_\cz, k_\cs$ satisfy Assumption~20 of \citet{ni2024uniform}: they are bounded with $\sup_z k_\cz(z, z) \leq \nu_\cz$ and $\sup_s k_\cs(s, s) \leq \nu_\cs$, and the feature maps $z \mapsto k_\cz(z, \cdot)$ and $s \mapsto k_\cs(s, \cdot)$ are Lipschitz with constants $\ell_\cz, \ell_\cs > 0$, respectively (the constants $\nu_\cz,\nu_\cs,\ell_\cz,\ell_\cs$ are exactly those of Theorem~\ref{thm:uniform-hsic-frhsic}).
Let
\[
    \hat{\mathcal{G}}_n(\ch) := \mathbb{E}_\xi\!\left[\sup_{h \in \ch} \frac{1}{n}\sum_{i=1}^n \langle \xi_i, h(X_i)\rangle \,\bigg|\, \{X_i\}_{i=1}^n\right]
\]
denote the empirical Gaussian complexity of $\ch$ \citep[Equation~3.5]{ni2024uniform}, where $\xi_1, \ldots, \xi_n$ are i.i.d.\ $\mathcal{N}(0, I_{\dim(\cz)})$ random vectors; write $\mathcal{G}_n(\ch) := \mathbb{E}[\hat{\mathcal{G}}_n(\ch)]$ for its expectation, as in the main text.
Then for any $\delta \in (0, 1)$, with probability at least $1 - \delta$,
\begin{equation}
\label{eq:uniform_hsic}
\begin{aligned}
    \sup_{h \in \ch}\,\big|\widehat{\hsic}(h(X), S) &- \hsic(h(X), S)\big|
    \;\leq\;
    8\sqrt{2}\,\nu_\cz \nu_\cs\,\sqrt{\tfrac{\log(2/\delta)}{n}}
    + \frac{4\,\nu_\cz \nu_\cs}{n} \\
    &+ 48\sqrt{\pi}\,\max\{\nu_\cz\, \ell_\cs, \nu_\cs\, \ell_\cz\}\,\mathcal{G}_n(\ch).
\end{aligned}
\end{equation}
\end{proposition}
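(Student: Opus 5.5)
The plan is to treat the statement as a direct instance of the uniform HSIC concentration theorem of \citet{ni2024uniform} (their Corollary~23, specializing Theorem~12), and to spend the work on checking that the FRHSIC setting meets its hypotheses and on tracking the constants. First I write, for each $h\in\ch$, the product kernel $\kappa_h\bigl((x,s),(x',s')\bigr)=k_\cz(h(x),h(x'))\,k_\cs(s,s')$ on $\mathcal X\times\cs$. By Definition~\ref{def:hsic}, $\hsic(h(X),S)$ is the squared MMD under $\kappa_h$ between $P_{X,S}$ and $P_X\otimes P_S$, since pushing forward through $h$ commutes with forming the product measure. The biased estimator~\eqref{eq:hsic_empirical} equals $n^{-2}\operatorname{tr}(K_hHLH)$, which is exactly the $V$-statistic treated by the cited theorem. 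The quantity to bound is therefore the supremum of the deviation over the kernel class $\mathcal K_\ch=\{\kappa_h:h\in\ch\}$.

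Second, I would verify Assumption~20 of \citet{ni2024uniform}. The diagonal satisfies $\kappa_h\bigl((x,s),(x,s)\bigr)\le\nu_\cz\nu_\cs$ uniformly in $h$. The feature maps factor as $\phi_\cz\circ h$ and $\phi_\cs$, with Lipschitz constants $\ell_\cz$ (in the encoder output) and $\ell_\cs$. The complexity term of the cited bound is a Gaussian complexity of the two coordinate classes. The sensitive coordinate is left untransformed (a singleton class), so it contributes zero, and only $\mathcal G_n(\ch)$ remains. It enters with the cross factor $\max\{\nu_\cz\ell_\cs,\nu_\cs\ell_\cz\}$, because each Lipschitz feature map is multiplied by the sup-norm bound of the other factor.

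Third, I would assemble the constants. The bounded-differences (McDiarmid) part produces $8\sqrt2\,\nu_\cz\nu_\cs\sqrt{\log(2/\delta)/n}$. The bias of the $V$-statistic relative to its $U$-version produces $4\nu_\cz\nu_\cs/n$. The symmetrization and Gaussian-comparison step (Slepian/vector contraction applied to the Lipschitz feature maps) produces $48\sqrt\pi\,\max\{\cdot\}\,\mathcal G_n(\ch)$. I would pass from empirical to expected complexity only as the cited result does, which matches~\eqref{eq:uniform_hsic}.

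The main obstacle is confirming that the cited corollary really applies when only one coordinate is indexed by a class, and that its constants are not stated for a symmetric two-sided class. The supremum over the two coordinate complexities could then collapse incorrectly, and the singleton class should be checked to truly contribute zero rather than an $O(n^{-1/2})$ remainder. If the imported statement is two-sided, I would take the $\cs$-class to be $\{\mathrm{id}\}$, note that $\sup$ over a singleton of a centered Gaussian sum has expectation $0$, and rerun the final contraction step by hand to recover the stated coefficient.
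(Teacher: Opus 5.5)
Your proposal matches the paper's proof: it applies Corollary~23 of \citet{ni2024uniform} to the product kernel $k_\cz(h(\cdot),h(\cdot))\,k_\cs$ with the bivariate class $\ch\times\{\mathrm{id}_\cs\}$. It then bounds that class's Gaussian complexity by $\mathcal G_n(\ch)$ via Lemma~19 of the same reference, because the singleton sensitive class has zero Gaussian complexity under the no-absolute-value convention. This is exactly the fallback you describe, so no hand rerun of the contraction step is needed.
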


\begin{proof}
Apply Corollary~23 of \citet{ni2024uniform} to the two-sample statistic $\gamma_k^2$ on the product RKHS with reproducing kernel $k\big((z, s), (z', s')\big) = k_\cz(z, z')\,k_\cs(s, s')$, identified with $\hsic$ via $\hsic(Z, S) = \gamma_k^2((Z, S), (Z', S'))$ for $(Z', S') \sim P_Z \otimes P_S$ \citep[Equation~3]{ni2024uniform}. The bivariate function class $\ch \times \{\mathrm{id}_\cs\}$, where $\mathrm{id}_\cs$ is the identity on $\cs$, satisfies $\mathbb{E}[\mathcal{G}((\ch \times \{\mathrm{id}_\cs\})(\mathbf{X}, \mathbf{S}))] \leq \mathbb{E}[\hat{\mathcal{G}}_n(\ch)]$ by Lemma~19 of \citet{ni2024uniform}, since the second-coordinate class is a singleton: under the no-absolute-value convention adopted by \citet[Section~3.1]{ni2024uniform}, $\mathcal{G}(\{\mathrm{id}_\cs\}(\mathbf{S})) = \mathbb{E}_\xi\!\left[\frac{1}{n}\sum_i \langle \xi_i, S_i\rangle\right] = 0$ by symmetry of the Gaussian distribution. Substituting into the HSIC bound of Corollary~23 yields~\eqref{eq:uniform_hsic}.
\end{proof}

\begin{corollary}[Train-population HSIC bridge]
\label{cor:train_pop_hsic}
Under the assumptions of Proposition~\ref{prop:uniform_hsic}, for any encoder $\hat h \in \ch$ obtained from $n$ i.i.d.\ training samples and any $\delta \in (0, 1)$, with probability at least $1 - \delta$,
\begin{equation}
\label{eq:train_pop_hsic}
    \hsic(\hat h(X), S)
    \;\leq\;
    \widehat{\hsic}(\hat h(X), S) + \varepsilon_n(\delta),
\end{equation}
where
\[
    \varepsilon_n(\delta) := 8\sqrt{2}\,\nu_\cz \nu_\cs \sqrt{\frac{\log(2/\delta)}{n}} + \frac{4\,\nu_\cz \nu_\cs}{n} + 48\sqrt{\pi}\,\max\{\nu_\cz \ell_\cs, \nu_\cs \ell_\cz\}\,\mathcal{G}_n(\ch)
\]
is the uniform-concentration penalty from Proposition~\ref{prop:uniform_hsic}.
\end{corollary}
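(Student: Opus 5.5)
The plan is to derive Corollary~\ref{cor:train_pop_hsic} directly from the uniform deviation bound of Proposition~\ref{prop:uniform_hsic}, by the same argument used for Corollary~\ref{cor:learned-encoder-hsic} in the main text. Let $E_\delta$ denote the event on which~\eqref{eq:uniform_hsic} holds, so that $\Pr(E_\delta)\ge 1-\delta$. The right-hand side of~\eqref{eq:uniform_hsic} coincides term by term with $\varepsilon_n(\delta)$. Hence on $E_\delta$ we have
\[
\bigl|\widehat{\hsic}(h(X),S)-\hsic(h(X),S)\bigr|\le\varepsilon_n(\delta)
\qquad\text{simultaneously for all } h\in\ch .
\]

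The key step is to exploit this simultaneity. The encoder $\hat h$ is a measurable function of the training sample, so a pointwise bound for a fixed $h$ would not apply to it. On $E_\delta$, however, the bound holds for every element of $\ch$ at once, and $\hat h(\omega)\in\ch$ for every realization $\omega$. Evaluating the displayed inequality at $h=\hat h(\omega)$ and dropping the absolute value on the favorable side gives $\hsic(\hat h(X),S)\le\widehat{\hsic}(\hat h(X),S)+\varepsilon_n(\delta)$ on $E_\delta$. The probability bound then follows from $\Pr(E_\delta)\ge1-\delta$.

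One point needs care, and it is the only real obstacle: the meaning of $\hsic(\hat h(X),S)$. It must be read as the population functional $h\mapsto\hsic(h(X),S)$ evaluated at the random element $\hat h$, with a fresh pair $(X,S)$ independent of the sample. It is not an expectation that also averages over the sample-dependence of $\hat h$. I would state this convention explicitly. I would also note that measurability of the event $\{\sup_h|\cdot|\le\varepsilon_n(\delta)\}$ is inherited from Proposition~\ref{prop:uniform_hsic}, or else that the statement is read with outer probability if $\ch$ is uncountable. No further estimates are required.
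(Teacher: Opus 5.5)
Your proposal is correct and follows the paper's proof exactly: invoke the simultaneous bound of Proposition~\ref{prop:uniform_hsic}, specialize it to the data-dependent $\hat h\in\ch$, and drop the absolute value on the favorable side. Your added remarks, reading $\hsic(\hat h(X),S)$ as the population functional evaluated at $\hat h$ and noting measurability (or outer probability) of the uniform event, are sensible clarifications that the paper leaves implicit.
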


\begin{proof}
By Proposition~\ref{prop:uniform_hsic}, with probability at least $1 - \delta$,
\[
    \sup_{h \in \ch}\,\big|\widehat{\hsic}(h(X), S) - \hsic(h(X), S)\big| \leq \varepsilon_n(\delta).
\]
Specializing to the data-dependent $\hat h \in \ch$, $\hsic(\hat h(X), S) - \widehat{\hsic}(\hat h(X), S) \leq \varepsilon_n(\delta)$, which rearranges to~\eqref{eq:train_pop_hsic}.
\end{proof}

\begin{remark}[Train-test fairness bridge]
\label{rem:train_test_bridge}
Corollary~\ref{cor:train_pop_hsic} delivers a train-test fairness guarantee when combined with Theorem~\ref{thm:hsic_gdp} applied on an independent test sample. Theorem~\ref{thm:hsic_gdp} controls the empirical conditional-mean gap $\frac{1}{n}\sum_i (P_m\bm{\delta}_f)_i^2$, which by Jensen's inequality dominates the squared empirical demographic-parity gap $\widehat\Delta_{\mathrm{GDP}}(f)^2$ for any prediction head $f \in \mathcal{F}_{\mathcal{Z}}$. Concretely, given a test sample $\{(\widetilde Z_i, \widetilde S_i)\}_{i=1}^n$ independent of the training sample, Theorem~\ref{thm:hsic_gdp} gives the deterministic bound
\[
    \frac{1}{n}\sum_{i=1}^n \big(P_m^{\mathrm{test}}\,\bm{\delta}_f^{\mathrm{test}}\big)_i^2
    \;\leq\;
    \frac{\|f\|^2_{\cf_\cz}}{\widetilde\lambda_m}\;\widehat{\hsic}^{\mathrm{test}}(\hat h(X), S),
\]
where $\widehat{\hsic}^{\mathrm{test}}$, $\widetilde\lambda_m$, and the projection $P_m^{\mathrm{test}}$ are computed on the test sample. Applying Proposition~\ref{prop:uniform_hsic} once on training and once pointwise on test (the latter being the standard single-encoder $O(n^{-1/2})$ deviation, which is dominated by $\varepsilon_n$),
\[
    \widehat{\hsic}^{\mathrm{test}}(\hat h(X), S)
    \;\leq\;
    \hsic(\hat h(X), S) + \varepsilon_n(\delta)
    \;\leq\;
    \widehat{\hsic}^{\mathrm{train}}(\hat h(X), S) + 2\,\varepsilon_n(\delta)
\]
with probability at least $1 - 2\delta$. Hence, with the same probability,
\[
    \frac{1}{n}\sum_{i=1}^n \big(P_m^{\mathrm{test}}\,\bm{\delta}_f^{\mathrm{test}}\big)_i^2
    \;\leq\;
    \frac{\|f\|^2_{\cf_\cz}}{\widetilde\lambda_m}\,\big[\widehat{\hsic}^{\mathrm{train}}(\hat h(X), S) + 2\,\varepsilon_n(\delta)\big].
\]
This is the generalization statement for the FRHSIC fairness surrogate: training-sample HSIC, plus the uniform-concentration penalty, controls the held-out projected empirical second-moment conditional gap up to the test-sample spectral factor $1/\widetilde\lambda_m$.
\end{remark}

%======================================================================
\section{Additional Experiments}
\label{app:additional_experiments}
%======================================================================

\subsection{Representation-level mutual information}
\label{app:mi_curves}

As a representation-level diagnostic complementing the transfer experiment, FRHSIC tends to attain lower $\mathrm{MI}(Z,S)$ than Reg-GDP at comparable prediction performance, consistent with the learned representation carrying less information about $S$ rather than only making a single trained head fair.
Figure~\ref{fig:mi} reports $\mathrm{MI}(Z,S)$ versus prediction performance across the $\lambda$ sweep.
We treat $\mathrm{MI}(Z,S)$ as a diagnostic, not a primary fairness audit: the KSG estimator can underestimate MI for high-dimensional learned representations whose dependence structure violates the smoothness assumptions underlying nearest-neighbor entropy estimation \citep{gretton2012kernel}, so we interpret it together with HSIC, GDP, and downstream performance.
The adversarial methods illustrate this: ADV and LAFTR can report $\mathrm{MI}\approx0$ while their GDP remains high, which is inconsistent with $\mathrm{MI}=0\Rightarrow Z\perp S\Rightarrow\mathrm{GDP}=0$ and indicates that adversarially trained representations, often concentrated on low-dimensional manifolds, are poorly suited to nearest-neighbor entropy estimation.
COMPAS is omitted because the KSG estimator is unreliable on its low-dimensional, discrete-valued features.

\begin{figure}[t]
    \centering
    \includegraphics[width=\textwidth]{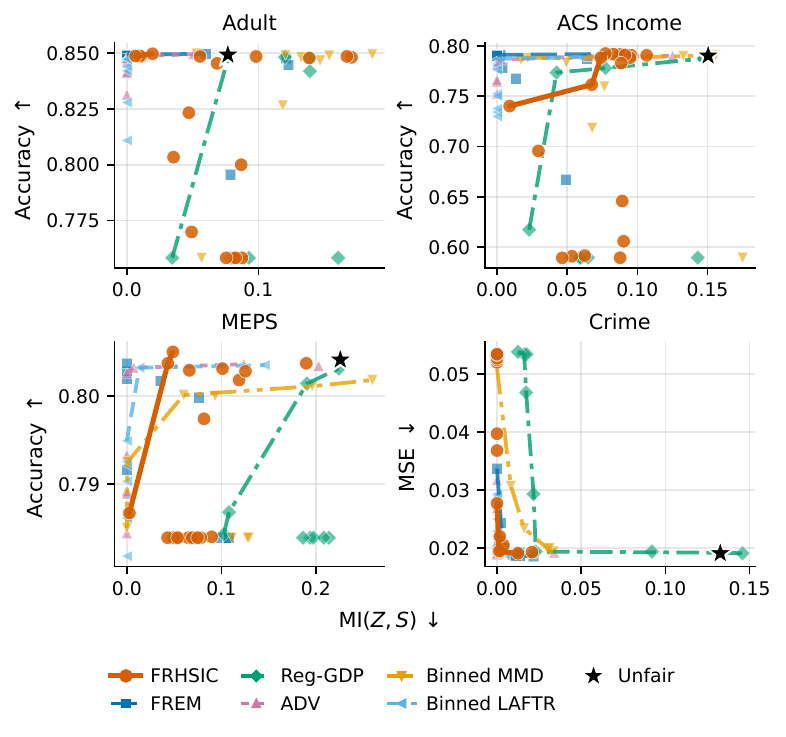}
    \caption{FRHSIC yields representations that leak less about $S$: it tends to attain lower $\mathrm{MI}(Z, S)$ than Reg-GDP at comparable prediction performance, consistent with representation-level rather than single-head fairness. Axes are $\mathrm{MI}(Z, S)$ (lower $=$ fairer) versus performance across $\lambda$.}
    \label{fig:mi}
\end{figure}

\subsection{Transfer Experiment}
\label{sec:transfer}

The key advantage of FRHSIC over Reg-GDP is \emph{transferability}: since HSIC enforces $Z \perp S$, any downstream prediction head built on the learned representation is guaranteed to be fair.
To test this empirically, we train representations using each method (at $\lambda = 10$), freeze the encoder, and fit four different downstream heads on the frozen representation: linear, 2-layer MLP, random forest, and SVM.
We then measure GDP for each head.

Figure~\ref{fig:transfer} shows the results on all five datasets; we highlight two representative findings.
On Crime ($\lambda = 100$), FRHSIC attains low GDP variance across heads (std $= 0.001$), comparable to or lower than Reg-GDP ($0.002$) and ADV ($0.005$).
On Adult ($\lambda = 10$), FRHSIC reduces cross-head GDP variance from $0.348$ (Unfair) to $0.244$, a $30\%$ reduction.
Notably, Reg-GDP achieves lower variance ($0.058$) on Adult but at the cost of collapsing accuracy to $0.758$ (Table~\ref{tab:real_data}).
Among methods that preserve accuracy, FRHSIC provides the most consistent fairness across downstream heads.
The MI results in Table~\ref{tab:real_data} provide complementary evidence: MI measures the total information about $S$ in $Z$, bounding the discrimination achievable by \emph{any} downstream head (including adversarial ones not tested here).
FRHSIC's low MI ($0.008$ on Adult, $0.015$ on Crime) thus guarantees fairness even for worst-case downstream use, while Reg-GDP's higher MI ($0.017$ on Adult, $0.056$ on Crime) leaves room for adversarial exploitation despite its GDP $= 0$ for the training head.

\begin{figure}[t]
    \centering
    \includegraphics[width=\textwidth]{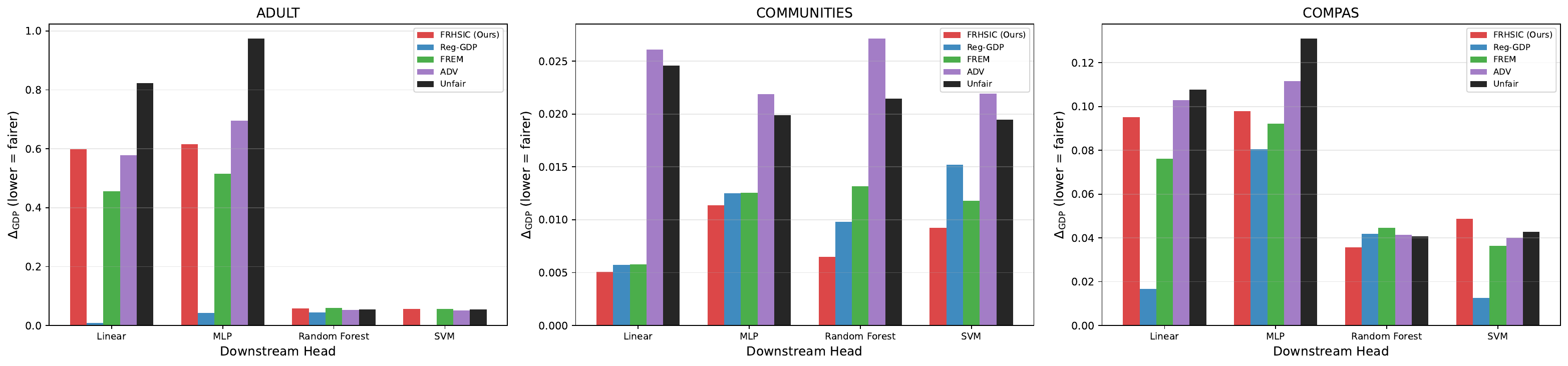}
    \caption{Transfer experiment: GDP of different downstream prediction heads trained on frozen representations. In these experiments FRHSIC maintains comparably low GDP across heads, while Reg-GDP and FREM exhibit more head-dependent variation.}
    \label{fig:transfer}
\end{figure}

\subsection{Empirical Tightness of the Theorem~\ref{thm:hsic_gdp} Bound}
\label{sec:bound_tightness}

We empirically evaluate the tightness of the bound in Theorem~\ref{thm:hsic_gdp} at full resolution ($m=r$, where $\hat\lambda_m=\hat\lambda_S$ is the smallest positive eigenvalue and $P_m\bm\delta_f=\bm\delta_f$) on two real datasets (Adult and Crime; we omit COMPAS because age in this dataset takes only a small number of distinct integer values, so the centered Gram matrix $\widetilde L$ on $S$ is near-low-rank and $\hat\lambda_S$ is numerically unstable). For each dataset and each regularization strength $\lambda$, we compute (i) the empirical RHS $\|f\|^2 / \hat\lambda_S \cdot \widehat{\hsic}(Z, S)$ on the test split, where $\hat\lambda_S$ is the smallest positive eigenvalue of $n^{-1}\widetilde L$, with $\widetilde L = H L H$ the centered Gram matrix on $S$ (eigenvalues below $10^{-6}\,\hat\lambda_{\max}$ are treated as numerical zero when identifying the smallest positive eigenvalue, mirroring standard practice for truncated/regularized kernel inverses); and (ii) the empirical LHS $\frac{1}{n}\sum_i \hat\delta_{f, i}^2$, averaged over RKHS test functions $f(\cdot) = k_\cz(z_0, \cdot)$ for randomly sampled anchors $z_0$. We plot both as a function of $\lambda$ on log--log axes.

Figure~\ref{fig:bound} shows the results. The bound is valid: the empirical RHS upper-bounds the LHS at every $\lambda$ on both datasets. The vertical gap between the two curves can span several orders of magnitude and reflects the well-known looseness of HSIC-based bounds when $\hat\lambda_S$ is small (the multiplier $1/\hat\lambda_S$ is large because $\hat\lambda_S$ scales as a small fraction of the largest eigenvalue of $n^{-1}\widetilde L$ for the Gaussian kernel at the median-heuristic bandwidth). Crucially, the empirical bound remains finite and non-vacuous --- in sharp contrast with the population-level version, where the analogous constant collapses to zero for Gaussian kernels on continuous $S$ and the bound becomes vacuous --- so the empirical reformulation of Theorem~\ref{thm:hsic_gdp} is quantitatively useful as a tractable surrogate. Choosing $m<r$ replaces $\hat\lambda_S$ by the larger $\hat\lambda_m$ and tightens the bound, at the cost of resolving only the top-$m$ sensitive directions.

\begin{figure}[ht]
    \centering
    \includegraphics[width=\textwidth]{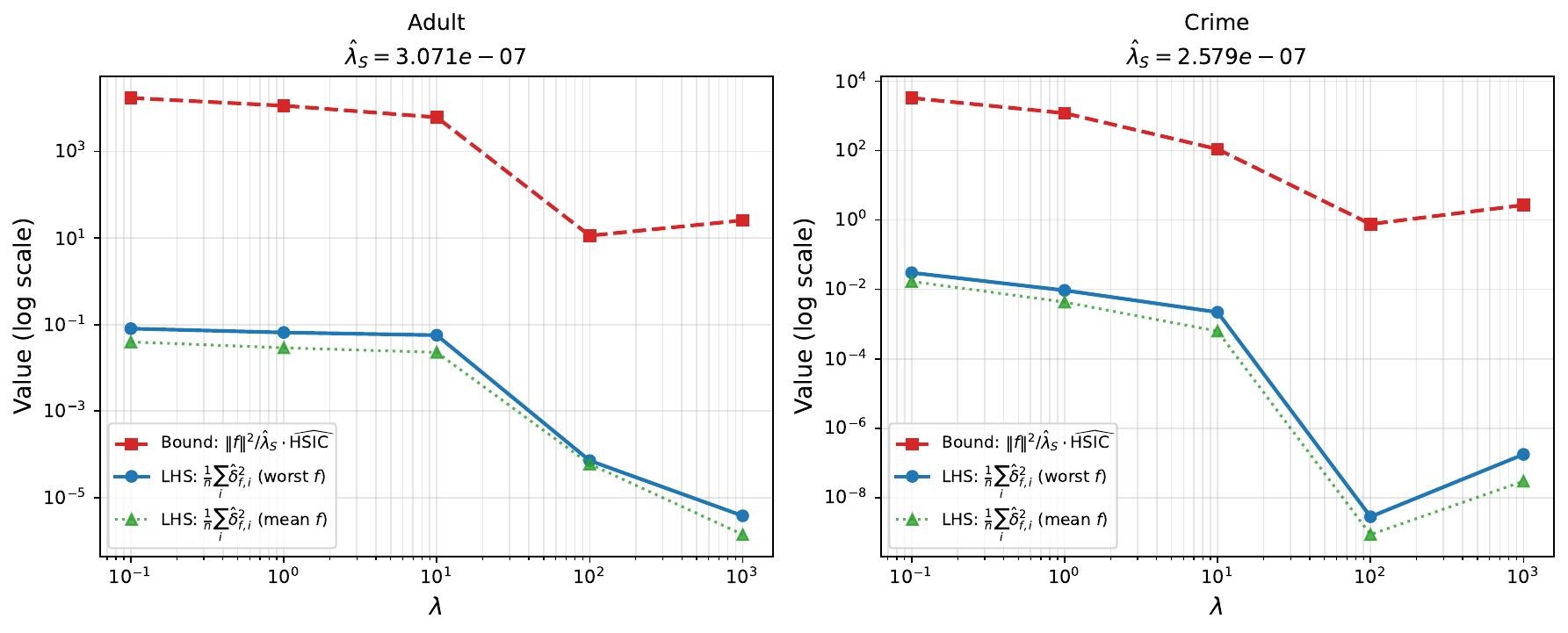}
    \caption{Empirical tightness of the bound from Theorem~\ref{thm:hsic_gdp}. Solid line: empirical LHS $\frac{1}{n}\sum_i \hat\delta_{f, i}^2$. Dashed line: empirical RHS $\|f\|^2 / \hat\lambda_S \cdot \widehat{\hsic}$. Validity: the RHS upper-bounds the LHS at every $\lambda$. The vertical gap can span several orders of magnitude and reflects the looseness of HSIC-based bounds when $\hat\lambda_S$ is small; the bound nonetheless remains finite and non-vacuous, in contrast to the population version, where the constant collapses to zero.}
    \label{fig:bound}
\end{figure}

\paragraph{Bandwidth scaling of $\hat\lambda_S$}
To validate the polynomial-scaling claim in Remark~\ref{rem:lambda2_scaling}, we additionally sweep the bandwidth $\sigma \in \{0.1, 0.3, 1.0, 3.0, 10.0\}$ and record $\hat\lambda_S$ for Adult and Crime at $n = 1500$ (Figure~\ref{fig:lambda2_sigma}). On log--log axes the curves are approximately linear, with negative slopes; the magnitudes are within a factor of two of $d_s = 1$, and the negative sign confirms that the centered Gram matrix on $S$ approaches rank-one as $\sigma$ grows. This empirical polynomial dependence is consistent with Remark~\ref{rem:lambda2_scaling}.

\begin{figure}[ht]
    \centering
    \includegraphics[width=0.7\textwidth]{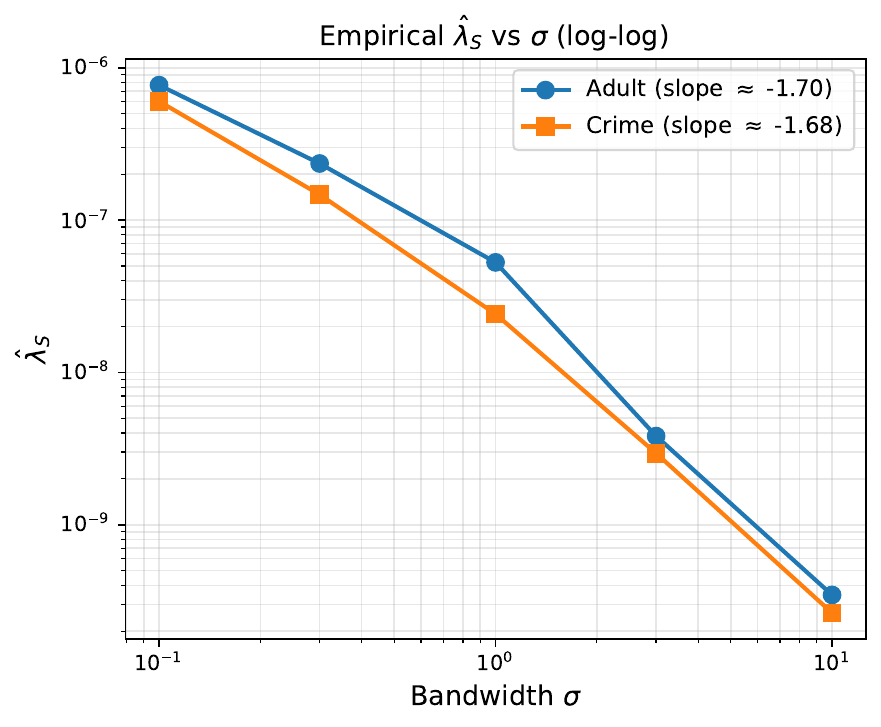}
    \caption{Empirical $\hat\lambda_S$ as a function of bandwidth $\sigma$ on Adult and Crime, $n = 1500$. Log--log axes; the approximately linear scaling (with empirical slopes shown in the legend) supports the polynomial bandwidth dependence noted in Remark~\ref{rem:lambda2_scaling}.}
    \label{fig:lambda2_sigma}
\end{figure}

\subsection{High-Dimensional Sensitive Attributes}
\label{sec:highdim_S}

A practical consequence of computing on the joint distribution rather than on conditional families is that FRHSIC tolerates increasing $\dim(\mathcal{S})$ gracefully, while methods built on the integral framework $\mathcal{I}_d$ rely internally on a kernel-smoothed conditional weighting $\hat w_\gamma(j; i)$ on $\mathcal{S}$ whose statistical accuracy is subject to the curse of dimensionality of nonparametric conditional density estimation.
We illustrate the per-iteration computational consequences on the Adult dataset by constructing $S$ as a $d$-dimensional vector from the continuous Adult features (age, capital gain, capital loss, hours per week, plus a synthetic Gaussian feature for $d = 5$), sweeping $d \in \{1, 2, 3, 4, 5\}$, and comparing the per-epoch training time of FRHSIC and FREM at fixed $\lambda = 10$ and fixed bandwidths ($\sigma_S = 1.0$ for FRHSIC, $\gamma = 0.5$ for FREM); see Figure~\ref{fig:highdim}.

\begin{figure}[ht]
    \centering
    \includegraphics[width=0.7\textwidth]{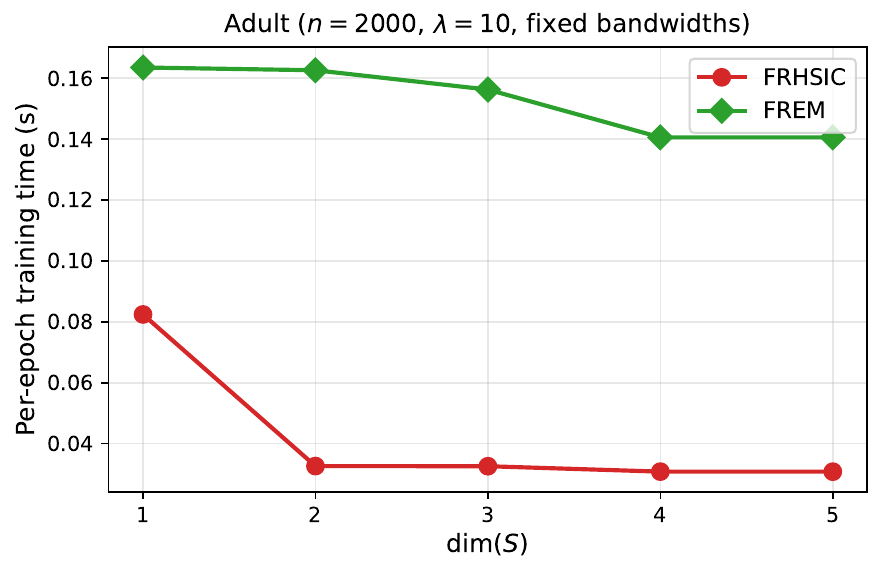}
    \caption{Per-epoch training time on Adult ($n = 2000$) as a function of $\dim(S)$, at fixed $\lambda = 10$ and fixed kernel bandwidths. FRHSIC's HSIC loss is essentially insensitive to $d$ (the dimension enters only through the $n \times n$ kernel evaluation on $S$). FREM is consistently $2$--$3\times$ slower per epoch than FRHSIC across all $d$; the slowdown is dominated by the kernel-smoothed conditional weighting $\hat w_\gamma(j; i)$, which evaluates a $d$-dimensional Gaussian kernel and an anchor-subsampled MMD on every batch.}
    \label{fig:highdim}
\end{figure}

In our wall-clock measurements both curves are roughly flat in $d$ over the range we test, with FREM exhibiting a uniform $\sim 2$--$3\times$ overhead at this scale; the gap grows to roughly $36\times$ at $n = 20{,}000$ (see Section~\ref{sec:runtime}), where FREM's $O(n^3)$ scaling dominates. At $n = 2000$ with an anchor-subsampled MMD ($32$ anchors), the per-batch cost is dominated by the $n \times n$ kernel matrix on $\mathcal{Z}$, which is $d$-independent.
The joint-distribution route is therefore primarily a \emph{statistical} (rather than computational) advantage at moderate $n$: nonparametric conditional weighting on $\mathcal{S}$ inherits the slow rates of $d$-dimensional density estimation, while the HSIC kernel-matrix computation is $d$-agnostic.
The constant gap we observe nonetheless reflects the per-iteration overhead of constructing conditional weights in $\mathcal{I}_d$-style estimators.

\subsection{Ablation Studies}

\paragraph{Kernel choice}
We evaluate FRHSIC with different kernel functions for $k_\mathcal{Z}$ and $k_\mathcal{S}$: Gaussian (RBF), Laplacian, and inverse multiquadric (IMQ).
Table~\ref{tab:ablation_kernel} shows that performance is robust across kernel choices, with the Gaussian kernel performing slightly better overall.
This is consistent with the theoretical requirement that the kernels be characteristic, which all three satisfy.

\begin{table}[t]
\centering
\caption{Ablation over kernel choice (synthetic data, $\lambda = 5$). All kernels achieve comparable performance.}
\label{tab:ablation_kernel}
\begin{tabular}{llcc}
\toprule
$k_\mathcal{Z}$ & $k_\mathcal{S}$ & Accuracy & GDP \\
\midrule
Gaussian & Gaussian & 0.620 & 0.0015 \\
Gaussian & Laplacian & 0.620 & 0.0003 \\
Gaussian & IMQ & 0.620 & 0.0006 \\
Laplacian & Gaussian & 0.619 & 0.0091 \\
Laplacian & Laplacian & 0.620 & 0.0001 \\
IMQ & Gaussian & 0.620 & 0.0002 \\
IMQ & IMQ & 0.620 & 0.0027 \\
\bottomrule
\end{tabular}
\end{table}

\paragraph{Representation dimensionality}
{\sloppy
We vary the representation dimension $d_Z \in \{2, 8, 32, 64\}$ with $\lambda = 5$ (Table~\ref{tab:ablation_dim}).
All dimensions achieve comparable accuracy ($\approx 0.62$), but GDP grows with dimensionality, from $0.0001$ at $d_Z = 2$ to $0.0034$ at $d_Z = 64$.
This confirms that higher-dimensional representations make the HSIC penalty less effective per unit of $\lambda$.
The default $d_Z = 8$ provides a good balance.\par}

\begin{table}[t]
\centering
\caption{Ablation over representation dimensionality (synthetic data, $\lambda = 5$).}
\label{tab:ablation_dim}
\begin{tabular}{ccc}
\toprule
$d_Z$ & Accuracy & GDP \\
\midrule
2  & 0.620 & 0.0001 \\
8  & 0.620 & 0.0001 \\
32 & 0.618 & 0.0025 \\
64 & 0.620 & 0.0034 \\
\bottomrule
\end{tabular}
\end{table}

\subsection{Validation-based regularization selection}
\label{sec:adaptive_lambda}
\label{sec:lambda_exp}

Selecting the regularization strength $\lambda$ is a practical challenge in fair representation learning. We use the HSIC independence test \citep{gretton2005measuring} as a validation-based stopping criterion. The HSIC test statistic under the null hypothesis $H_0: Z \perp S$ follows a weighted sum of chi-squared variables; we use the gamma approximation \citep{gretton2005measuring}, under which $n\cdot\widehat{\mathrm{HSIC}}$ is approximately gamma-distributed with parameters estimated from data.

\paragraph{Algorithm: HSIC-test-based $\lambda$ selection}
\begin{enumerate}
    \item \textbf{Input:} sample $\{(X_i, S_i, Y_i)\}_{i=1}^n$; kernels $k_\cz, k_\cs$; increasing $\lambda$ grid $\Lambda = \{\lambda^{(1)} < \cdots < \lambda^{(K)}\}$; significance level $\alpha_{\mathrm{test}}$ (e.g., $0.05$).
    \item \textbf{For} each $\lambda^{(j)} \in \Lambda$ in increasing order: train FRHSIC at $\lambda^{(j)}$ to obtain $\hat h_{\lambda^{(j)}}$; compute $\widehat{\mathrm{HSIC}}(\hat h_{\lambda^{(j)}}(X), S)$ on a held-out validation split; run the gamma-approximated HSIC independence test at level $\alpha_{\mathrm{test}}$ \citep{gretton2005measuring}; \textbf{if} the test fails to reject $H_0: Z \perp S$, \textbf{return} $\lambda^{(j)}$.
    \item \textbf{Output:} the smallest $\lambda^{(j)}$ at which the test fails to reject independence, or $\lambda^{(K)}$ if no such $\lambda$ is found.
\end{enumerate}
This is a practical validation heuristic based on a familiar independence test, not a principled selection rule. On large validation sets the test can be overpowered, detecting statistically significant but practically negligible dependence. The practical-significance variant stops at the smallest $\lambda^{(j)}$ such that $\widehat{\mathrm{HSIC}} < \epsilon$ or the test fails to reject at level $\alpha_{\mathrm{test}}$, where $\epsilon > 0$ is a user-specified threshold (e.g., $\epsilon = 10^{-3}$), keeping the procedure useful regardless of sample size.

We validate the procedure as follows.
For each dataset, we train FRHSIC over the grid
\[
    \lambda^{(j)} \in \{0.01,\, 0.1,\, 0.5,\, 1,\, 5,\, 10,\, 50,\, 100,\, 500\},
\]
evaluate $\widehat{\mathrm{HSIC}}$ on a held-out validation set ($15\%$ of data), and apply the gamma-approximation independence test at level $\alpha = 0.05$.

Figure~\ref{fig:lambda_selection} shows the results.
On Crime and COMPAS, the test successfully identifies a transition: as $\lambda$ increases, the HSIC test statistic falls below the rejection threshold, selecting $\lambda \approx 5$--$100$ depending on the dataset and random split.
At the selected $\lambda$, the representation achieves low GDP while preserving prediction performance.

On Adult ($n \approx 30{,}000$), the test rejects independence at all $\lambda$ values, including $\lambda = 500$ where GDP is near zero.
This reflects a well-known property of hypothesis testing: with large samples, the test detects statistically significant but \emph{practically negligible} dependence.
In such settings, we recommend supplementing the independence test with a practical significance threshold (e.g., requiring $\widehat{\mathrm{HSIC}} < \epsilon$ for a user-specified $\epsilon$) or using the test on a smaller held-out subsample to reduce power.

\begin{figure}[t]
    \centering
    \includegraphics[width=\textwidth]{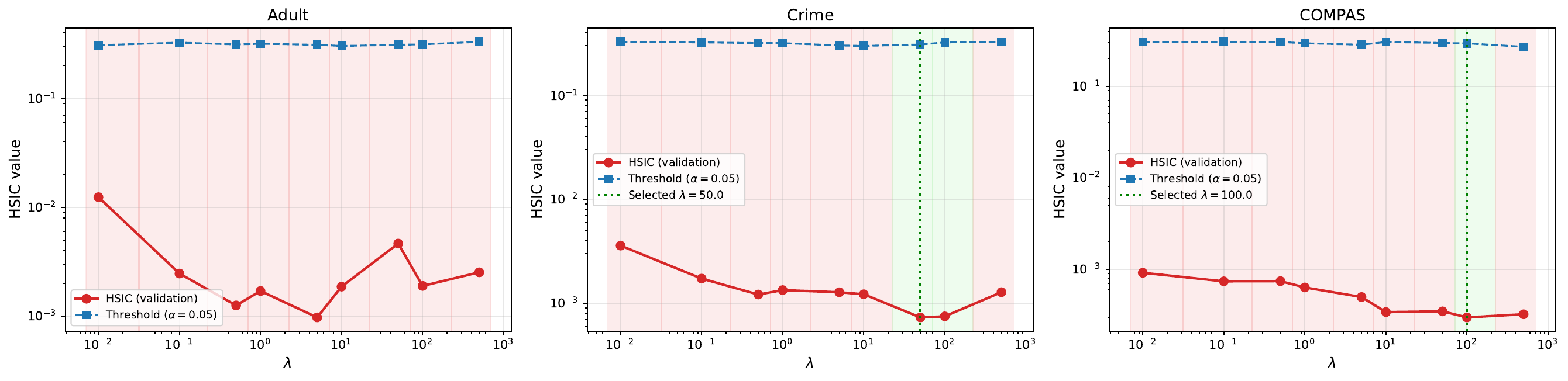}
    \caption{Regularization strength selection via the HSIC independence test (Appendix~\ref{sec:adaptive_lambda}). Red circles: empirical HSIC on the validation set. Blue squares: test threshold at $\alpha = 0.05$. Green shading: $\lambda$ values where the test fails to reject independence. The vertical dashed line marks the selected $\lambda$.}
    \label{fig:lambda_selection}
\end{figure}

\bibliographystyle{plainnat}
\bibliography{refs}

\end{document}